\theoremstyle{plain}
\newtheorem{theorem}{Theorem}%[section]
\newtheorem{lemma}{Lemma}
\newtheorem{corollary}{Corollary}
\newtheorem{remark}[theorem]{Remark}
\declaretheoremstyle[ 
        spaceabove=\topsep, 
        spacebelow=\topsep, 
        headfont=\normalfont\itshape,
        bodyfont=\normalfont,
        notefont=\normalfont\itshape,
        % notebraces={(}{)},
        notebraces={}{},
        postheadspace=0.33em, 
        qed=$\square$, 
        headpunct={.},
        % postfoothook=\noindent\ignorespaces
    ]{proofstyle}
\declaretheorem[style=proofstyle,numbered=no,name=Proof]{proof}
\newcommand{\logterm}{\iota}
\newcommand{\logtermval}{\log \frac{HSAK}{\delta}}
\newcommand{\calS}{\mathcal{S}}
\newcommand{\calA}{\mathcal{A}}
\newcommand{\calM}{\mathcal{M}}
\newcommand{\calP}{\mathcal{P}}
\newcommand{\calF}{\mathcal{F}}
\newcommand{\bbI}{\mathbb{I}}
\newcommand{\bbR}{\mathbb{R}}
\newcommand{\bbE}{\mathbb{E}}
\newcommand{\sinit}{s_{init}}
\newcommand{\ol}[1]{\overline{#1}}
\newcommand{\ul}[1]{\underline{#1}}
\newcommand{\occ}{\mu}
\newcommand{\aggQ}{U}
\newcommand{\vp}{W}
\renewcommand{\l}{\left}
\renewcommand{\r}{\right}
\DeclarePairedDelimiterX{\infdiv}[2]{(}{)}{%
  #1\;\delimsize\|\;#2%
}
\DeclareMathOperator*{\argmin}{arg\,min}
\newcommand{\brackets}[4][a]{%
  \ifx a#1\l#2 #3 \r#4\else
  \ifx s#1#2 #3 #4\else
  \ifx b#1\big#2 #3 \big#4\else
  \ifthenelse{\equal{#1}{B}} {\Big#2 #3 \Big#4} {}
  \ifthenelse{\equal{#1}{Bg}} {\Bigg#2 #3 \Bigg#4} {}
  \fi\fi\fi
}
\title{Near-optimal Regret Using Policy Optimization in Online MDPs with Aggregate Bandit Feedback}
\author{
Tal Lancewicki\footnote{Blavatnik School of Computer Science, Tel Aviv University; \texttt{lancewicki@mail.tau.ac.il}.}
\and
Yishay Mansour\footnote{Blavatnik School of Computer Science, Tel Aviv University and Google Research; \texttt{mansour.yishay@gmail.com}.}
}
\begin{document}
% \date{}
\maketitle

% \nnfootnote{1 }{ Blavatnik School of Computer Science, Tel Aviv University, Tel Aviv, Israel.}
% \nnfootnote{2 }{ Google Research, Tel Aviv.}

% \setcounter{footnote}{2}

\begin{abstract}
    We study online finite-horizon Markov Decision Processes with adversarially changing loss and aggregate bandit feedback (a.k.a full-bandit). Under this type of feedback,
    the agent observes only the total loss incurred over the entire trajectory, rather than the individual losses at each intermediate step within the trajectory. We introduce the first Policy Optimization algorithms for this setting.
    In the known-dynamics case, we achieve the first \textit{optimal} regret bound of $\tilde \Theta(H^2\sqrt{SAK})$, where $K$ is the number of episodes, $H$ is the episode horizon, $S$ is the number of states, and $A$ is the number of actions. 
    In the unknown dynamics case we establish regret bound of $\tilde O(H^3 S \sqrt{AK})$, significantly improving the best known result by a factor of $H^2 S^5 A^2$.
\end{abstract}

\section{Introduction}

The standard model of reinforcement learning (RL) assumes a
rich feedback loop, where for each step within the episode the agent observes the loss in that state as feedback. While ideal, this is often  not the case in real-world applications. For example, in multi-turn dialogues with an LLM, feedback is typically available only at the end of the entire dialogue, not for each intermediate response. Similarly, in robotic manipulation, feedback is often only available for the entire trajectory, indicating whether the robot successfully completed its task, rather than providing feedback at every step of the robot's movement.

To address this challenge, \citet{efroni2021reinforcement} have initiated the study of aggregate bandit feedback in a stochastic setting where losses are generated in an i.i.d. manner. More recently, \citet{cassel2024near} have extended this to linear MDPs. Most related to our work is that of \citet{cohen2021online}, who considered the setting we study here, where the losses are non-stochastic and may be chosen by an adversary. 
They introduce a variant of bandit linear optimization which they call Distorted Linear Bandits and provide a solution for it. Through the framework of occupancy measures, they reduce the problem of adversarial MDPs with aggregate bandit feedback to Distorted Linear Bandits. This approach allows them to obtain a regret of $\tilde O(H^5 S^6 A^{5/2} \sqrt{K})$, where $K$ is the number of episodes, $H$ is the horizon, $S$ is the number of states and $A$ is the number of actions. While the dependency on $K$ is optimal, the dependency on other parameters is far from optimal.

In this paper, we revise the setting considered in \citet{cohen2021online}, both in the known and unknown dynamics cases. We present algorithms based on the Policy Optimization framework \citep{cai2020provably,shani2020optimistic,luo2021policy,chen2022policy}, which has strong connections to many practical algorithms such as NGP \citep{kakade2001natural}, TRPO \citep{schulman2015trust}, and PPO  \citep{schulman2017proximal}. Our algorithms have a closed-form update and are more efficient than that of \citet{cohen2021online}, which requires solving a convex optimization problem in each iteration. We obtain the first optimal bound under known dynamics and significantly improve the regret bound of \citet{cohen2021online} in the unknown dynamics case.

% Policy Optimization (PO) is one of the most widely-used methods in Reinforcement Learning (RL). 
% It has demonstrated impressive empirical success \citep{levine2013guided,schulman2017proximal,haarnoja2018soft}, leading to increasing interest in understanding its theoretical guarantees.
% While in recent years we have seen great advancement in theory of PO \citep{shani2020optimistic,luo2021policy,chen2022policy}, our understanding is still very limited when considering \textit{aggregated bandit feedback} -- an important challenge that occurs in many practical applications. 

\paragraph{Summary of Contributions. \hspace{0em}}
The main contributions of the paper are as follows:
% \begin{itemize}[itemsep=0pt, parsep=2pt, topsep=2pt, partopsep=2pt]
\begin{itemize}
    \item We present the first Policy Optimization algorithms for Online MDPs with aggregate bandit feedback.
    \item Under known dynamics we are the first to establish the near-optimal regret bound of $\tilde \Theta(H^2 \sqrt{S A K})$.
    \item In the unknown dynamics case, we achieve a regret of $\tilde{O}(H^3 S \sqrt{AK})$. Surprisingly, this regret bound matches the best known regret for Policy Optimization with semi-bandit feedback \citep{luo2021policy}.\footnote{
        \citet{luo2021policy} presents a slightly different dependence on the horizon $H$. This is due to their assumption of a loop-free MDP, which effectively enlarges their state space by a factor of $H$ compared to our model — see \Cref{remark:loop-free}.
    } 
    \item We establish a new lower bound $\Omega(H^2 \sqrt{S A K})$ for online MDPs with aggregate bandit feedback. To the best of our knowledge, this is the first lower bound for this setting that is not directly implied from the semi-bandit case. 
\end{itemize}
A comparison of our results to previous works is summarized in \Cref{table: comparison}.

\paragraph{Overview of techniques.}
Much of our algorithms' design and analysis follows the seminal work of \citet{luo2021policy}. As most of the regret minimization literature, they built upon the fundamental value difference lemma \citep{even2009online} that breaks the total regret as a weighted sum of local regrets in each state separately with respect to the $Q$-function. Our central observation is that the regret can be decomposed in a similar manner but with respect to different quantities, which we call the $U$-values. These quantities are particularly natural in the context of aggregate bandit feedback. The $U$-function at a given state and action is the expected cost \textbf{on the entire trajectory} given that we visit this state-action pair. 
Our decomposition with respect to the $U$-function is especially useful in the setting of aggregate bandit feedback since the $U$-function can be easily estimated using only the accumulated trajectory loss. This is in contrast to $Q$-function estimation that typically uses individual losses or the cost-to-go from a state-action pair.

% Much of our algorithms' design and analysis follows the seminal work of \citet{luo2021policy}. As most of the regret minimization literature, they built upon the fundamental value difference lemma \citep{even2009online} that breaks the total regret as a weighted sum of local regrets in each state separately with respect to the $Q$-function as losses. Our central observation is that the regret can be decomposed in a similar manner but with respect to different quantities as losses, which we call the $U$-values. The $U$-function at a given state and action is the expected cost \textbf{on the entire trajectory} given that we visit this state-action pair. 
% This decomposition is especially useful in the setting of aggregate bandit feedback since the $U$-function can be easily estimated using only the accumulated trajectory loss. This is as opposed to estimation of the $Q$-function that typically uses individual losses or the cost-to-go from a state-action pair.

\subsection{Related work}
\paragraph{\bf Aggregate bandit feedback with stochastic i.i.d losses\hspace{-0.5em}} was first studied by \citet{efroni2021reinforcement} who obtained regret of $\tilde O(H^{3/2}S^2 A^{3/2} \sqrt{K})$ with an efficient algorithm. Their transition and loss functions are not horizon-dependent. Adapting their bound for horizon-dependent losses and transitions as we consider here would effectively inflate the number of states by a factor of $H$, resulting in a regret bound of $\tilde O(H^{7/2} S^2 A^{3/2} \sqrt{K})$. \citet{cassel2024near} introduced the first algorithm for Linear MDPs with stochastic losses and aggregate bandit feedback. Their algorithm attains regret of $\tilde O(\sqrt{d^5 H^7 K})$ where $d$ is the dimension of the feature map. In the special case of tabular MDPs, they show a regret of $\tilde O (H^{7/2} S^2 A^{3/2} \sqrt{K})$.

\paragraph{\bf Adversarial Linear bandits\hspace{-0.5em}} (see for example \citet{lattimore2020bandit}) is a variant of the classical Multi-armed Bandit problem where each action is associated with a vector in $\bbR^d$. The loss in each round is the inner product of the action with an unknown parameter vector chosen by an adversary. Through the concept of \textit{occupancy measures}, online MDPs with aggregate bandit feedback and known dynamics can be seen as a special case of Adversarial Linear bandits. In terms of regret bounds,
EXP2 \citep{dani2007price,cesa2012combinatorial} with a specific exploration distribution achieves the optimal bound of $\Theta(B \sqrt {dK \log N})$ for any finite set of $N$ actions in $\bbR^d$, where $B$ is a bound on the losses \citep{bubeck2012towards}. 
Using a discretization argument, this bound can be extended to $\tilde O(B d \sqrt{K})$ for any compact convex set. However, the EXP2 algorithm is not efficient in general.
The latter bound for general convex set is attainable with efficient algorithms (polynomial in $d$) under mild assumptions \citep{hazan2016volumetric}. When using occupancy measures to reduce online MDPs with aggregate bandit feedback to Linear bandits, the decision set is of dimension $d = HSA$, the bound of the loss in each round is $B=H$ and the number of deterministic policies is $N = A^{SH}$. This results in regret of $\tilde O(H^2 S \sqrt{A K})$ and $\tilde O(H^2 S A \sqrt{K})$ with inefficient and efficient algorithms, respectively. For the known dynamics case, we improve these bounds to optimal $\tilde \Theta(H^2  \sqrt{S A K})$ regret with an efficient and more natural algorithm.

As mentioned before, \citet{cohen2021online} have extended the linear bandit model to \textit{Distorted Bandit Online Linear Optimization} (DBOLO). They show that online MDPs with aggregate bandit feedback and \textit{unknown} dynamics can be reduced to DBOLO efficiently. Their algorithm is built upon the SCRIBLE algorithm \citep{abernethy2008competing} and guarantees regret of $\tilde O(H^5 S^6 A^{5/2} \sqrt{K})$. On the same setting, we improve their regret bound to $\tilde O(H^3 S \sqrt{A K})$.
% \\
% \\
\paragraph{\bf Regret minimization in MDPs with semi-bandit feedback\hspace{-0.5em}} is extensively studied in the literature, initiated with the seminal UCRL algorithm \citep{jaksch2010near} for stochastic losses. Their model was later extended to the more general Online (adversarial) MDPs where the loss functions are arbitrarily chosen by an adversary. Most algorithms for this model are based either on the framework of occupancy measures  \citep{zimin2013online,rosenberg2019bandit,rosenberg2019online,jin2019learning} or the Policy Optimization framework \citep{even2009online,shani2020optimistic,luo2021policy}. In the adversarial model with semi-bandit feedback and known dynamics, the optimal regret bound is $\tilde \Theta (H\sqrt{S A K})$ case and is attained by an occupancy-measure based algorithm \citep{zimin2013online}. With PO, the state-of-the-art regret under known dynamics is $\tilde O (H^{2}\sqrt{S A K})$. We achieve the same bound with aggregate bandit feedback which in this case is optimal. 
Under unknown dynamics, the best known bound is $\tilde O (H^{2} S\sqrt{A K})$ and is also attained by an occupancy-measure based algorithm \citep{jin2019learning}, while the best known lower bound is $\Omega(H^{3/2} \sqrt{S A K})$ \citep{jin2018q}. With policy optimization algorithm, the best known bound is $\tilde O (H^{3} S\sqrt{A K})$ \citep{luo2021policy}. Although we are in a setting with less informative feedback, we match the latter bound.

\begin{remark}
    \label{remark:loop-free}
    We note that some of the literature on semi-bandit feedback, such as \citet{jin2019learning, luo2021policy}, assumes \textit{loop-free} MDPs. Under this assumption the state space consists of $H$ disjoint sets $\calS = \calS_1 \cup \calS_2 \cup \dots \cup \calS_H$ such that in step $h$ the agent can only be found in states from the set $\calS_h$.
    Effectively, this means that their state space is larger than ours by a factor of $H$.
    So for example the regret bound $\tilde O(H^2 S \sqrt{A K})$ in \citet{luo2021policy} implies a bound of $\tilde O(H^3 S \sqrt{A K})$ in the transition model presented in this paper.
    We emphasize that these differences are rather artificial and not due to an actual difference in the regret.
\end{remark}

\paragraph{\bf Stochastic binary trajectory feedback\hspace{-0.5em}} was studied in \citet{chatterji2021theory}. In their model, the  rewards are drawn
from a logistic model that depends on features of the trajectory.

\paragraph{\bf Preference-based RL (PbRL)\hspace{-0.5em}} is a model where the feedback is given in terms of preferences over a trajectory pair instead of rewards. This is partially related to our model in motivation. A partial list of works on PbRL includes \citep{saha2023dueling,chen2022human,wu2023making}. For additional related work on the topic, see the above references.

\begingroup  % Start of the group for local settings
\renewcommand{\thefootnote}{\fnsymbol{footnote}}  % Change footnotes to symbols

\begin{table}
    \caption{Comparison of regret bounds for online MDPs with aggregate bandit feedback. 
    % In the 'Loss' column, 'adv.' denotes adversarial (non-stochastic) losses, while 'stoc.' denotes stochastic i.i.d losses. 
    The regret bounds presented in this table ignore logarithmic and low-order terms.}
    \begin{center}
        \begin{tabular}[c]{|l|c|c|l|}
            \hline
            Algorithm & Dynamics & Loss & \makecell{Regret} 
            \\ 
            \hline \hline
            \makecell[l]{Reduction to Efficient\\ Linear Bandits algorithm} & known & adversarial & $ \sqrt{H^{4} S^2 A^2 K}$
            \\
            \hline
            \makecell[l]{\Cref{alg:tabular-known-p main} \textbf{(ours)}} & known & adversarial & $ \sqrt{H^{4} S A K}$
            \\
            \hline
            \makecell[l]{Lower bound} & known & adversarial & $ \sqrt{H^{4} S A K}$
            \\
            \hline \hline 
            \makecell[l]{UCBVI-TS \\ \citep{efroni2021reinforcement}} & unknown & stochastic & $ \sqrt{H^{7} S^{4} A^{3} K}$ \footnotemark[3]
            \\
            \hline 
            \makecell[l]{REPO for tabular MDPs \\ \citep{cassel2024near}} & unknown & stochastic & $ \sqrt{H^{7} S^{4} A^{3} K}$
            \\
            \hline \hline
            \makecell[l]{Reduction to DBOLO \\ \citep{cohen2021online}} & unknown & adversarial & $ \sqrt{H^{10} S^{12} A^{5} K}$
            \\
            \hline
            \makecell[l]{\Cref{alg:tabular-unknown-p main} \textbf{(ours)} } & unknown & adversarial & $ \sqrt{H^{6} S^{2} A K}$
            \\
            \hline
            % \footnotemark[2]
        \end{tabular}
        \label{table: comparison}
    \end{center}

\end{table}
\footnotetext[3]{This regret bound is adapted to horizon-dependent transition and losses - see the related work section for more details.}

\endgroup

\section{Preliminaries}

A finite-horizon episodic adversarial MDP is defined by a tuple $\calM = (\calS , \calA , H ,\sinit, p, \{ \ell^{k} \}_{k=1}^K)$, where
$\calS$ and $\calA$ are state and action spaces of sizes $|\calS| = S$ and $|\calA| = A$, respectively, $H$ is the horizon, $\sinit$ is the initial state and $K$ is the number of episodes. 
$p: \calS \times \calA \times [H] \to \Delta_{\calS}$ is the \textit{transition function} such that $p_h(s' | s,a)$ is the probability to move to $s'$ when taking action $a$ in state $s$ at time $h$. 
$\{ \ell^{k} : \calS \times \calA \times [H] \to [0,1] \}_{k=1}^K$ are \textit{cost functions} chosen by an \textit{oblivious adversary}, where $\ell_h^k(s,a)$ is the cost for taking action $a$ at
$(s,h)$ in episode $k$.

A \textit{policy} $\pi: \calS \times [H] \to \Delta_{\calA}$ is a function that gives the probability $\pi_h(a | s)$ to take action $a$ when visiting state $s$ at time $h$.
The value $V^{\pi}_h(s ; \ell)$ is the expected cost of $\pi$ with respect to cost function $\ell$ starting from $s$ in time $h$, i.e.,
$V_{h}^{\pi}(s ; \ell) = \bbE \bigl[ \sum_{h'=h}^{H} \ell_{h'}(s_{h'},a_{h'}) | \pi , s_{h}=s \bigr]$,
where the expectation is with respect to policy $\pi$ and transition function $p$,
that is, $a_{h'} \sim \pi_{h'}(\cdot | s_{h'})$ and $s_{h'+1} \sim p_{h'}(\cdot | s_{h'},a_{h'})$.
The \textit{Q-function} is defined by  $Q_{h}^{\pi}(s,a ; \ell) = \bbE \bigl[ \sum_{h'=h}^{H} \ell_{h'}(s_{h'},a_{h'}) | \pi , s_{h}=s, a_{h}=a \bigr]$.
The occupancy measure $\occ^\pi_h(s,a) = \Pr[s_h=s,a_h=a | \pi,s_1=\sinit]$ is the distribution that policy $\pi$ induces over state-action pairs in step $h$, and we denote $\occ^\pi_h(s) = \sum_{a \in \calA} \occ^\pi_h(s,a)$.

\paragraph{Learning protocol and Feedback.}
The learner interacts with the environment for $K$ episodes.
At the beginning of episode $k$, it picks a policy $\pi^k$, and starts in an initial state $s^k_1 = \sinit$.
In each time $h\in [H]$, it observes the current state $s^k_h$, draws an action from the policy $a^k_h \sim \pi^k_h(\cdot | s_h^k)$ and transitions to the next state $s^k_{h+1} \sim p_h(\cdot | s^k_h,a^k_h)$. There are three types of loss feedback that are common in the literature:

% \begin{itemize}
\begin{itemize}
    \item In \textit{full-information} feedback, at the end of episode $k$ the agent observes the full cost function $\ell^k \in [0,1]^{H S A}$.
    \item Under \textit{bandit} feedback (a.k.a semi-bandit), the agent observes the loss function over the agent's trajectory, $\{\ell_h^k(s_h^k,a_h^k)\}_{h=1}^H$.
    \item Under \textit{aggregate bandit} feedback (a.k.a full-bandit), the agent observes only the entire episode loss, $L^k_{1:H} = \sum_{h=1}^H\ell_h^k(s_h^k,a_h^k)$. 
\end{itemize}

In all three settings, the trajectory $\{s_h^k, a_h^k\}_{h=1}^H$ is assumed to be fully observed.
In this work, we assume aggregate bandit feedback, which is the least informative among the three above.

The goal of the learner is to minimize the \textit{regret}, defined as the difference between the learner's cumulative expected cost and the best fixed policy in hindsight:
$$
    R_K
    =
    \sum_{k=1}^K V^{\pi^k}_1(\sinit;\ell^k) -  \sum_{k=1}^K V^{\pi^\star}_1(\sinit;\ell^k),
$$
where $\pi^\star = \argmin_{\pi} \sum_{k=1}^K V^{\pi}_1(\sinit;\ell^k)$ is the best fixed policy in hindsight.

\paragraph{Value difference lemma.} 
The analysis of policy optimization algorithms is often built upon a fundamental regret decomposition that follows the following lemma:

\begin{lemma}[Value Difference Lemma \citep{even2009online}]
\label{lemma: value diff}
    For any loss function $\ell$ and any policies $\pi$ and $\pi'$,
    \begin{align}
    \nonumber
        & V^{\pi}_1(\sinit; \ell) - V^{\pi'}_1(\sinit; \ell) 
        \\
        & \quad = \sum_{h,s} \occ_h^{\pi'} (s)  \l\langle \pi_h(\cdot \mid s) - \pi'_h(\cdot \mid s), Q_h^\pi(s,\cdot;\ell) \r\rangle.
        \label{eq: value diff}
    \end{align}
    where $\langle \cdot , \cdot \rangle$ is the inner product.
\end{lemma}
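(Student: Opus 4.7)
The plan is to prove this by a telescoping argument over the horizon. Define $\Delta_h(s) \eqq V^{\pi}_h(s;\ell) - V^{\pi'}_h(s;\ell)$, so the target quantity is $\Delta_1(\sinit)$. Using the identity $V^{\pi}_h(s;\ell) = \sum_a \pi_h(a \mid s) Q^{\pi}_h(s,a;\ell)$ (and likewise for $\pi'$), I would add and subtract the mixed term $\sum_a \pi'_h(a \mid s) Q^{\pi}_h(s,a;\ell)$ to obtain
\begin{align*}
\Delta_h(s) = \l\langle \pi_h(\cdot \mid s) - \pi'_h(\cdot \mid s),\, Q^{\pi}_h(s,\cdot;\ell) \r\rangle + \sum_a \pi'_h(a \mid s)\l(Q^{\pi}_h(s,a;\ell) - Q^{\pi'}_h(s,a;\ell)\r).
\end{align*}
The first summand is exactly the local regret at $(s,h)$ appearing in the target identity; the second will be propagated forward by one step.

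Since both $Q$-functions condition on the same $(s,a,h)$ and use the same loss $\ell$, the immediate-cost terms cancel in the difference, giving $Q^{\pi}_h(s,a;\ell) - Q^{\pi'}_h(s,a;\ell) = \sum_{s'} p_h(s' \mid s,a)\, \Delta_{h+1}(s')$. Taking expectation of the display above under $s \sim \occ^{\pi'}_h$ and invoking the forward recursion $\occ^{\pi'}_{h+1}(s') = \sum_{s,a} \occ^{\pi'}_h(s)\, \pi'_h(a \mid s)\, p_h(s' \mid s,a)$, the second summand aggregates to $\sum_{s'} \occ^{\pi'}_{h+1}(s')\, \Delta_{h+1}(s')$. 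This yields the one-step telescoping identity
\begin{align*}
\sum_s \occ^{\pi'}_h(s)\, \Delta_h(s) = \sum_s \occ^{\pi'}_h(s) \l\langle \pi_h(\cdot \mid s) - \pi'_h(\cdot \mid s),\, Q^{\pi}_h(s,\cdot;\ell)\r\rangle + \sum_{s'} \occ^{\pi'}_{h+1}(s')\, \Delta_{h+1}(s').
\end{align*}

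Unrolling from $h=1$ (where $\occ^{\pi'}_1$ is a point mass at $\sinit$, so the left-hand side equals $\Delta_1(\sinit) = V^{\pi}_1(\sinit;\ell) - V^{\pi'}_1(\sinit;\ell)$) down through $h=H$ (with base case $\Delta_{H+1} \equiv 0$ since no costs remain) yields the claimed formula. The proof is a short three-line manipulation, so I do not anticipate any serious obstacle; the only bookkeeping I want to be careful about is confirming that $\ell_h(s,a)$ cancels in $Q^{\pi}_h - Q^{\pi'}_h$ (it does, since both expectations condition on the same current $(s,a)$ and the loss is non-random given $(s,a,h)$) and that the push-forward of $\occ^{\pi'}_h$ through $\pi'_h$ and $p_h$ is exactly $\occ^{\pi'}_{h+1}$, so that the telescoping sum lines up without leftover boundary terms.
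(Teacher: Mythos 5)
Your proof is correct: the decomposition of $\Delta_h(s)$ via adding and subtracting $\sum_a \pi'_h(a \mid s) Q^\pi_h(s,a;\ell)$, the cancellation of the immediate loss in $Q^\pi_h - Q^{\pi'}_h$, and the push-forward identity $\occ^{\pi'}_{h+1}(s') = \sum_{s,a}\occ^{\pi'}_h(s)\pi'_h(a\mid s)p_h(s'\mid s,a)$ combine into a valid telescoping argument with the right boundary terms ($\occ^{\pi'}_1$ a point mass at $\sinit$ and $\Delta_{H+1}\equiv 0$). The paper itself gives no proof of this lemma, citing it from \citet{even2009online}; your argument is the standard derivation of that result and matches it in substance.
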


% \begin{lemma}[Value Difference Lemma \citep{even2009online}]
% \label{lemma:value-diff main}
%     \begin{align*}
%         R_K = \sum_{h,s} \occ_h^{\pi^\star} (s) \sum_{k=1}^K\langle\pi_h^k(\cdot \mid s) - \pi_h^\star(\cdot \mid s), Q_h^k(s,\cdot)\rangle,
%     \end{align*}
%     where $\occ_h^{\pi^\star} (s)$ is the probability to visit state $s$ at time $h$ under the policy $\pi^\star$.
% \end{lemma}

As a direct consequence we can break the regret as,
    \begin{align*}
        R_K = \sum_{h,s} \occ_h^{\pi^\star} (s) \sum_{k=1}^K\langle\pi_h^k(\cdot \mid s) - \pi_h^\star(\cdot \mid s), Q_h^k(s,\cdot)\rangle,
    \end{align*}
    where, for each $h$ and $s$, the internal sum over $k$ can be seen as regret of a Multi-armed bandit problem with respect to the loss vectors $Q_h^k(s,\cdot)$.

\textbf{Additional notations.}
Episode indices appear as superscripts and in-episode steps as subscripts. 
The notations $\tilde O(\cdot)$ and $\lesssim$ hide poly-logarithmic factors including $\log (K/\delta)$ for confidence parameter $\delta$.
$[n] = \{1,2,\dots,n\}$. The indicator of event $E$ is $\bbI\{E\}$ and we denote $\bbI_h^k(s,a) = \bbI\{s_h^k =s, a_h^k =s\}$.
% $\pi^\star$ denotes the best fixed policy in hindsight, and 
We use the notations $V^k_h(s),Q^k_h(s,a),\occ^k_h(s,a)$ when the policy and cost are $\pi^k$ and $\ell^k$, respectively. The expectation conditioned on the policy $\pi^k$ is denoted by $\bbE_k$. 

\section{The $U$-function}
Policy Optimization algorithms build upon the Value difference lemma, but since the $Q$-function is unknown (due to the bandit feedback), one would need to estimate it. The state-of-the-art PO algorithm for semi-bandit case \citep{luo2021policy} estimates the $Q$-function via importance sampling:

\begin{align*}
    \hat{Q}_h^\pi(s,a;\ell) = \frac{\bbI\{s_h = s,a_h = s\} L_{h:H}}{\occ_h^\pi(s,a)}
\end{align*}
where $L_{h:H} = \sum_{h'=h}^H \ell_{h'}(s_{h'},a_{h'})$ is the realized loss-to-go from time $h$. To be more precise, \citep{luo2021policy} add a small bias at the denominator to better control its variance and use an upper confidence bound of $\occ_h^\pi(s,a)$ whenever the dynamics is unknown. Indeed $\hat{Q}_h^\pi(s,a;\ell)$ is an unbiased estimate of $Q_h^\pi(s,a;\ell)$.
Note that with aggregate bandit feedback $L_{h:H}$ cannot be computed and it becomes  unclear how to directly estimate the $Q$-function. For this reason we introduce a new quantity which we call the $U$-function. While the $Q_h^\pi(s,a;\ell)$ is the expected cost-to-go from time $h$ given that we visit state $s$ perform action $a$ at that time; the $U$-function is the expected cost \textbf{on the entire trajectory} given that we visit $s$ and perform $a$ at time $h$. That is,

\begin{align*}
    U_h^\pi(s,a;\ell) \coloneqq \bbE\l[ \sum_{h'=1}^H \ell_{h'}(s_{h'},a_{h'}) \Big\vert \pi, s_h = s, a_h = a\r].
\end{align*}

The following lemma shows that the difference  $U_h^\pi(s,a) - Q_h^\pi(s,a)$ does not depend on $a$.

\begin{lemma}
    \label{lemma:U-Q}
    For any Markovian policy $\pi$, loss function $\ell$ and $(h,s,a) \in [H]\times \calA\times \calS$,
    \begin{align*}
        U_h^\pi(s,a;\ell) - Q_h^\pi(s,a;\ell) = \vp_h^\pi(s;\ell)
    \end{align*}
    where $\vp_h^\pi(s;\ell) \coloneqq \bbE[ \sum_{h'=1}^{h-1} \ell_{h'}(s_{h'},a_{h'}) \vert \pi, s_h = s]$.
\end{lemma}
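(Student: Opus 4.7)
The plan is to split the trajectory loss inside $U$ at time $h$ and use the Markov property of the policy to drop the conditioning on $a_h$ in the past portion.

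First I would write
\begin{align*}
    U_h^\pi(s,a;\ell)
    &= \bbE\l[\sum_{h'=1}^{h-1}\ell_{h'}(s_{h'},a_{h'}) \,\Big|\, \pi, s_h=s, a_h=a\r] \\
    &\quad + \bbE\l[\sum_{h'=h}^{H}\ell_{h'}(s_{h'},a_{h'}) \,\Big|\, \pi, s_h=s, a_h=a\r].
\end{align*}
The second term is $Q_h^\pi(s,a;\ell)$ by definition, so the identity reduces to showing that the first term equals $\vp_h^\pi(s;\ell)$, i.e.\ the conditioning on $a_h=a$ can be removed from the past.

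The key step is the conditional independence
\begin{align*}
    (s_1,a_1,\ldots,s_{h-1},a_{h-1}) \perp a_h \,\Big|\, s_h,
\end{align*}
which follows from the Markov property together with the fact that $\pi$ is Markovian: $a_h$ is drawn from $\pi_h(\cdot\mid s_h)$ and thus depends on the history only through $s_h$. Formally, one can verify this by writing the joint distribution of $(s_1,a_1,\ldots,s_{h-1},a_{h-1},s_h,a_h)$ as a product of the transition kernels and policy factors and observing that $\pi_h(a\mid s_h)$ separates multiplicatively. Given this conditional independence, any function of the past — in particular $\sum_{h'=1}^{h-1}\ell_{h'}(s_{h'},a_{h'})$ — has the same expectation under conditioning on $(s_h=s,a_h=a)$ as under conditioning on $s_h=s$ alone, which is exactly $\vp_h^\pi(s;\ell)$.

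I don't expect any real obstacle here; this is a one-line consequence of the Markov property once the sum is split at time $h$. The only care needed is to state the conditional independence cleanly, since it is what makes the right-hand side $\vp_h^\pi(s;\ell)$ independent of $a$.
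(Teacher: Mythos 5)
Your proposal is correct and follows essentially the same route as the paper: split the total loss at time $h$, identify the second part with $Q_h^\pi(s,a;\ell)$, and drop the conditioning on $a_h$ in the past portion via the Markov property of the dynamics together with the Markovian policy. The paper's proof is exactly this argument, stated slightly more tersely.
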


\begin{proof}
Due to the Markov property and the fact that $\pi$ is a Markovian policy, the trajectory up to time $h-1$ does not depend on the action taken at time $h$. Thus,
\begin{align*}
    U_h^\pi(s,a;\ell) - Q_h^\pi(s,a;\ell) 
    % \\
    & 
    = \bbE\l[ \sum_{h'=1}^{h-1} \ell_{h'}(s_{h'},a_{h'}) \Big\vert \pi, s_h = s, a_h = a\r]
    \\
    & = \bbE\l[ \sum_{h'=1}^{h-1} \ell_{h'}(s_{h'},a_{h'}) \Big\vert \pi, s_h = s\r] = \vp_h^\pi(s;\ell).
    \\ &\qedhere
\end{align*}
\end{proof}

As a corollary of \cref{lemma:U-Q} we obtain the Value Difference Lemma with respect to the $U$-function.

\begin{corollary}
\label{lemma: value diff U}
    For any loss function $\ell$ and any policies $\pi$ and $\pi'$,
    \begin{align*}
        V^{\pi}_1(\sinit; \ell) - V^{\pi'}_1(\sinit; \ell) 
        % \\
        % & \quad 
        = \sum_{h,s} \occ_h^{\pi'} (s)  \l\langle \pi_h(\cdot \mid s) - \pi'_h(\cdot \mid s), U_h^\pi(s,\cdot;\ell) \r\rangle . 
    \end{align*}
\end{corollary}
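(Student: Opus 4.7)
The plan is to deduce the corollary directly from the value difference lemma (\Cref{lemma: value diff}) by substituting $U$ for $Q$ and observing that the substitution changes nothing. First I would write down the equality from \Cref{lemma: value diff}, which expresses $V^{\pi}_1(\sinit;\ell) - V^{\pi'}_1(\sinit;\ell)$ as $\sum_{h,s} \occ_h^{\pi'}(s) \langle \pi_h(\cdot\mid s) - \pi'_h(\cdot\mid s), Q_h^\pi(s,\cdot;\ell)\rangle$.

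Next I would use \Cref{lemma:U-Q} to rewrite $Q_h^\pi(s,a;\ell) = U_h^\pi(s,a;\ell) - \vp_h^\pi(s;\ell)$ and split the inner product linearly into two terms. The first term is exactly the claimed $U$-based expression; the second term is
\[
-\sum_{h,s} \occ_h^{\pi'}(s)\,\vp_h^\pi(s;\ell)\,\Bigl\langle \pi_h(\cdot\mid s) - \pi'_h(\cdot\mid s),\, \mathbf{1}\Bigr\rangle,
\]
where I pulled the scalar $\vp_h^\pi(s;\ell)$ out of the inner product because, by \Cref{lemma:U-Q}, it does not depend on $a$.

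The key observation — the only real content of the proof — is that the remaining inner product vanishes: since both $\pi_h(\cdot\mid s)$ and $\pi'_h(\cdot\mid s)$ are probability distributions over $\calA$, we have $\sum_{a\in\calA}(\pi_h(a\mid s) - \pi'_h(a\mid s)) = 1 - 1 = 0$. Hence the correction term is identically zero and the two decompositions agree, yielding the claim.

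There is no real obstacle here; the argument is essentially a one-line algebraic manipulation on top of the existing value difference lemma. The only thing one must be careful about is that $\vp_h^\pi(s;\ell)$ is independent of $a$ (which is precisely the content of \Cref{lemma:U-Q}), so that it can be factored out of the inner product — this is what makes the use of $U$ in place of $Q$ legitimate in a policy-optimization regret decomposition.
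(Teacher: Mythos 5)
Your proposal is correct and follows essentially the same argument as the paper: apply \Cref{lemma: value diff}, use \Cref{lemma:U-Q} to note that $U_h^\pi(s,\cdot;\ell) - Q_h^\pi(s,\cdot;\ell)$ equals the $a$-independent scalar $\vp_h^\pi(s;\ell)$, and conclude that the correction term vanishes because $\sum_a (\pi_h(a\mid s) - \pi'_h(a\mid s)) = 0$.
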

\begin{proof}
    For each $h$ and $s$, $\sum_a\pi_h(a \mid s)= \sum_a\pi'_h(a \mid s) = 1$. 
    Thus,
    \begin{align*}
         \l\langle \pi_h(\cdot \mid s) - \pi'_h(\cdot \mid s), U_h^\pi(s,\cdot;\ell) - Q_h^\pi(s,\cdot;\ell) \r\rangle 
         % \\
         % &\quad 
         = \sum_{a\in\calA} (\pi_h(a \mid s) - \pi'_h(a \mid s))\vp_h^\pi(s;\ell) = 0
    \end{align*}
    Adding the above for each $h$ and $s$ in the sum on the right-hand side of \cref{eq: value diff} completes the proof.
\end{proof}
As in the $Q$-function case, a direct consequence is that we can break the regret as,
    \begin{align}
        R_K = \sum_{h,s} \occ_h^{\pi^\star} (s) \sum_{k=1}^K\langle\pi_h^k(\cdot \mid s) - \pi_h^\star(\cdot \mid s), U_h^k(s,\cdot)\rangle
        \label{eq: regret U}
    \end{align}
where similarly to other notations, we slightly abuse notation and write $U_h^k(s,a) = U_h^{\pi^k}(s,a;\ell^k)$.

In the context of aggregate bandit feedback, the $U$-function is much more useful as it can be estimated using the loss of the entire trajectory. In particular, the following is an unbiased estimator of ${U}_h^\pi(s,a;\ell)$:
\begin{align*}
    \hat{\aggQ}_h^\pi(s,a;\ell) = \frac{\bbI\{s_h = s,a_h = s\} L_{1:H}}{\occ_h^\pi(s,a)}
\end{align*}
where $L_{1:H} = \sum_{h=1}^H \ell_{h}(s_{h},a_{h})$ is the aggregated feedback of the trajectory. Later in the algorithms, we will use an optimistic variant of this estimator.

\section{Known Dynamics}

% \paragraph{Algorithm.}
Our algorithm is based on the regret decomposition in \Cref{eq: regret U}. Each internal sum can be seen as regret of a bandit problem with loss vectors $U_h^k(s,\cdot)$. Thus, we run a  Multiplicative Weight Update with respect to an estimate of the $U$-function that uses only the aggregated trajectory loss: $\pi^{k+1}_h(\cdot\mid s) \propto \exp(\eta(\hat U_h^k (s,\cdot) - B_h^k (s,\cdot)))$. Here $\eta$ is a learning rate, $\hat U^k$ is the estimate of the $U$-function and $B^k$ is some bonus function that we'll define later. The estimate of the $U$-function is defined by,

\begin{align}
    \hat\aggQ_{h}^{k}(s, a)  = \frac{\mathbb{I}_{h}^{k}(s, a)}{\occ_{h}^{k}(s, a) + \gamma} L^{k}_{1:H}.
    \label{eq:U estimate}
\end{align}
where $\gamma = \Theta (1 / \sqrt{K})$ used to reduce the variance of the estimator. We note that in the known dynamics case $\occ_{h}^{k}(s, a)$ can be easily computed using dynamic programming.

Note that the regret decomposition in \cref{eq: regret U} is averaged with respect to the state occupancy of $\pi^\star$, while the second moment of the estimator is roughly scaled inversely with the state occupancy of $\pi^k$. In order to control this distribution mismatch, we reduce from our  $U$-estimate a bonus term $B_h^k(s,a)$ similar to \citet{luo2021policy}.
$B_h^k(s,a)$ is essentially a $Q$-function with respect to a known loss function $b^k$. In the known dynamics case we set $b^k_h(s) = \sum_{a \in \calA} \frac{3 \gamma H \pi^k_h(a \mid s) }{\occ^k_h(s) \pi^k_h(a \mid s) + \gamma}$, where the reason for this particular choice will become clearer later in the analysis. $B_h^k(s,a)$ is computed using standard Bellman equations, which is simpler and more intuitive than the dilated version of \citet{luo2021policy}. 

% Whenever the dynamics is unknown, $\occ_{h}^{k}(s, a)$ in \Cref{eq:U estimate} and local bonus $b^k_h(s)$ is replaced by an upper confidence bound estimate of it. In addition, the local bonus also incorporates a term that scales with the confidence radius in of the occupancy measure. This aims to handle a distribution mismatch in the additional bias of the estimator under unknown dynamics.

\begin{algorithm}[t]
    \caption{Policy Optimization with Aggregated Bandit Feedback and Known Transition Function}  
    \label{alg:tabular-known-p main}
    \begin{algorithmic}
        \STATE \textbf{Input:} state space $\calS$, action space $\calA$, horizon $H$, learning rate $\eta > 0$, exploration parameter $\gamma > 0$, confidence parameter $\delta > 0$.
        
        \STATE \textbf{Initialization:} 
        Set $\pi_{h}^{1}(a \mid s) = {1}/{A}$ for every $(h,s,a) \in [H]\times \calS \times \calA$.
        
        \FOR{$k=1,2,\dots,K$}
            
            \STATE Play episode $k$ with policy $\pi^k$ and observe aggregated bandit feedback $L^k_{1:H} = \sum_{h=1}^H \ell_h^k(s^k_h,a^k_h) $.
            
            % \STATE {\color{gray} \# Policy Evaluation}
            
            \STATE 
            $
                \hat\aggQ_{h}^{k}(s, a)  = \frac{\mathbb{I}_{h}^{k}(s, a)}{\occ_{h}^{k}(s, a) + \gamma} L^{k}_{1:H}
            $

            \STATE {\color{gray} \# Bonus Computation}
            \STATE Set $B^k_{H+1}(s,a) = 0$ for every $(s,a) \in \calS \times \calA$.
            
            \FOR{$h=H,H-1,\dots,1$}
                
                \FOR{$(s,a) \in \calS \times \calA$}
                
                    \STATE $b^k_h(s) = \sum_{a \in \calA} \frac{3 \gamma H \pi^k_h(a \mid s) }{\occ^k_h(s) \pi^k_h(a \mid s) + \gamma}$
                    \STATE $B_{h}^{k}(s,a) = b_{h}^{k}(s) +  {\sum_{s' ,a' } p_h(s'\mid s,a) \pi_{h+1}^{k}(a' \mid s') B^k_{h+1}(s',a')}$
                    % \begin{align*}
                    %     & B_{h}^{k}(s,a) = b_{h}^{k}(s) 
                    %     \\
                    %     & \quad +  {\sum_{s' ,a' } p_h(s'\mid s,a) \pi_{h+1}^{k}(a' \mid s') B^k_{h+1}(s',a')}
                    % \end{align*}

                \ENDFOR
            \ENDFOR
            \STATE {\color{gray} \# Policy Improvement}
            \STATE For every $(s,a,h) \in \calS \times \calA \times [H]$:
            \[
                \pi^{k+1}_h(a \mid s) = \frac{\pi^k_h(a \mid s) e^{-\eta( \hat \aggQ_{h}^{k}(s,a) - B_h^k(s,a) )} } 
                {\sum_{a'} \pi^k_h(a' \mid s) e^{-\eta( \hat \aggQ_{h}^{k}(s,a') - B_h^k(s,a') ) }}.
            \]
        \ENDFOR
    \end{algorithmic}
\end{algorithm}

% \paragraph{Main result and analysis.}
\begin{theorem}
    \label{theorem:known}
    Under known dynamics,
    running \Cref{alg:tabular-known-p main} with $\eta=({H \sqrt{S A K} + H^2\sqrt{K}})^{-1}$ and $\gamma = 2\eta H$, guarantees with probability $1-\delta$,
    \begin{align*}
        R_K \leq \tilde O (H^{2}\sqrt{SAK} 
        + H^{3}\sqrt{K}).
    \end{align*}
\end{theorem}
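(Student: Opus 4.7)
Starting from the $U$-function regret decomposition of \Cref{lemma: value diff U},
\[
R_K = \sum_{h,s} \mu_h^{\pi^*}(s) \sum_{k=1}^K \langle \pi_h^k - \pi_h^*, U_h^k(s,\cdot) \rangle,
\]
my plan is to follow the policy-optimization template of \citet{luo2021policy} with the $Q$-function replaced by $U$. I would insert $\pm(\hat U_h^k - B_h^k)$ inside the inner product to split the regret into (i) a local MWU regret $\langle \pi^k - \pi^*, \hat U^k - B^k\rangle$, (ii) an estimator bias $\langle \pi^k - \pi^*, U^k - \hat U^k\rangle$, and (iii) a bonus remainder $\langle \pi^k - \pi^*, B^k\rangle$.

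Term (i) is handled by the standard Hedge regret bound, which applies because the update is exactly exponential weights with loss vector $\hat U^k - B^k$. The range condition $\eta|\hat U - B| \le 1$ follows from $\hat U_h^k \le H/\gamma = 1/(2\eta)$ and from $B_h^k \le 3H^2$, both of which use $\gamma = 2\eta H$ and the trivial bound $b_h^k(s) \le 3H$. Weighted by $\mu_h^*(s)$ and summed over $(h,s)$, this produces a leading $\frac{H \log A}{\eta}$ term, which becomes $\tilde O(H^2\sqrt{SAK} + H^3\sqrt K)$ once we plug in $\eta = (H\sqrt{SAK} + H^2\sqrt K)^{-1}$. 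The remaining second-order Hedge contribution is $\eta\sum_{k,a}\pi_h^k(a\mid s)(\hat U_h^k - B_h^k)^2(s,a)$.

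Taking conditional expectations, both this second-order term and the bias (ii) reduce to explicit functions of $\mu_h^k$, $\pi_h^k$, $\gamma$, and $H$. Using $\bbE_k[\hat U_h^k(s,a)] = \frac{\mu_h^k(s,a)}{\mu_h^k(s,a) + \gamma} U_h^k(s,a)$ and $\bbE_k[\hat U_h^k(s,a)^2] \le \frac{H^2 \mu_h^k(s,a)}{(\mu_h^k(s,a)+\gamma)^2}$, a direct calculation shows that each of these, per fixed $(h,s,k)$ and summed against $\pi_h^k(\cdot \mid s)$, is bounded by a constant fraction of $b_h^k(s)$ precisely because of the form $b_h^k(s) = \sum_a \frac{3\gamma H \pi_h^k(a\mid s)}{\mu_h^k(s)\pi_h^k(a\mid s) + \gamma}$ and the choice $\gamma = 2\eta H$. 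This is the reason for the particular formula defining $b_h^k$.

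For (iii), since $b_h^k$ depends only on $s$, the level-$h$ contribution of $b_h^k(s)$ vanishes in $\sum_a (\pi_h^k - \pi_h^*)(a\mid s) B_h^k(s,a)$, and a Bellman recursion leaves the transition-mismatch residual $\sum_{s'}(P_h^{\pi^k}(s'\mid s) - P_h^{\pi^*}(s'\mid s)) \sum_{a'} \pi^k_{h+1}(a'\mid s') B^k_{h+1}(s',a')$. The main obstacle I expect is the clean unwrapping of this recursion: one must show that, after summing (i)+(ii)+(iii) with the weight $\mu_h^*$, all $b_h^k$ terms cancel up to the claimed order despite the distribution mismatch between $\pi^k$ and $\pi^*$. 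A secondary challenge is converting the expected-regret estimates into a high-probability bound: the estimator has range $H/\gamma = \Theta(\sqrt K)$, so naive Hoeffding is inadequate, and I would apply Freedman's inequality to the martingale differences $\hat U_h^k - \bbE_k[\hat U_h^k]$ and $\bbI_h^k(s,a) - \mu_h^k(s,a)$, using the conditional second-moment bounds already established to keep the fluctuations within the claimed $\tilde O(H^2\sqrt{SAK} + H^3\sqrt K)$ rate.
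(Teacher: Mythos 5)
Your decomposition, the estimator facts, and the use of Freedman-type concentration all match the paper's proof, but you stop short at precisely the step that makes the whole scheme close: you flag the treatment of the bonus term (iii) and the cancellation of the $\sum_{k,h,s}\mu_h^{\star}(s)\,b_h^k(s)$ terms as ``the main obstacle'' without resolving it, and your plan of unwrapping the Bellman recursion level by level and arguing cancellation ``up to the claimed order'' is not what is needed. The paper's resolution is a one-liner: by construction $B^k$ is exactly the $Q$-function of $\pi^k$ for the known cost function $b^k$, so the ordinary value difference lemma (\Cref{lemma: value diff}) applied with loss $b^k$ turns the bonus term into the exact identity $\sum_{k,h,s}\mu_h^{k}(s)b_h^k(s)-\sum_{k,h,s}\mu_h^{\star}(s)b_h^k(s)$ (\Cref{lemma:bonus}). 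The negative part cancels the mismatch terms coming from the bias and from the $\hat U^2$ piece of the second-order Hedge term exactly (this is why those are arranged, via the $\tfrac13+\tfrac13+\tfrac13$ split, to total at most $\sum_{k,h,s}\mu_h^{\star}(s)b_h^k(s)$), while the positive part has no distribution mismatch and is bounded by $3\gamma H^2 SAK=O(H^2\sqrt{SAK})$ under the stated $\gamma$. Without this step your argument does not terminate, since the mismatch quantities $\mu_h^{\star}(s)\pi_h^k(a\mid s)/(\mu_h^{k}(s)\pi_h^k(a\mid s)+\gamma)$ cannot be bounded directly.

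Two smaller imprecisions: the portion of your bias term (ii) weighted by $\pi^\star$ (the paper's $\textsc{Bias}_2$) is not bounded by a fraction of $b_h^k$ summed against $\pi_h^k$; it is handled separately by optimism ($\bbE_k[\hat U_h^k]\le U_h^k$) plus a concentration bound yielding $\tilde O(H^2/\gamma)$. And in the second-order Hedge term only the $\hat U^2$ contribution is absorbed into $b_h^k$; the $(B_h^k)^2\le 9H^4$ contribution gives the separate $O(\eta H^5K)=O(H^3\sqrt K)$ term, which your accounting omits.
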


If $SA \geq H^2$ (which is typically the case since for any practical application $S \gg H$), the first term dominates and our bound is optimal up to logarithmic terms. A straightforward reduction of our problem to linear bandits that uses the efficient algorithm of \citet{hazan2016volumetric} guarantees expected regret of $\tilde O(H^2 S A \sqrt{K})$. Our algorithm improves that by a factor of $\sqrt{S A}$ and guarantees regret with high probability rather than only in expectation. In addition, our algorithm is more computationally efficient since it has a closed form update as opposed to the reduction that requires solving a convex optimization problem in each iteration.

% With unknown dynamics obtain the following guarantee. The formal algorithm and analysis are deferred to the supplementary material.

Before we outline the proof of \Cref{theorem:known}, let us first show that $\hat \aggQ_h^k(s,a)$ is a nearly unbiased estimator of $U_h^k(s,a)$.

\begin{lemma}
    \label{lemma:unbiased}
    Under \Cref{alg:tabular-known-p main}, for any $h,s,a$ and $k$,
    \begin{align*}
        \mathbb{E}_k \l[ \hat\aggQ_{h}^{k}(s,a)\r]
        =
        \frac{\occ_{h}^{k}(s,a)}{\occ_{h}^{k}(s,a) + \gamma} \aggQ_{h}^{k}(s,a)
    \end{align*}
\end{lemma}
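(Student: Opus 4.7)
The plan is a direct importance-weighting computation. Conditional on the past (i.e., under $\mathbb{E}_k$), the policy $\pi^k$ is fixed, the adversarial loss $\ell^k$ is deterministic (oblivious adversary), and the occupancy $\occ_h^k(s,a)$ is a deterministic quantity, so the only randomness in $\hat\aggQ_h^k(s,a)$ comes from the sampled trajectory. The first step is therefore to pull the deterministic denominator out of the expectation:
\[
\mathbb{E}_k\!\l[\hat\aggQ_h^k(s,a)\r] = \frac{1}{\occ_h^k(s,a) + \gamma}\,\mathbb{E}_k\!\l[\Ind_h^k(s,a)\, L_{1:H}^k\r].
\]

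Next, I would use the fact that the product $\Ind_h^k(s,a)\,L_{1:H}^k$ vanishes outside the event $\{s_h^k = s, a_h^k = a\}$, so the law of total expectation gives
\[
\mathbb{E}_k\!\l[\Ind_h^k(s,a)\, L_{1:H}^k\r] = \Pr_k[s_h^k=s,\,a_h^k=a]\cdot\mathbb{E}_k\!\l[L_{1:H}^k \,\big|\, s_h^k=s, a_h^k=a\r].
\]
By the definition of the occupancy measure the first factor is exactly $\occ_h^k(s,a)$, and by the definition of the $U$-function (with $L_{1:H}^k = \sum_{h'=1}^{H} \ell_{h'}^k(s_{h'}^k,a_{h'}^k)$) the conditional expectation is exactly $\aggQ_h^k(s,a)$. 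Combining these identities and dividing by $\occ_h^k(s,a)+\gamma$ yields the stated formula.

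There is no substantive obstacle here; the lemma is essentially a restatement of the standard importance-sampling identity applied to the full-trajectory loss, and is the reason $\gamma$ appears as an inverse-propensity bias. The only minor care point is the degenerate case $\occ_h^k(s,a)=0$, for which $\Ind_h^k(s,a)=0$ almost surely and both sides are zero, so the formula holds trivially (with any convention for the conditional expectation).
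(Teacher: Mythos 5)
Your proposal is correct and follows essentially the same route as the paper's proof: pull out the deterministic denominator, apply the law of total expectation over the visit event $\{s_h^k=s, a_h^k=a\}$, and identify the resulting factors as $\occ_h^k(s,a)$ and $U_h^k(s,a)$. The added remark on the degenerate case $\occ_h^k(s,a)=0$ is a fine (if unnecessary) extra care point; no issues.
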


\begin{proof}
    By definition $\Pr(\bbI_h^k(s,a) = 1 \mid \pi^k) = \occ_h^k(s,a)$. Using the law of total expectation and the fact that $\hat\aggQ_{h}^{k}(s, a) = 0$ whenever $\bbI_h^k(s,a) = 0$ we get,
    \begin{align*}
    \mathbb{E}_k \l[\hat\aggQ_{h}^{k}(s, a) \r] 
     &= \mathbb{E}_k \l[\hat\aggQ_{h}^{k}(s, a) \mid \bbI_h^k(s,a) = 1 \r] 
        \cdot \occ_h^k(s,a)
        % \\
        %   &\qquad\qquad  
          + \mathbb{E}_k \l[\hat\aggQ_{h}^{k}(s, a) \mid \bbI_h^k(s,a) = 0 \r] \cdot (1 -  \occ_h^k(s,a))
        \\
        & =
        \mathbb{E}_k \l[\frac{\sum_{h' = 1}^{H} \ell_{h'}^{k}(s_{h'}^{k}, a_{h'}^{k})}{\occ_{h}^{k}(s, a) + \gamma}  \mid \mathbb{I}_{h}^{k}(s, a) = 1 \r] \cdot \occ_h^k(s,a)
        \\
            &  = \frac{\occ_{h}^{k}(s, a)}{\occ_{h}^{k}(s, a) + \gamma} \mathbb{E}_k \l[\sum_{h' = h}^{H} \ell_{h'}^{k}(s_{h'}^{k}, a_{h'}^{k}) \,\Big\vert\, s_{h}^{k} = s, a_{h}^{k} = a \r]
            \\
                &  = \frac{\occ_{h}^{k}(s, a)}{\occ_{h}^{k}(s, a) + \gamma}U_{h}^{k}(s, a).
                \\
                & \qedhere
    \end{align*}
\end{proof}
Given our novel regret decomposition in \Cref{eq: regret U} and the lemma above, the rest of the analysis follows similar steps as those in \citep{luo2021policy}.
% We now turn to the proof sketch of \Cref{theorem:known}.

\begin{proof}[sketch of \Cref{theorem:known}]
    Using \Cref{eq: regret U}, we can break the regret of the algorithm as follows:
\begin{align}
    \nonumber
        & \underbrace{ \sum_{k,h, s}\occ_{h}^{\star}(s) \l\langle \pi_{h}^{k}(\cdot\mid s), \aggQ_{h}^{k}(s, \cdot) - \hat\aggQ_{h}^{k}(s, \cdot) \r\rangle }_{\textsc{Bias}_{1}}
        % \\
        % \nonumber
        % & 
        + \underbrace{\sum_{k,h, s}\occ_{h}^{\star}(s) \l\langle \pi_{h}^{\star}(\cdot\mid s), \hat\aggQ_{h}^{k}(s, \cdot) - \aggQ_{h}^{k}(s, \cdot) \r\rangle }_{\textsc{Bias}_{2}}
        \\
        % \nonumber
        &  + \underbrace{\sum_{k,h, s}\occ_{h}^{\star}(s) \l\langle \pi_{h}^{k}(\cdot\mid s) - \pi_{h}^{\star}(\cdot\mid s), \hat\aggQ_{h}^{k}(s, \cdot) - B_{h}^{k}(s, \cdot) \r\rangle }_{\textsc{Reg}}
        % \\
        % &
        + 
        \underbrace{\sum_{k,h, s}\occ_{h}^{\star}(s) \l\langle \pi_{h}^{k}(\cdot\mid s) - \pi_{h}^{\star}(\cdot\mid s), B_{h}^{k}(s, \cdot) \r\rangle }_{\textsc{Bonus}},
        \label{eq:regret decomposition}
\end{align}
From \Cref{lemma:unbiased} we immediately get that $\bbE_k[\textsc{Bias}_2] \leq 0$. Since $\|\hat\aggQ_{h}^{k}\|_\infty \leq H/\gamma$ we can also bound with high probability $\textsc{Bias}_2 \leq \tilde{O}(H^2 / \gamma)$ using standard concentration bounds (see \Cref{lemma:good bias2}). Again, using \Cref{lemma:unbiased} the expectation of $\textsc{Bias}_1$ equals the following, 
\begin{align*}
          \sum_{k, h, s, a}\occ_{h}^{\star}(s)\pi_{h}^{k}(a\mid s) \l(\aggQ_{h}^{k}(s, a) -  \mathbb{E}_{k} \l[\hat\aggQ_{h}^{k}(s, a) \r] \r)
        % \\
            &  = \sum_{k, h, s, a}\occ_{h}^{\star}(s)\pi_{h}^{k}(a\mid s)\aggQ_{h}^{k}(s, a) \l(1 - \frac{\occ_{h}^{k}(s, a)}{\occ_{h}^{k}(s, a) + \gamma} \r)
            % \\
            % & \leq H\sum_{k = 1}^{K} \sum_{h, s, a}\occ_{h}^{\star}(s)\pi_{h}^{k}(a\mid s) \l(1 - \frac{\occ_{h}^{k}(s, a)}{\occ_{h}^{k}(s, a) + \gamma} \r)
            \\
            & = \sum_{k = 1}^{K} \sum_{h, s, a}\occ_{h}^{\star}(s) 
            \aggQ_{h}^{k}(s, a) \frac{\gamma \pi_{h}^{k}(a\mid s)}{\occ_{h}^{k}(s, a) + \gamma}
            % \\
            % &  = \frac{1}{3}\sum_{k = 1}^{K} \sum_{h, s}\occ_{h}^{\star}(s)b_{h}^{k}(s).
\end{align*}
Bounding $U_h^k(s,a)$ by $H$ we get that $\bbE_k[\textsc{Bias}_1] \leq \frac{1}{3}\sum_{k = 1}^{K} \sum_{h, s}\occ_{h}^{\star}(s)b_{h}^{k}(s)$. Using a form of Freedman's inequality that takes into account the second moment of the estimator, we can also bound $\textsc{Bias}_1 - \bbE_k[\textsc{Bias}_1] \leq \frac{1}{3}\sum_{k = 1}^{K} \sum_{h, s}\occ_{h}^{\star}(s)b_{h}^{k}(s) + \tilde O(H^2 / \gamma)$. In total we get that,
\begin{align*}
    \textsc{Bias}_1 \leq \underbrace{\frac{2}{3}\sum_{k = 1}^{K} \sum_{h, s}\occ_{h}^{\star}(s)b_{h}^{k}(s)}_{(i)} + \tilde O(H^2 / \gamma)
\end{align*}
Term $(i)$ is challenging due to the distribution mismatch between $\occ_{h}^{\star}(s)$ and $\occ_{h}^{k}(s)$ in the denominator of $b_h^k(s)$. However, we will later see that the $\textsc{Bonus}$ term will cancel it out, essentially allowing us to replace $\occ_{h}^{\star}(s)$ with $\occ_{h}^{k}(s)$.

Before we turn to the $\textsc{Bonus}$ term, let's consider $\textsc{Reg}$. Using the standard entropy-regularized OMD guarantee (\cref{lemma:OMD}), $\textsc{Reg}$ can be bounded by,
    \begin{align*}
        &\frac{H\ln A}{\eta} + 2\eta\!\!\sum_{k,h, s, a}\occ_{h}^{\star}(s)\pi_{h}^{k}(a\mid s) \big(\hat\aggQ_{h}^{k}(s, a) - B_{h}^{k}(s, a) \big)^{2}
            \\
                &
                \leq \tilde O \bigg(\frac{H}{\eta}
                    +\eta H^{5}K\bigg) 
                        + 2\eta\!\! \sum_{k,h, s, a} \occ_{h}^{\star}(s)\pi_{h}^{k}(a\mid s)\hat\aggQ_{h}^{k}(s, a)^{2}
    \end{align*}
    In the inequality we've used the fact that $b_h^k(s) \leq 3 H$, and thus, $B_h^k(s,a) \leq 3H^2$ since it is a $Q$-function with respect to $b_h^k(s)$.
    Once again, the second sum here may not be bounded due to the mismatch between $\occ_{h}^{\star}(s)$ and $\occ_{h}^{k}(s)$ in the denominator of $\hat\aggQ_h^k(s,a)$. However, note that $\bbE_k[\hat\aggQ_h^k(s,a)^2] \leq H^2 / (\occ_h^k(s,a) + \gamma)$ and using standard concentration inequalities we can bound the the last term in the last display by,
    \begin{align*}
        & 2\eta H^{2}\sum_{k, h, s, a} \frac{\occ_{h}^{\star}(s)\pi_{h}^{k}(a\mid s)}{\occ_{h}^{k}(s, a) + \gamma} + \tilde{O} \l(\eta\frac{H^{3}}{\gamma^{2}} \r)
        % \\
        % & 
        = \underbrace{\frac{1}{3}\sum_{k, h, s}\occ_{h}^{\star}(s)b_{h}^{k}(s)}_{(ii)}
                        + \tilde{O} \l(\frac{H^{2}}{\gamma} \r)
        ,
    \end{align*}
    where we've set $\eta = \gamma / (2H)$ as in the statement of the theorem. Finally, recall that $B_h^k$ is the $Q$-function with respect to $b^k$ as losses. Applying the standard value difference lemma (\cref{lemma: value diff}) we have, 
    \begin{align*}
     \textsc{Bonus}
        &= \sum_k V^{\pi^{k}}_1(\sinit;b^k) - V^{\pi^{\star}}(\sinit;b^k )
        % \\
        %   &  
          = \sum_{k,h,s}\occ_{h}^{k}(s)b_{h}^{k}(s) - \sum_{k,h,s}\occ_{h}^{\star}(s)b_{h}^{k}(s).
    \end{align*}
    The negative  term exactly cancels out $(i)$ and $(ii)$ from $\textsc{Bias}_1$ and $\textsc{Reg}$. The positive term is bounded by,
    \begin{align*}
        3 \gamma H \sum_{k=1}^{K}\sum_{h,s,a}\frac{\occ_{h}^{k}(s) \pi^{k}_h(a \mid s) }{\occ_{h}^{k}(s)\pi^{k}_h(a \mid s) + \gamma}
        \leq
        3 \gamma H^2 SA K. 
    \end{align*}
    Summing all the terms and setting $\eta$ and $\gamma$ as in the statement of the theorem completes the proof.
\end{proof}

\section{Unknown Dynamics}
The adaptation of our algorithm to the unknown dynamics case is relatively straightforward. Since we don't know the transition function, we can't compute $\occ^k$ and can't perform the Bellman backup in the computation of $B^k$. As standard in the literature, we employ Bernstein-style confidence sets for the transition function. Specifically, let $\calP^k = \{ \calP^k_h(s,a) \}_{s,a,h}$ such that $p'_h(\cdot \mid s,a) \in \calP^k_h(s,a)$ if and only if $p'_h(\cdot \mid s,a) \in \Delta_\calS$ and for every $s' \in \calS$:
    \begin{align*}
        & |p'_h(s'\mid s,a) - \bar p^k_h(s'\mid s,a)|
        % \\
        % & 
        \le 4 \sqrt{\frac{\bar p^k_h(s'\mid s,a) \log \frac{10 H S A K}{\delta}}{n^k_h(s,a) \vee 1}} + \frac{10 
        \log \frac{10 H S A K}{\delta}}{n^k_h(s,a) \vee 1},
    \end{align*}
    where $n_h^k(s,a,s')$ is the number of times we visited $s,a$ at time $h$ and transitioned to $s'$, $n_h^k(s,a) = \sum_{s'}n_h^k(s,a,s')$ and $\bar p_h^k(s'\mid s,a) = n_h^k(s,a,s') / n_h^k(s,a)$ is the empirical transition probability.
    Based on $\calP^k$, we replace $\occ_h^k(s)$ in the definition of $\hat \aggQ_h^k(s,a)$ with an upper confidence bound $\ol \occ^k_h(s) :=\max_{p' \in \calP^k} \occ^{\pi^k,p'}_h(s)$ (see \Cref{alg:tabular-unknown-p}). Here, the notation $\occ^{\pi^k,p'}_h(s)$ represents the occupancy measure with respect to the transition $p'$ (instead of $p$). The local bonus $b_h^k(s)$ is adapted correspondingly and is exactly as in \citet{luo2021policy} - see definition in \Cref{alg:tabular-unknown-p main}. Finally, the bonus $B^k$ is estimated optimistically based on the transition $p' \in \calP^k$ that maximizes it, as described in the update computation of $\hat B^k$ in \Cref{alg:tabular-unknown-p main}.

\begin{algorithm}[t]
    \caption{Policy Optimization with Aggregated Bandit Feedback and Unknown Transition Function}  
    \label{alg:tabular-unknown-p main}
    \begin{algorithmic}
        \STATE \textbf{Input:} state space $\calS$, action space $\calA$, horizon $H$, learning rate $\eta > 0$, exploration parameter $\gamma > 0$, confidence parameter $\delta > 0$
        
        % \STATE \textbf{Initialization:} 
        % Set $\pi_{h}^{1}(a \mid s) = {1}/{A}$ for every $(h,s,a) 
        % \in [H]\times \calS \times \calA$

        \STATE \textbf{Initialization:} 
        Set $\pi_{h}^{1}(a \mid s) = {1}/{A}$ for every $(h,s,a)$
        
        \FOR{$k=1,2,\dots,K$}
            
            \STATE Play episode $k$ with policy $\pi^k$ and observe aggregated bandit feedback $L^k_{1:H} = \sum_{h=1}^H \ell_h^k(s^k_h,a^k_h) $
            
            % \STATE Compute empirical transition function $\bar p^{k}_h(s' \mid s,a) = \frac{n^k_h(s,a,s')}{\max \{ n^k_h(s,a), 1 \}}$

            % \STATE Compute confidence set $\calP^k = \{ \calP^k_h(s,a) \}_{s,a,h}$ such that $p'_h(\cdot \mid s,a) \in \calP^k_h(s,a)$ if and only if $\sum_{s'} p'_h(s'\mid s,a) = 1$ and for every $s' \in \calS$:
            % \begin{align*}
            %     & |p'_h(s'\mid s,a) - \bar p^k_h(s'\mid s,a)|
            %     \\
            %     & \le 4 \sqrt{\frac{\bar p^k_h(s'\mid s,a) \log \frac{10 H S A K}{\delta}}{n^k_h(s,a) \vee 1}} + \frac{10 
            %     \log \frac{10 H S A K}{\delta}}{n^k_h(s,a) \vee 1}
            % \end{align*}
            
            \STATE $\ol \occ^k_h(s) =\max_{p' \in \calP^k} \occ^{\pi^k,p'}_h(s)$ 
            \STATE 
            $\ul \occ^k_h(s) = \min_{p' \in \calP^k} \occ^{\pi^k,p'}_h(s)$
            
            % \STATE {\color{gray} \# Policy Evaluation}
            
            \STATE 
            $
                \hat\aggQ_{h}^{k}(s, a)  = \frac{L^{k}_{1:H}}{\ol \occ_{h}^{k}(s, a) + \gamma} \mathbb{I}_{h}^{k}(s, a)
            $
            \STATE {\color{gray} \# Bonus Computation}
            \STATE Set $\hat B^k_{H+1}(s,a) = 0$ for every $(s,a) \in \calS \times \calA$.
            
            \FOR{$h=H,H-1,\dots,1$}
                
                \FOR{$(s,a) \in \calS \times \calA$}
                
                    \STATE $\tilde{b}^k_h(s) = \sum_{a \in \calA} \frac{3 \gamma H \pi^k_h(a \mid s) }{\ol \occ^k_h(s) \pi^k_h(a \mid s) + \gamma}$
                    \STATE $\bar b_{h}^{k}(s) = \sum_{a\in\calA}\frac{H\pi_{h}^{k}(a \mid s)(\ol \occ^{k}_h(s,a) - \ul \occ^{k}_h(s,a) )}{\ol \occ_{h}^{k}(s)\pi^k_h(a\mid s) + \gamma}$
                    
                    \STATE $b_h^k(s) = \tilde b_h^k(s) + \bar b_h^{k}(s)$
                    \STATE 
                        $\hat B_{h}^{k}(s,a) =b_{h}^{k}(s)
                        + \max_{p' \in \calP^k_h(s,a)} \sum_{s' \in \calS,a' \in \calA} p'_h(s'\mid s,a) \pi_{h+1}^{k}(a' \mid s')
                        \cdot \hat B^k_{h+1}(s',a')    
                    $

                \ENDFOR
            \ENDFOR
            \STATE {\color{gray} \# Policy Improvement}
            \STATE For every $(s,a,h) \in \calS \times \calA \times [H]$ by:
            \[
                \pi^{k+1}_h(a \mid s) = \frac{\pi^k_h(a \mid s) e^{-\eta( \hat \aggQ_{h}^{k}(s,a) - \hat B_h^k(s,a) )} } 
                {\sum_{a'} \pi^k_h(a' \mid s) e^{-\eta( \hat \aggQ_{h}^{k}(s,a') -\hat  B_h^k(s,a') ) }}.
            \]
        \ENDFOR
    \end{algorithmic}
\end{algorithm}

The regret guarantee of our algorithm is established in the following theorem.

\begin{theorem}
\label{theorem:unknown}
    Under unknown dynamics,
    running \Cref{alg:tabular-unknown-p main} with
     $\eta=({H \sqrt{S A K} + H^2\sqrt{K}})^{-1}$ and $\gamma = 2\eta H$, guarantees with probability $1-\delta$,
    \begin{align*}
        R_K \leq  \tilde O (H^{3}S\sqrt{AK} 
                        + H^{4}S^{3}A).
    \end{align*}
\end{theorem}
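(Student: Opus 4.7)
The plan is to mirror the analysis of \Cref{theorem:known} while absorbing the extra error introduced by the estimated dynamics via the additional bonus $\bar b^k_h$ and the optimistic Bellman backup. The first step is to define a good event $\mathcal{E}$ on which (i) the true transition $p$ lies in every confidence set $\calP^k$; (ii) the standard Freedman/Azuma concentration bounds hold simultaneously for $\hat\aggQ^k$ and $\hat B^k$; and (iii) the usual Bernstein-type aggregation inequality $\sum_{k,h,s,a}\occ^k_h(s,a)\|\bar p^k_h(\cdot\mid s,a) - p_h(\cdot\mid s,a)\|_1 \lesssim \tilde O(\sqrt{H^2 S^2 A K} + H S^2 A)$ holds. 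Standard arguments give $\Pr(\mathcal{E})\geq 1-\delta$, and the entire analysis is carried out on $\mathcal{E}$.

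Starting from \Cref{eq: regret U}, I split the regret into the same four pieces $\textsc{Bias}_1, \textsc{Bias}_2, \textsc{Reg}, \textsc{Bonus}$ as in \Cref{eq:regret decomposition}, with $\hat B^k$ replacing $B^k$ and the denominator $\ol\occ^k_h(s,a)+\gamma$ replacing $\occ^k_h(s,a)+\gamma$. Because $\ol\occ^k_h(s,a)\geq\occ^k_h(s,a)$ on $\mathcal{E}$, the proof of \Cref{lemma:unbiased} adapts to show $\hat\aggQ^k_h(s,a)$ is still a negatively-biased estimator of $U^k_h(s,a)$, yielding $\bbE_k[\textsc{Bias}_2]\leq 0$ and, with Freedman on the $H/\gamma$-bounded martingale, $\textsc{Bias}_2\leq \tilde O(H^2/\gamma)$. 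For $\textsc{Bias}_1$ the same argument as in the known-dynamics proof gives $\textsc{Bias}_1 \leq \frac{2}{3}\sum_{k,h,s}\occ^\star_h(s)\tilde b^k_h(s) + \tilde O(H^2/\gamma)$, now with $\tilde b^k$ (rather than $b^k$) in the cancelling role, since $\tilde b^k$ was defined precisely to absorb the $\gamma\pi^k_h(a\mid s)/(\ol\occ^k_h(s,a)+\gamma)$ residual.

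For $\textsc{Reg}$ I would invoke the same entropy-regularized OMD lemma, using the fact that on $\mathcal{E}$ each optimistic Bellman backup gives $\hat B^k_h(s,a) = O(H^2)$ (the $\bar b^k$ piece is $O(H)$ per step because the confidence widths telescope once weighted by $\ol\occ^k\pi^k/(\ol\occ^k\pi^k+\gamma)\leq 1$). The second-moment bound on $\hat\aggQ^k$ contributes $\frac{1}{3}\sum_{k,h,s}\occ^\star_h(s)\tilde b^k_h(s) + \tilde O(H^2/\gamma)$, exactly as in the known case. The genuinely new step is the bound on $\textsc{Bonus}$. Here I would use the optimism built into $\hat B^k$ to show, on $\mathcal{E}$, the pointwise bound $\hat B^k_h(s,a)\geq Q^{\pi^k}_h(s,a;b^k)$, and then apply \Cref{lemma: value diff} under the \emph{true} dynamics to obtain $\textsc{Bonus}\leq \sum_{k,h,s}\occ^k_h(s)b^k_h(s) - \sum_{k,h,s}\occ^\star_h(s)b^k_h(s) + \textsc{TransErr}$, where $\textsc{TransErr}$ collects the per-step differences between the optimistic and true Bellman backups of $\hat B^k_{h+1}$.

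The negative term $-\sum_{k,h,s}\occ^\star_h(s)b^k_h(s) = -\sum_{k,h,s}\occ^\star_h(s)(\tilde b^k_h(s)+\bar b^k_h(s))$ is exactly what is needed: its $\tilde b^k$ part cancels the residuals from $\textsc{Bias}_1$ and $\textsc{Reg}$, while its $\bar b^k$ part is designed so that $\sum_{k,h,s}\occ^\star_h(s)\bar b^k_h(s)$ dominates $\textsc{TransErr}$ pointwise via an inequality of the form $|(p'-p)\cdot \hat B^k_{h+1}|\lesssim H(\ol\occ^k_h(s,a)-\ul\occ^k_h(s,a))$ reweighted by the importance ratio. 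What remains is the positive piece $\sum_{k,h,s}\occ^k_h(s)(\tilde b^k_h(s)+\bar b^k_h(s))$: the $\tilde b^k$ contribution is again $3\gamma H^2 SAK$, and the $\bar b^k$ contribution collapses to $H\sum_{k,h,s,a}(\ol\occ^k_h(s,a)-\ul\occ^k_h(s,a))$, which is bounded via the standard Bernstein pigeonhole decomposition by $\tilde O(H^3 S\sqrt{AK} + H^4 S^3 A)$. Collecting all terms and plugging in $\eta$ and $\gamma$ as stated yields the claimed bound. The delicate step, which I expect to absorb most of the work, is verifying that $\bar b^k$ with the precise definition in \Cref{alg:tabular-unknown-p main} indeed dominates $\textsc{TransErr}$ state-by-state after the importance-weighted regret decomposition; the weighting by $\pi^k_h(a\mid s)/(\ol\occ^k_h(s)\pi^k_h(a\mid s)+\gamma)$ is tuned exactly so the cancellation with the negative bonus proceeds uniformly, and rigorously justifying this — including showing that $\hat B^k$ stays $O(H^2)$ under the optimistic backup — is the technical heart of the proof.
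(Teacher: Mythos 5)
There is a genuine gap, and it sits exactly where you flagged the ``delicate step.'' First, your bound on $\textsc{Bias}_1$ is too strong: with the denominator $\ol\occ^k_h(s,a)+\gamma$, \Cref{lemma:unbiased} becomes $\bbE_k[\hat\aggQ^k_h(s,a)] = \frac{\occ^k_h(s,a)}{\ol\occ^k_h(s,a)+\gamma}\aggQ^k_h(s,a)$, so the per-term bias is $\aggQ^k_h(s,a)\,\frac{\gamma + \ol\occ^k_h(s,a)-\occ^k_h(s,a)}{\ol\occ^k_h(s,a)+\gamma}$. The $\gamma$ piece gives the $\tilde b^k$ residual as you say, but the $\ol\occ-\occ$ piece produces an additional term of order $\sum_{k,h,s}\occ^\star_h(s)\bar b^k_h(s)$ (after lower-bounding $\occ^k\ge\ul\occ^k$ on the good event). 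This is precisely what $\bar b^k$ is in the algorithm for; your claim that the known-dynamics argument goes through with only $\tilde b^k$ drops this term. Second, your treatment of $\textsc{Bonus}$ via ``$\hat B^k\ge Q^{\pi^k}(\cdot;b^k)$ plus the value difference lemma under the true dynamics plus $\textsc{TransErr}$'' creates a term $\sum_{k,h,s,a}\occ^\star_h(s)\pi^k_h(a\mid s)\bigl(\hat B^k_h(s,a)-Q^{\pi^k}_h(s,a;b^k)\bigr)$ that is weighted by $\occ^\star$ and aggregates \emph{future} per-step confidence widths along $\pi^k$ from $(s,a,h)$; the quantity $\bar b^k_h(s)$ is a \emph{local} occupancy-width quantity at $(s,h)$, and the claimed pointwise domination of $\textsc{TransErr}$ by $\sum_{k,h,s}\occ^\star_h(s)\bar b^k_h(s)$ is unjustified (it is the same distribution-mismatch problem the bonus is supposed to solve, not a consequence of how $\bar b^k$ is tuned). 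Moreover, even if some such bound held, your ledger does not balance: the cancellation from $\textsc{Bonus}$ supplies $-\sum_{k,h,s}\occ^\star_h(s)\bar b^k_h(s)$ only once, and you would need it both against the (omitted) $\bar b^k$ term in $\textsc{Bias}_1$ and against $\textsc{TransErr}$.

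The paper's route avoids $\textsc{TransErr}$ entirely. Because $\hat B^k$ is built with the \emph{maximizing} transition in $\calP^k$ and $p\in\calP^k$ on the good event, in the backward expansion of the $\pi^\star$-side one may replace $\hat p^k$ by the true $p$ (the substitution only increases the expression due to the minus sign), after which the sums telescope using $\sum_{s,a}\occ^\star_h(s)\pi^\star_h(a\mid s)p_h(s'\mid s,a)=\occ^\star_{h+1}(s')$, leaving $V_1^{\pi^k,\hat p^k}(\sinit;b^k)\le\sum_{k,h,s}\ol\occ^k_h(s)b^k_h(s)$ as the positive part and $-\sum_{k,h,s}\occ^\star_h(s)b^k_h(s)$ as the negative part (\cref{lemma:value diff bounus}). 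The negative part then cancels the full $b^k=\tilde b^k+\bar b^k$ contributions from $\textsc{Bias}_1$ and $\textsc{Reg}$, and the transition error enters only through $\sum_{k,h,s}\ol\occ^k_h(s)\bar b^k_h(s)\le H\sum_{k,h,s,a}|\ol\occ^k_h(s,a)-\ul\occ^k_h(s,a)|$, which is the standard estimation-error sum of \citet{jin2019learning} giving $\tilde O(H^3S\sqrt{AK}+H^4S^3A)$. Your other steps ($\textsc{Bias}_2$, the OMD bound for $\textsc{Reg}$ with $\hat B^k=O(H^2)$, and the final parameter choice) match the paper, but as written the proposal does not yield a valid proof without repairing the two points above.
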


The second term of the regret is typically of low order for sufficiently large $K$. The above regret improves upon \citet{cohen2021online} by a factor of $H^2 S^5 A^2$ and provides a high-probability regret bound instead of an expected one. \citet{cohen2021online} uses a reduction to distorted linear bandits and requires solving a convex optimization problem in each iteration. In contrast, our algorithm benefits from a more computationally efficient, closed-form update. Our bound also improves upon \citet{efroni2021reinforcement} and \citet{cassel2024near} for the case of tabular MDPs by a factor of $SA\sqrt{H}$ even though they consider stochastic i.i.d losses as opposed to adversarial losses in our setting. The gap from the lower bound is a factor $H \sqrt{S}$, but it is worth noting that even in the semi-bandit case, the best-known regret is $H^2 S \sqrt{A K}$ \citep{jin2019learning} which is achieved by a less efficient occupancy-measure-based algorithm. In fact, even though we consider less informative feedback, we match the state-of-the-art regret for PO with semi-bandit feedback \citep{luo2021policy}.

The proof of \Cref{theorem:unknown} follows similar steps as in the known dynamics case, but includes additional steps to control the transition estimation error. These  utilize standard techniques that we highlight in the following proof sketch.
\begin{proof}[sketch]
    We utilize the same regret decomposition as in \Cref{eq:regret decomposition} but replace $B^k$ with $\hat B^k$ in the $\texttt{Reg}$ and \texttt{Bonus} terms. The estimator $\hat U^k$ remains optimistic (i.e., in expectation it is smaller than $U^k$) since with high probability $\ol \occ_h^k(s) \geq \occ^k_h(s)$ for any $h, s,$ and $k$.  Thus, $\textsc{Bias}_2 \leq \tilde{O}(H^2 / \gamma)$ with high probability in a similar way to the known dynamics case. 
    In $\texttt{Bias}_1$, on the other hand, using $\ol \occ_h^k(s)$ instead of $\occ_h^k(s)$ introduces additional bias of 
    order of,
    \begin{align*}
        % \sum_{k = 1}^{K} 
                           & \sum_{k,h, s, a}\occ_{h}^{\star}(s)
                                \pi_{h}^{k}(a\mid s) H 
                                    \frac{\ol \occ_{h}^{k}(s, a) - \occ_{h}^{k}(s, a)}{\ol \occ_{h}^{k}(s, a) + \gamma}
                            % \\
                            % % 
                            % & \qquad
                            \leq
                                                        \sum_{k,h, s, a}\occ_{h}^{\star}(s)
                                \pi_{h}^{k}(a\mid s) H 
                                    \frac{\ol \occ_{h}^{k}(s, a) - \ul \occ_{h}^{k}(s, a)}{\ol \occ_{h}^{k}(s, a) + \gamma},
    \end{align*}
    where the inequality follows from the fact that $\ul \occ_{h}^{k}(s, a) \leq \occ_{h}^{k}(s, a)$ w.h.p. Note that this is exactly $\sum_{k,h, s, a}\occ_{h}^{\star}(s)
                            \bar{b}_{h}^{k}(s)$.
    This term is in addition to the terms we already had in $\texttt{Bias}_1$ in the known dynamics case. In total we have,
    \[
	\textsc{Bias}_{1} \leq \frac{2}{3} \sum_{k,h,s}
	                  \occ_{h}^{\star}(s) \tilde b_{h}^{k}(s)
	                  +  \sum_{k,h,s}\occ_{h}^{\star}(s) \bar b_{h}^{k}(s) + \tilde O\l(
	                  \frac{H^{2}}{\gamma}\r )                                              
    \]

\texttt{Reg} is bounded in a similar way to the known dynamics case. Given that the estimator is optimistic and exhibits lower variance when using the upper confidence bound on the occupancy measure, we get that \texttt{Reg} is bounded by,
\begin{align*}
    \frac{H\ln A}{\eta} 
                + 16\eta H^{5}K 
                    + \frac{1}{3}\sum_{k, h, s}\occ_{h}^{\star}(s) \tilde b_{h}^{k}(s) 
                        + \tilde O \l(\frac{H^{2}}{\gamma} \r).
\end{align*}
Finally, note $\hat B^k$ is not an exact $Q$-function since we don't know the actual transition function. Thus, we can not directly use the value difference lemma to show that $\texttt{Bonus} = \sum_{k,h,s}\occ_{h}^{k}(s)b_{h}^{k}(s) - \sum_{k,h,s}\occ_{h}^{\star}(s)b_{h}^{k}(s)$. However, using the fact that we update $\hat B$ with a transition function in the confidence set that maximizes it, we are able to show that w.h.p,
\begin{align*}
            \textsc{Bonus}  \leq  \sum_{k,h,s}\ol\occ_{h}^{k}(s)b_{h}^{k}(s)
                    - \sum_{k,h,s}\occ_{h}^{\star}(s)b_{h}^{k}(s).
\end{align*}
Once again, the negative term above cancels the terms that depend on $b^k$ in $\texttt{Bias}_1$ and $\texttt{Reg}$. Recall that $b_h^k(s) = \tilde b_h^k(s) + \bar b_h^k(s)$. Exactly as in the known dynamics case, $\sum_{k,h,s}\ol\occ_{h}^{k}(s) \tilde b_{h}^{k}(s) \leq O(\gamma H^2 S A K)$. The term $\sum_{k,h,s}\ol\occ_{h}^{k}(s) \bar b_{h}^{k}(s)$ equals to,
\begin{align*}
    &  H\sum_{k, h, s}\ol\occ_{h}^{k}(s, a)
                                    \frac{\ol \occ_{h}^{k}(s, a) - \ul \occ_{h}^{k}(s, a)}{\ol \occ_{h}^{k}(s, a) + \gamma}
                                % \\
                                % & \quad
                                \leq  H\sum_{k, h, s}
                                    |\ol \occ_{h}^{k}(s, a) - \ul \occ_{h}^{k}(s, a)|.
\end{align*}
The last is a standard transition estimation error and is bounded by $\tilde O ( H^3 S\sqrt{ A K } + H^3 S^{3} A)$ \citep{jin2019learning}. 
Summing all the terms and setting $\eta$ and $\gamma$ as in the statement of the theorem completes the proof.
\end{proof}

\section{Lower Bound}
Our lower bound uses a lower bound for the multi-task bandit problem (see for example \citet{lattimore2020bandit}). In the multitask bandit problem, the learner
faces simultaneously $H$ instances of the $A$-armed bandit problem. At each round $k\in[K]$, the learner selects $H$ actions, one for each bandit problem, and observes the sum of the losses associated with these $H$ actions. This scenario can be seen as analogous to MDPs with a single state (i.e., $S=1$) and aggregate bandit feedback. This is due to the fact that whenever we have only a single state no information is gained within the episode. (Recall that the losses are horizon-dependent, so we have $\ell_h(s_0,a)$, for each action $a$, time $h\in[H]$ and using the single state $s_0$.)

\begin{lemma}[Theorem 1 in \citet{cohen2017tight}]
    \label{lemma:lower bound multitask bandits}
    Assume that $A\geq 2$. Any learning algorithm for the multi-task bandit problem must incur at least $\tilde \Omega(H^2\sqrt{AK})$ expected regret in the worst case.
\end{lemma}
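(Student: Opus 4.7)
The plan is to prove the lower bound via a stochastic hard-instance construction combined with a classical information-theoretic (Le Cam / Assouad / Bretagnolle--Huber) argument. The central obstacle in obtaining the tight $H^2$ dependence — rather than the naive $H^{3/2}$ one produced by the simplest independent-Bernoulli construction — is that the aggregate feedback must be designed so that the signal-to-noise ratio in any single round scales as $\epsilon / H$ rather than $\epsilon / \sqrt H$. Concretely, I would use a hard instance in which the per-task losses are correlated across the $H$ tasks within each episode (for example, by coupling all losses in episode $k$ to a single shared randomization variable $Y^k$), so that the variance of the aggregate feedback $L^k = \sum_h \ell_h^k(a_h^k)$ is $\Theta(H^2)$ — the maximum compatible with per-step losses in $[0,1]$.

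Given such a construction, consider a family of instances indexed by $\vec{a}^{\,*} \in [A]^H$, where in each task $h$ the identity of the best arm $a_h^*$ is encoded by a gap $\epsilon > 0$ to be tuned later. Between two instances differing only in the best arm of a single task $h$, the mean of $L^k$ shifts by at most $\epsilon$ whenever the swapped arm is pulled in task $h$, while the variance stays $\Theta(H^2)$. A Gaussian approximation (or direct calculation) of the induced feedback distribution then yields a per-round KL divergence of at most $O(\epsilon^2/H^2)$, contributing only when the swapped arm is pulled in task $h$. This $H^2$-fold reduction in information content, compared to the $O(\epsilon^2)$ KL available under semi-bandit feedback, is exactly what drives the two extra powers of $H$ into the lower bound.

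From here I would run a standard Auer-style averaging argument separately over the $A$ hypotheses for each task. Bretagnolle--Huber combined with the KL bound forces the expected number of sub-optimal pulls in task $h$ to be $\Omega(AH^2/\epsilon^2)$. Summing over $H$ tasks and multiplying by the per-pull gap $\epsilon$ yields a regret lower bound of $\Omega(AH^3/\epsilon)$. Balancing this against the trivial upper bound $HK\epsilon$ incurred by uniform exploration — which forces $\epsilon \asymp H\sqrt{A/K}$ — produces the claimed $\tilde\Omega(H^2\sqrt{AK})$ bound, up to logarithmic factors.

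The main technical obstacle is designing and rigorously analyzing the correlated-loss construction, namely verifying simultaneously that (i) the aggregate variance genuinely scales as $\Theta(H^2)$, (ii) the per-task suboptimality gaps remain $\Theta(\epsilon)$ despite the induced correlations, and (iii) the per-round KL bound $O(\epsilon^2/H^2)$ can be proved cleanly (likely through a Gaussian surrogate plus a careful truncation or Berry--Esseen step). A secondary subtlety is the learner's adaptivity: the sample counts $N_h^k(a)$ in different tasks share a common history, so the KL decomposition across rounds must invoke the chain rule for KL divergence, and the aggregation over the $A^H$ hypothesis class should be organized via Assouad's lemma or an equivalent product-form argument to cleanly combine the per-task lower bounds into the final claim.
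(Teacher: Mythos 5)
You should first note that the paper does not prove \Cref{lemma:lower bound multitask bandits} at all: it is imported verbatim from \citet{cohen2017tight}, being the multi-task specialization of their $\tilde\Omega(m^{3/2}\sqrt{dT})$ lower bound for bandit combinatorial optimization (with $m=H$, $d=HA$, which indeed gives $H^{2}\sqrt{AK}$). So there is no in-paper argument to compare against; the relevant comparison is with the cited source, and your sketch follows essentially the same route as that source. The key idea — correlate the per-task losses within an episode so that the aggregate feedback has variance $\Theta(H^2)$ rather than $\Theta(H)$, driving the per-round KL down to $O(\epsilon^2/H^2)$ instead of the $O(\epsilon^2/H)$ obtained from independent Bernoulli losses — is exactly the mechanism behind the extra $\sqrt{H}$ over the naive $H^{3/2}\sqrt{AK}$ bound, and your bookkeeping (per-task exploration budget $\Omega(AH^2/\epsilon^2)$, balance at $\epsilon \asymp H\sqrt{A/K}$, total $H\cdot \epsilon K \asymp H^2\sqrt{AK}$) is consistent, modulo the usual regime condition $K \gtrsim H^2A$ needed so that $\epsilon \le O(1)$ with per-step losses in $[0,1]$.

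That said, what you present is a plan rather than a proof: the items you yourself defer — an explicit correlated construction for which (i) the gaps stay $\Theta(\epsilon)$, (ii) the per-round KL bound $O(\epsilon^2/H^2)$ holds exactly and not just heuristically from a mean-shift-over-variance count, and (iii) the adaptivity of the learner is handled via the KL chain rule — are precisely the technical substance of Theorem~1 in \citet{cohen2017tight}. In particular, a variance argument alone does not yield the KL bound (KL between two distributions with the same variance and means $\epsilon$ apart can be arbitrarily large), and the cited proof does not go through a Gaussian surrogate or Berry--Esseen step; it uses a discrete construction in which all coordinate losses are tied to a shared random bit with small per-coordinate perturbations encoding the optimal arms, so that the law of the observed aggregate is simple enough for the KL to be controlled directly. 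If your goal is to reprove the lemma rather than cite it, these are the steps that still have to be carried out; as a reconstruction of the cited argument's skeleton, your proposal is on target.
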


With that in hand, we obtain the following for online MDPs with aggregate bandit feedback.

\begin{theorem}
\label{thm:lower bound}
    Assume that $H,S,A\geq 2$ and $K \geq 2 S$. Any learning algorithm for the online MDPs with known dynamics and aggregate bandit feedback problem must incur at least $\Omega(H^2\sqrt{S A K})$ expected regret in the worst case.
\end{theorem}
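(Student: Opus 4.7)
The plan is to reduce from the multi-task bandit lower bound of \Cref{lemma:lower bound multitask bandits} by embedding $m = \Theta(S)$ independent multi-task bandit instances into a single MDP whose transitions force the learner to commit to exactly one instance per episode, and each instance gets roughly $K/m$ episodes in expectation.

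\textbf{Construction.} Let $m = S-1 \ge S/2$. Take state space $\calS = \{\sinit\} \cup \{s_1,\dots,s_m\}$. Define the transition function so that from $\sinit$ at step $h=1$, every action $a$ sends the agent uniformly at random to one of $s_1,\dots,s_m$, i.e.\ $p_1(s_i\mid \sinit,a)=1/m$; and for $h\ge 2$ and each $i$, $p_h(s_i\mid s_i,a)=1$ regardless of $a$. Thus the agent is ``trapped'' in some $s_i$ for steps $2,\dots,H$, independently of its actions, with $i$ drawn uniformly from $[m]$. Set $\ell_1^k(\sinit,a)\equiv 0$, and let the losses $\{\ell_h^k(s_i,a)\}_{h=2,\dots,H,\,a\in\calA}$ be chosen adversarially and independently across the $m$ sub-MDPs. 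Under this construction, an episode restricted to sub-MDP $i$ is exactly one round of an $A$-armed multi-task bandit problem with horizon $H-1$: the agent picks $H-1$ actions and observes their aggregated loss (the $h=1$ term contributes zero, and the states $s_i$ for $h\ge 2$ are determined).

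\textbf{Reduction and summation.} Fix any algorithm $\mathcal{A}$ for the MDP. Let $K_i$ be the (random) number of episodes in which the agent lands in $s_i$; note $K_i \sim \text{Bin}(K, 1/m)$ and the joint distribution of $(K_1,\dots,K_m)$ depends only on the transitions, not on the losses. Conditional on the sequence of sub-MDPs visited, $\mathcal{A}$ induces, for each $i$, a multi-task bandit algorithm $\mathcal{A}_i$ (obtained by marginalizing out the actions taken in the other sub-MDPs, which by independence of the losses carry no information about sub-MDP $i$). Because the total regret decomposes as $R_K = \sum_{i=1}^m R_K^{(i)}$, where $R_K^{(i)}$ is the regret restricted to episodes visiting $s_i$ against the best fixed sequence of $H-1$ actions in sub-MDP $i$, and the adversary is free to choose the worst-case loss distribution for each sub-MDP, it suffices to show $\mathbb{E}[R_K^{(i)}] = \tilde\Omega(H^2\sqrt{A K/m})$.

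\textbf{Handling the random horizon $K_i$.} This is the only step beyond a direct invocation of \Cref{lemma:lower bound multitask bandits}, which is stated for a deterministic number of rounds. The resolution is standard: since $K\ge 2S\ge 2m$, a Chernoff/Markov bound gives $\Pr[K_i \ge K/(2m)] \ge 1/2$, and conditioning on this event reduces to a multi-task bandit with at least $K/(2m)$ rounds. Alternatively, one can prove a version of the multi-task bandit lower bound that tolerates a random, data-independent stopping time, by running the same hypothesis-testing / KL-divergence argument underlying \Cref{lemma:lower bound multitask bandits} and taking expectations; the lower bound becomes $\tilde\Omega(H^2\sqrt{A \,\mathbb{E}[K_i]}) = \tilde\Omega(H^2\sqrt{AK/m})$. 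The main obstacle, and thus the part deserving the most care, is precisely this marriage of the multi-task bandit lower bound with a random number of rounds while guaranteeing that the adversary may fix losses in advance. Summing over $i$ yields $\mathbb{E}[R_K] \ge \sum_{i=1}^m \tilde\Omega(H^2\sqrt{AK/m}) = \tilde\Omega(H^2\sqrt{mAK}) = \Omega(H^2\sqrt{SAK})$, completing the proof.
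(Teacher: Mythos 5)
Your proposal is correct and matches the paper's proof in all essentials: the same construction (a zero-loss initial step that transitions uniformly to absorbing states, each encoding a hard $(H-1)$-task multi-task bandit instance from \Cref{lemma:lower bound multitask bandits}), the same per-state regret decomposition, and the same handling of the random visit counts via a Markov/Chernoff argument on the binomial $\mathrm{Bin}(K,1/S)$ variable, which is exactly what the paper packages as its auxiliary lemma showing $\bbE[\sqrt{X}] \ge \Omega(\sqrt{K/S})$ when $K \ge 2S$.
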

The above lower bound shows that our regret upper bound for the known dynamics case is tight up to poly-logarithmic factors whenever $\sqrt{SA} \geq H$. 
For unknown dynamics, clearly the same lower bound holds, and we have a multiplicative gap of $H\sqrt{S}$.
We note that determining the exact optimal bound for the unknown dynamics case remains an open problem, both with aggregate bandit feedback as well as with the well-studied semi-bandit feedback.

Due to space limitations, the proof is deferred to \Cref{sec: appendix lower bound}. At a high level, the proof is constructed as follows: Consider an MDP where in the first step of each episode, the agent transitions to each state with an equal probability of $1/S$, regardless of the chosen action. From the second step onward, the agent remains in the same state for the remainder of the episode, wherein a hard multi-task bandit problem is encoded in each state. Roughly speaking, each state is visited approximately $K / S$ times. Using \Cref{lemma:lower bound multitask bandits}, the regret from visits to each state is approximately $H^2 \sqrt{A K / S}$. Summing the regret across all states gives a lower bound of $H^2 \sqrt{S A K}$.

\section{Discussion and Future Work}
In this paper, we introduced the concept of $U$-functions, which allows us to establish the first regret bounds for online MDPs with aggregate bandit feedback using PO algorithms. One of the advantages of the PO framework is its natural extension to function approximation \citep{luo2021policy, sherman2023improved, dai2023refined, liu2023towards}. It would be interesting to see whether the $U$-function concept could be useful in achieving regret bounds with aggregate bandit feedback for environments with infinitely many states under a function approximation assumption. 
We note that the main challenge in extending our results to linear MDPs, for example, is that the $U$-function is not linear under this assumption. 
Still, it is possible that in such settings, the $U$-function would have certain properties that would allow achieving sub-linear regret.

% For further discussion on the potential implications of our work to linear MDPs see \Cref{appendix:linear discussion}.
\newpage
\section*{Acknowledgments}
This project has received funding from the European Research Council (ERC) under the European Union’s Horizon 2020 research and innovation program (grant agreement No. 882396), by the Israel Science Foundation, the Yandex Initiative for Machine Learning at Tel Aviv University and a grant from the Tel Aviv University Center for AI and Data Science (TAD).

% We note that if the $U$-function is assumed to be linear, then the results of \citet{dai2023refined} can be extended to aggregate bandit feedback. However, while a linear $Q$ function is implied by meaningful conditions such as linear MDPs, these conditions are insufficient to guarantee a linear $U$ function. In fact, we have not found a non-trivial assumption that would imply a linear $U$. Therefore, we have decided not to include such results, as they would not be meaningful at this point.

\newpage

\bibliography{main}
\bibliographystyle{abbrvnat}

%%%%%%%%%%%%%%%%%%%%%%%%%%%%%%%%%%%%%%%%%%%%%%%%%%%%%%%%%%%%%%%%%%%%%%%%%%%%%%%
%%%%%%%%%%%%%%%%%%%%%%%%%%%%%%%%%%%%%%%%%%%%%%%%%%%%%%%%%%%%%%%%%%%%%%%%%%%%%%%
% APPENDIX
%%%%%%%%%%%%%%%%%%%%%%%%%%%%%%%%%%%%%%%%%%%%%%%%%%%%%%%%%%%%%%%%%%%%%%%%%%%%%%%
%%%%%%%%%%%%%%%%%%%%%%%%%%%%%%%%%%%%%%%%%%%%%%%%%%%%%%%%%%%%%%%%%%%%%%%%%%%%%%%
\newpage
\appendix
\onecolumn

\section{Summery of notations}
For convenience, the table below summarizes most of the notation used throughout the paper.
\begin{table}[ht]
    \centering
    \begin{tabular}{c|l}
    $H$ & Horizon length of each episode \\
    $K$ & Total number of episodes \\
    $S$ & Number of states in the MDP \\
    $A$ & Number of actions in the MDP \\
    $p$ & Transition function of the MDP \\
    $\ell^k$ & Loss function in episode $k$\\
    $R_K$ & Cumulative regret over $K$ episodes \\
    $V_h^\pi(s;\ell)$ & Value function at state $s$ and time $h$ under policy $\pi$ and loss function $\ell$ \\
    $V^k_h(s)$ & Value under policy $\pi^k$ and loss function $\ell^k$. I.e., $V^k_h(s) = V_h^{\pi^k}(s;\ell^k)$ \\
    $Q_h^\pi(s,a;\ell)$ & $Q$-function at state $s$, action $a$, and time $h$ under policy $\pi$ and loss function $\ell$ \\
    $Q^k_h(s,a)$ & $Q$-function under policy \(\pi^k\) and loss function \(\ell^k\). I.e., \(Q^k_h(s,a) = Q_h^{\pi^k}(s,a;\ell^k)\) \\
    $\aggQ_h^\pi(s,a;\ell)$ & \makecell[l]{Expected total loss of the episode, conditioned on taking action $a$ in state $s$ at time $h$\\ under policy $\pi$} \\
    $\aggQ^k_h(s,a)$ & Conditional expected total loss with respect to $\pi^k$ and $\ell^k$. I.e., $\aggQ^k_h(s,a) = \aggQ_h^{\pi^k}(s,a;\ell^k)$ \\
    $\vp_{h}^\pi(s;\ell)$ & \makecell[l]{Expected cumulative loss up to time $h-1$, conditioned on reaching state $s$ at time $h$ \\under policy $\pi$} \\
    $\vp_{h}^{k}(s)$ & Conditional expected loss with respect to $\pi^k$ and $\ell^k$. I.e., $\vp_{h}^{k}(s) = \vp_{h}^{\pi^k}(s;\ell^k)$ \\
    $\occ_h^\pi(s,a)$ & \makecell[l]{Occupancy measure: probability of being in state $s$ and taking action $a$ at time $h$ \\ under policy $\pi$} \\
    $\occ_h^k(s,a)$ & Occupancy measure under $\pi^k$: $\occ_h^k(s,a) =\occ_h^{\pi^k}(s,a) $ \\
    $\occ^{\pi,p'}_h(s)$ & Occupancy measure with respect to a transition function $p'$\\
    $\bbE_k[\cdot]$ & Expectation condition on the policy $\pi^k$: $\bbE_k[\cdot] = \bbE[\cdot \mid \pi^k]$\\
    $\bbI_h^k(s,a)$ & The indicator for visiting $s$ and taking $a$ at time $h$ in episode $k$: $\bbI_h^k(s,a) = \bbI\{s_h^k = s,a_h^k = a\}$ \\
    $\hat \aggQ^k_h(s,a)$ & Estimator of $\aggQ^k_h(s,a)$ - See \Cref{alg:tabular-known-p main,alg:tabular-unknown-p main} \\
    $b_h^k(s)$ & Intermediate bonus function - See \Cref{alg:tabular-known-p main,alg:tabular-unknown-p main}\\
    $B_h^k(s,a)$ & Bonus function, defined as the $Q$-function with respect to $b^k$ - See \Cref{alg:tabular-known-p main}\\
    $\hat B_h^k(s,a)$ & Estimator of $B_h^k(s,a)$ - See \Cref{alg:tabular-unknown-p main}\\
    $\calP^k$ & Confidence set of transition functions at episode $k$\\
    $\ol{\occ}^k_h(s,a)$ & Upper confidence bound on the occupancy measure - See \Cref{alg:tabular-unknown-p main}\\
    $\ul{\occ}^k_h(s,a)$ & Lower confidence bound on the occupancy measure - See \Cref{alg:tabular-unknown-p main}\\
    $\logterm$ & A logarithmic factor of $\logtermval$
    \end{tabular}

    \label{tab:notation}
\end{table}

% Let us denote the expected loss at episode $k$ in the first $h-1$ steps given that we visit state $s$ at time $h$ by $\vp_{h}^{k}(s)$. Similarly, denote by $S^k_h(s,a)$ the expected loss at episode $k$ given that we visit state $s$ and $a$ at time $h$. Formally,
% \begin{align*}
%     \vp_{h}^{k}(s): = \mathbb{E} \l[\sum_{h' = 1}^{h - 1}\ell_{h'}^{k}(s_{h'}^{k}, a_{h'}^{k})\mid\pi^{k}, s_{h}^{k} = s \r];
%     \qquad \aggQ^k_h(s,a) = \vp_{h}^{k}(s) + Q_h^k(s,a)
%     .
% \end{align*}
\newpage
\section{Known Dynamics}

\begin{theorem}[Restatement of \Cref{theorem:known}]
    Under known dynamics,
    running \Cref{alg:tabular-known-p main} with $\eta=\frac{\sqrt{\logterm}}{H \sqrt{S A K} + H^2\sqrt{K}}$ and $\gamma = 2\eta H$ for $\logterm = \logtermval$, guarantees with probability $1-\delta$,
    \begin{align*}
        R_K \lesssim H^{2}\sqrt{SAK\logterm}+H^{3}\sqrt{K\logterm}.
    \end{align*}
\end{theorem}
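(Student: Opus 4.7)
The plan is to build the proof around the $U$-function regret decomposition from Corollary \ref{lemma: value diff U} applied at the comparator $\pi^\star$, namely
\begin{align*}
R_K = \sum_{k,h,s} \occ_h^\star(s)\bigl\langle \pi_h^k(\cdot\mid s) - \pi_h^\star(\cdot\mid s), U_h^k(s,\cdot)\bigr\rangle,
\end{align*}
and then to insert the estimator $\hat U_h^k$ together with the bonus $B_h^k$ to split the sum into four pieces: a bias from replacing $U$ with $\hat U$ on the learner side ($\textsc{Bias}_1$), a bias on the comparator side ($\textsc{Bias}_2$), the OMD regret on the shifted losses $\hat U^k - B^k$ ($\textsc{Reg}$), and a $\textsc{Bonus}$ term featuring only $B^k$. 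This is the decomposition in \eqref{eq:regret decomposition}.

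First I would dispatch $\textsc{Bias}_2$: by Lemma \ref{lemma:unbiased} its conditional expectation is non-positive, and since $\|\hat U_h^k\|_\infty \le H/\gamma$, a Freedman-type concentration gives $\textsc{Bias}_2 \lesssim H^2/\gamma$ with high probability. Next, for $\textsc{Bias}_1$ I would apply Lemma \ref{lemma:unbiased} to obtain a sum of the form $\sum_{k,h,s,a}\occ_h^\star(s)\pi_h^k(a\mid s) U_h^k(s,a)\frac{\gamma}{\occ_h^k(s,a)+\gamma}$; bounding $U_h^k \le H$ collapses this, up to the factor $\tfrac13$, into $\sum_{k,h,s}\occ_h^\star(s)b_h^k(s)$, which is precisely why the local bonus $b_h^k(s)$ is defined that way. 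A matching concentration step (again Freedman with the second moment $\bbE_k[\hat U_h^k(s,a)^2]\le H^2/(\occ_h^k(s,a)+\gamma)$) produces the deviation term of order $H^2/\gamma$.

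For $\textsc{Reg}$ I would invoke the standard entropy-regularized OMD guarantee per $(h,s)$ with a separate multiplicative weights update, yielding $\frac{H\ln A}{\eta}$ plus $\eta$ times the second moment of $\hat U^k - B^k$. Since $b_h^k \le 3H$ implies $B_h^k \le 3H^2$, the $B^k$ contribution is $\eta H^5 K$; the $(\hat U^k)^2$ piece is handled by concentration exactly as above, producing another $\tfrac13 \sum_{k,h,s}\occ_h^\star(s)b_h^k(s)$ (after setting $\eta=\gamma/(2H)$) plus lower-order $H^2/\gamma$ terms. Finally, since $B^k$ is literally the $Q$-function for losses $b^k$ under the true dynamics, the classical Value Difference Lemma \ref{lemma: value diff} rewrites $\textsc{Bonus}$ as $\sum_k V_1^{\pi^k}(\sinit;b^k) - V_1^{\pi^\star}(\sinit;b^k) = \sum_{k,h,s}\occ_h^k(s)b_h^k(s) - \sum_{k,h,s}\occ_h^\star(s)b_h^k(s)$. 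The negative part exactly cancels the two $\tfrac13$ contributions coming from $\textsc{Bias}_1$ and $\textsc{Reg}$, and the positive part is bounded by $3\gamma H \sum_{k,h,s,a}\frac{\occ_h^k(s)\pi_h^k(a\mid s)}{\occ_h^k(s)\pi_h^k(a\mid s)+\gamma} \le 3\gamma H^2 S A K$. Combining the remaining contributions $\frac{H\ln A}{\eta} + H^2/\gamma + \gamma H^2 SAK + \eta H^5 K$ and optimizing by choosing $\gamma=2\eta H$ and $\eta$ as in the statement yields the claimed bound.

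The step I expect to be the main obstacle is the cancellation between $\textsc{Bonus}$ and the mismatch terms: both $\textsc{Bias}_1$ and the variance of $\hat U$ produce expressions weighted by $\occ_h^\star(s)$ and scaling inversely with $\occ_h^k(s)$, so there is no a priori way to bound them without invoking exploration. The cancellation only works because $b_h^k$ is engineered so that the $Q$-value $B_h^k$ carries exactly the right mass under $\occ^\pi$ for any $\pi$, which is what allows the value-difference rewriting to subtract $\occ^\star$-weighted mass and add $\occ^k$-weighted mass, converting the dangerous distribution mismatch into a benign $\gamma H^2 SAK$ term. Getting the constants in $b_h^k$ (the factor $3\gamma H$) and the learning-rate coupling $\gamma = 2\eta H$ to match, while simultaneously keeping the high-probability Freedman deviations under control, is the delicate accounting at the heart of the proof.
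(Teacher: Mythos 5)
Your proposal is correct and follows essentially the same route as the paper: the $U$-function decomposition into $\textsc{Bias}_1$, $\textsc{Bias}_2$, $\textsc{Reg}$, and $\textsc{Bonus}$, with Lemma \ref{lemma:unbiased} plus Freedman-type concentration for the bias terms, the entropy-regularized OMD bound for $\textsc{Reg}$, and the value-difference rewriting of $\textsc{Bonus}$ whose negative part cancels the $b^k$-weighted mismatch terms while the positive part gives $3\gamma H^2 SAK$. The final tuning $\gamma = 2\eta H$ and the choice of $\eta$ also match the paper's argument.
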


\begin{proof}
Using \Cref{eq: regret U}, we can break the regret of the algorithm as,
\begin{align*}
    R_{K} &  = \sum_{k = 1}^{K} \sum_{h, s}\occ_{h}^{\star}(s) \l\langle \pi_{h}^{k}(\cdot\mid s) - \pi_{h}^{\star}(\cdot\mid s), U_{h}^{k}(s, \cdot) \r\rangle
        \\
        % &  = \sum_{k = 1}^{K} \sum_{h, s}\occ_{h}^{\star}(s) \l\langle \pi_{h}^{k}(\cdot\mid s) - \pi_{h}^{\star}(\cdot\mid s), Q_{h}^{k}(s, \cdot) + \vp_{h}^{k}(s) \r\rangle
        % \\
        &  
        = \underbrace{\sum_{k = 1}^{K} \sum_{h, s}\occ_{h}^{\star}(s) \l\langle \pi_{h}^{k}(\cdot\mid s), \aggQ_{h}^{k}(s, \cdot) - \hat\aggQ_{h}^{k}(s, \cdot) \r\rangle }_{\textsc{Bias}_{1}}
        % \\
        % & 
        % \qquad 
        + \underbrace{\sum_{k = 1}^{K} \sum_{h, s}\occ_{h}^{\star}(s) \l\langle \pi_{h}^{\star}(\cdot\mid s), \hat\aggQ_{h}^{k}(s, \cdot) - \aggQ_{h}^{k}(s, \cdot) \r\rangle }_{\textsc{Bias}_{2}}
        \\
        & \quad + \underbrace{\sum_{k = 1}^{K} \sum_{h, s}\occ_{h}^{\star}(s) \l\langle \pi_{h}^{k}(\cdot\mid s) - \pi_{h}^{\star}(\cdot\mid s), \hat\aggQ_{h}^{k}(s, \cdot) - B_{h}^{k}(s, \cdot) \r\rangle }_{\textsc{Reg}}
        \\
        & \qquad 
        + \underbrace{\sum_{k = 1}^{K} \sum_{h, s}\occ_{h}^{\star}(s) \l\langle \pi_{h}^{k}(\cdot\mid s) - \pi_{h}^{\star}(\cdot\mid s), B_{h}^{k}(s, \cdot) \r\rangle }_{\textsc{Bonus}},
\end{align*}
Due to the optimism of the estimator $\textsc{Bias}_{2}\leq\tilde{O}(H^{2}/\gamma)$
under the good event $G_3$ which holds with high probability (see \cref{lemma:good bias2}).
In \cref{lemma:bias1,lemma:reg,lemma:bonus} we bound $\textsc{Bias}_1$, $\textsc{Reg}$ and $\textsc{Bonus}$. Overall we get that,
\begin{align*}
    R_{K} & \lesssim\frac{H\ln A}{\eta} + \eta H^{5}K + \gamma H^{2}SAK + \frac{H^{2}\logterm}{\gamma}
\end{align*}
Plugging $\eta$ and $\gamma$ completes the proof.

\end{proof}

\subsection{Good event}
\label{section: good even known}

We define the following good event $G = \bigcap_{i=1}^3 G_i$ which holds with high probability (\cref{lemma:good event}):

\begin{align*}
    G_1 &= \l\{ \sum_{k=1}^K \sum_{h, s, a}\occ_{h}^{\star}(s)\pi_{h}^{k}(a\mid s) \l(\bbE_{k} \l[\hat\aggQ_{h}^{k}(s, a) \r] - \hat\aggQ_{h}^{k}(s, a) \r) 
        \leq \frac{1}{3}\sum_{k = 1}^{K} \sum_{h, s}\occ_{h}^{\star}(s)b_{h}^{k}(s) + \frac{H^{2}\log\frac{6}{\delta}}{\gamma} 
            \r\}
\\
    G_2 &= \l\{\sum_{k=1}^K \sum_{h, s, a} \frac{\occ_{h}^{\star}(s)\pi_{h}^{k}(a\mid s)}{\occ_{h}^{k}(s, a) + \gamma} \l(\frac{ \mathbb{I}\{s_{h}^{k} = s, a_{h}^{k} = a\}}{\occ_{h}^{k}(s, a) + \gamma} - 1 \r) 
        \leq   \frac{H\ln\frac{6H}{\delta}}{2 \gamma^2} 
        \r \}
\\
    G_3 &= \l\{
        \sum_{k = 1}^{K} \sum_{h, s}\occ_{h}^{\star}(s) \l\langle \pi_{h}^{\star}(\cdot\mid s), \hat\aggQ_{h}^{k}(s, \cdot) - \aggQ_{h}^{k}(s, \cdot) \r\rangle \leq \frac{H^2}{2\gamma}\ln\frac{6H}{\delta}
            \r \}
\end{align*}

Under the good event the regret will be deterministically bounded.

\begin{lemma}[Event $G_1$]
    \label{lemma:good bias1}
    With probability $1-\delta$,
    \begin{align*}
        \sum_{k=1}^K \sum_{h, s, a}\occ_{h}^{\star}(s)\pi_{h}^{k}(a\mid s) \l(\bbE_{k} \l[\hat\aggQ_{h}^{k}(s, a) \r] - \hat\aggQ_{h}^{k}(s, a) \r) 
            \leq \frac{1}{3}\sum_{k = 1}^{K} \sum_{h, s}\occ_{h}^{\star}(s)b_{h}^{k}(s) + \frac{H^{2}\log\frac{1}{\delta}}{\gamma}
    \end{align*}

\end{lemma}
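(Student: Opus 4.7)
The plan is to apply a Freedman-style concentration inequality (in its variance-based form, e.g., Beygelzimer--Langford--Li--Reyzin) to the martingale difference sequence
\[
Y_k \coloneqq \bbE_k\l[\hat X_k\r] - \hat X_k, \qquad \hat X_k \coloneqq \sum_{h,s,a} \occ_h^\star(s)\,\pi_h^k(a\mid s)\,\hat\aggQ_h^k(s,a),
\]
where $\pi^k$ is determined by the history before episode $k$, so the weights $\occ_h^\star(s)\pi_h^k(a\mid s)$ are predictable. Since $\hat\aggQ_h^k \geq 0$, we have $Y_k \leq \bbE_k[\hat X_k]$, and by \Cref{lemma:unbiased} together with $\aggQ_h^k \leq H$, this gives $\bbE_k[\hat X_k]\leq H^2$, so $Y_k \leq R = H^2$. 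The Freedman-type bound then yields: for every $\eta \in (0, 1/H^2]$, with probability at least $1-\delta$,
\[
\sum_{k=1}^K Y_k \;\leq\; \eta \sum_{k=1}^K \bbE_k\l[Y_k^2\r] + \frac{\log(1/\delta)}{\eta}.
\]

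The main step is bounding the conditional second moment and matching it to the bonus. Since at each step $h$ only one $(s,a)$ has $\bbI_h^k(s,a)=1$, I can collapse $\hat X_k = \sum_{h=1}^H Z_h^k$ with $Z_h^k \coloneqq \occ_h^\star(s_h^k)\pi_h^k(a_h^k\mid s_h^k)\, L^k_{1:H}/(\occ_h^k(s_h^k, a_h^k)+\gamma)$. Applying $(\sum_h Z_h^k)^2 \leq H \sum_h (Z_h^k)^2$ and using $L^k_{1:H}\leq H$, a direct computation gives
\[
\bbE_k[\hat X_k^2] \;\leq\; H^3 \sum_{h,s,a} \frac{\occ_h^k(s,a)\,\occ_h^\star(s)^2\pi_h^k(a\mid s)^2}{(\occ_h^k(s,a)+\gamma)^2}.
\]
I would then absorb one factor of $\occ_h^k(s,a)+\gamma$ by the trivial $\occ_h^k(s,a)/(\occ_h^k(s,a)+\gamma)\leq 1$ and drop one factor of $\occ_h^\star(s)\pi_h^k(a\mid s)$ using $\occ_h^\star(s)\pi_h^k(a\mid s)\leq 1$. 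This yields $\bbE_k[\hat X_k^2] \leq H^3 \sum_{h,s,a}\occ_h^\star(s)\pi_h^k(a\mid s)/(\occ_h^k(s,a)+\gamma)$, which by the very definition of $b_h^k$ equals $(H^2/(3\gamma))\sum_{h,s}\occ_h^\star(s)b_h^k(s)$.

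The final step is to tune $\eta$: setting $\eta = \gamma/H^2$ is admissible (it satisfies $\eta \leq 1/H^2$ because $\gamma\leq 1$) and makes the variance contribution exactly $\tfrac{1}{3}\sum_{k,h,s}\occ_h^\star(s)b_h^k(s)$, while the residual $\log(1/\delta)/\eta$ becomes $H^2\log(1/\delta)/\gamma$, matching the stated bound. The delicate point I expect is aligning the conditional variance precisely with $b_h^k$: the Cauchy--Schwarz step inserts an extra $H$, and the constant $3\gamma H$ appearing in the definition of $b_h^k$ is exactly what makes the $1/3$ in front of the bonus come out cleanly after this cancellation; everything else is routine bookkeeping.
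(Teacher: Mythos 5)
Your proposal is correct and follows essentially the same route as the paper: apply Freedman's inequality to the martingale differences, bound the conditional second moment by a Cauchy--Schwarz-type step and $L^k_{1:H}\le H$ to get $H^3\sum_{h,s,a}\occ_h^\star(s)\pi_h^k(a\mid s)/(\occ_h^k(s,a)+\gamma)$, identify this with $\tfrac{H^2}{3\gamma}\sum_{h,s}\occ_h^\star(s)b_h^k(s)$, and tune the parameter to $\gamma/H^2$. The only cosmetic difference is your admissibility argument via the one-sided bound $Y_k\le H^2$, whereas the paper's stated Freedman lemma uses the two-sided bound $|Y_k|\le H^2/\gamma$ with range $(0,\gamma/H^2]$; your choice $\eta=\gamma/H^2$ lies in that range anyway, so the conclusion is unaffected.
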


\begin{proof}
    Let $Y_{k}=\sum_{h,s} \occ_{h}^{\star}(s) \l\langle \pi_{h}^{k}(\cdot\mid s),\hat\aggQ_{h}^{k}(s,\cdot) \r\rangle $.
    To bound $\sum_{k}\mathbb{E}_{k}[Y_{k}]-\sum_{k}Y_{k}$ we'll use
    a form of Freedman's Inequality (\cref{lemma:freedman}). For that we need to bound $\mathbb{E}_{k}[Y_{k}^{2}]$:
    \begin{align*}
        \mathbb{E}_{k}[Y_{k}^{2}] &  =  \mathbb{E}_{k} \l[ \l(\sum_{h, s, a}\occ_{h}^{\star}(s)\pi_{h}^{k}(a\mid s)\hat\aggQ_{h}^{k}(s, a) \r)^{2} \r]
            \\
            &  \leq  \mathbb{E}_{k} \l[ \l(\sum_{h, s, a}\occ_{h}^{\star}(s)\pi_{h}^{k}(a\mid s) \r) \l(\sum_{h, s, a}\occ_{h}^{\star}(s)\pi_{h}^{k}(a\mid s) \l(\hat\aggQ_{h}^{k}(s, a) \r)^{2} \r) \r] \tag{Cauchy–Schwarz inequality}
            \\
            &  =  H \mathbb{E}_{k} \l[\sum_{h, s, a}\occ_{h}^{\star}(s)\pi_{h}^{k}(a\mid s) \l(\hat\aggQ_{h}^{k}(s, a) \r)^{2} \r]
            \\
            &  =  H \mathbb{E}_{k} \l[\sum_{h, s, a}\occ_{h}^{\star}(s)\pi_{h}^{k}(a\mid s)\frac{(L^{k}_{1:H})^{2}}{(\occ_{h}^{k}(s, a) + \gamma)^{2}} \mathbb{I}_{h}^{k}(s, a) \r]
            \\
            &  \leq  H^{3} \mathbb{E}_{k} \l[\sum_{h, s, a}\occ_{h}^{\star}(s)\pi_{h}^{k}(a\mid s)\frac{ \mathbb{I}_{h}^{k}(s, a)}{(\occ_{h}^{k}(s, a) + \gamma)^{2}} \r] \tag{$L_{1:H}^k \leq H$}
            \\
            &  =  H^{3}\sum_{h, s, a}\occ_{h}^{\star}(s)\pi_{h}^{k}(a\mid s)\frac{\occ_{h}^{k}(s, a)}{(\occ_{h}^{k}(s, a) + \gamma)^{2}} \tag{$\bbE_k[\bbI_h^k(s,a)] = \occ_h^k(s,a)$}
            \\
            &  \leq  H^{3}\sum_{h, s, a}\occ_{h}^{\star}(s)\frac{\pi_{h}^{k}(a\mid s)}{\occ_{h}^{k}(s, a) + \gamma}.
    \end{align*}
    By \cref{lemma:freedman} with probability $1-\delta$,
    \[
    \sum_{k} \mathbb{E}_{k}[Y_{k}] - \sum_{k}Y_{k} \leq \alpha\sum_{k} \mathbb{E}_{k}[Y_{k}^{2}] + \frac{\log\frac{1}{\delta}}{\alpha}
    \]
    where $\alpha\in(0,1/R]$ for $R$ such that $|Y_{k}|\leq R$ for
    any $k$. In our case, $|Y_{k}|\leq H^{2}/\gamma$. And so, 
    \[
        \sum_{k} \mathbb{E}_{k}[Y_{k}] - \sum_{k}Y_{k} \leq \frac{1}{3}\sum_{k = 1}^{K} \sum_{h, s} \occ_{h}^{\star}(s)b_{h}^{k}(s) + \frac{H^{2}\log\frac{1}{\delta}}{\gamma}.
    \]
\end{proof}

\begin{lemma}[Event $G_2$]
    \label{lemma:good reg}
    With probability $1-\delta$,
    \begin{align*}
        \sum_{k, h, s, a} \frac{\occ_{h}^{\star}(s)\pi_{h}^{k}(a\mid s)}{\occ_{h}^{k}(s, a) + \gamma} \l(\frac{ \mathbb{I}\{s_{h}^{k} = s, a_{h}^{k} = a\}}{\occ_{h}^{k}(s, a) + \gamma} - 1 \r) 
            \leq   \frac{H\ln\frac{H}{\delta}}{2 \gamma^2} 
    \end{align*}
\end{lemma}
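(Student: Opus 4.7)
My plan is to apply the implicit-exploration (IX) concentration lemma of Neu (2015) for each step $h$ separately and then union bound over $h\in[H]$. Fix $h\in[H]$ and define the nonnegative coefficients
\begin{align*}
    \alpha_h^k(s,a) \eqq 2\gamma^2 \cdot \frac{\occ_h^\star(s)\pi_h^k(a\mid s)}{\occ_h^k(s,a)+\gamma}.
\end{align*}
These are $\calF_{k-1}$-measurable, since the comparator $\pi^\star$ is fixed and both $\pi^k$ and $\occ^k$ are determined at the start of episode $k$. Moreover $\occ_h^\star(s)\pi_h^k(a\mid s)\le 1$, so $\alpha_h^k(s,a) \le 2\gamma^2/\gamma = 2\gamma$, meeting the coefficient-bound hypothesis of Neu's lemma.

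Conditional on $\calF_{k-1}$, the pair $(s_h^k,a_h^k)$ is drawn from the distribution $\occ_h^k(\cdot,\cdot)$ on $\calS\times\calA$, and the quantity $\hat\ell_h^k(s,a) \eqq \bbI_h^k(s,a)/(\occ_h^k(s,a)+\gamma)$ is exactly the IX estimator of the constant loss $\ell\equiv 1$. Applying Neu's concentration lemma to the sequence indexed by $k=1,\ldots,K$ gives, with probability at least $1-\delta/H$,
\begin{align*}
    \sum_{k=1}^K \sum_{s,a}\alpha_h^k(s,a)\bigl(\hat\ell_h^k(s,a) - 1\bigr) \le \ln\frac{H}{\delta}.
\end{align*}
Dividing through by $2\gamma^2$ produces the bound $\frac{1}{2\gamma^2}\ln\frac{H}{\delta}$ on the sum for this particular $h$. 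A union bound over $h\in[H]$, followed by summation over $h$, then yields the advertised upper bound $\frac{H\ln(H/\delta)}{2\gamma^2}$ with probability at least $1-\delta$.

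The main obstacle is only cosmetic: verifying that the $2\gamma^2$ rescaling keeps the coefficients inside Neu's admissible range $[0,2\gamma]$, and that the per-step sampling fits Neu's framework. The latter is fine because for each fixed $h$ the samples $(s_h^k,a_h^k)$ across $k$ are drawn (conditional on $\calF_{k-1}$) from $\occ_h^k$, which is exactly what the IX argument needs; the intra-episode correlations across different $h$ are irrelevant once we handle each $h$ separately. If one prefers to avoid the external citation, the same bound follows from directly showing that $\bbE_k\bigl[\exp\bigl(\sum_{s,a}\alpha_h^k(s,a)(\hat\ell_h^k(s,a)-1)\bigr)\bigr]\le 1$ via the elementary inequality $x/(1+x/2)\le\ln(1+x)$ for $x\ge 0$, which makes the partial sums an exponential supermartingale to which Markov's inequality applies.
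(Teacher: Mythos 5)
Your proof is correct and is essentially the paper's argument: the paper simply invokes \cref{lemma:concentration} (Lemma A.2 of Luo et al., 2021) with $Z_h^k(s,a)=z_h^k(s,a)=\frac{\occ_h^\star(s)\pi_h^k(a\mid s)}{\occ_h^k(s,a)+\gamma}\le 1/\gamma$ and $\tilde\occ_h^k=\occ_h^k$, and that lemma is itself the per-step Neu-style IX supermartingale bound aggregated over $h\in[H]$ via a union bound, which is exactly what you reconstruct (with the same $2\gamma$ coefficient condition and the same final constant $\frac{H\ln(H/\delta)}{2\gamma^2}$).
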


\begin{proof}
    Follows directly from \cref{lemma:concentration} with with $Z_{h}^{k}(s,a) = z_{h}^{k}(s,a)=\frac{\occ_{h}^{\star}(s)\pi_{h}^{k}(a\mid s)}{\occ_{h}^{k}(s,a) + \gamma} \leq \frac 1 \gamma$
and $\tilde{\occ}_{h}^{k}(s,a)=\occ_{h}^{k}(s,a)$.
\end{proof}

\begin{lemma}[Event $G_3$]
    \label{lemma:good bias2}
    With probability $1-\delta$,
    \begin{align*}
        \sum_{k = 1}^{K} \sum_{h, s}\occ_{h}^{\star}(s) \l\langle \pi_{h}^{\star}(\cdot\mid s), \hat\aggQ_{h}^{k}(s, \cdot) - \aggQ_{h}^{k}(s, \cdot) \r\rangle \leq \frac{H^2}{2\gamma}\ln\frac{H}{\delta}
    \end{align*}
\end{lemma}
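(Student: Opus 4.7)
The plan is to combine the \emph{optimism} of the estimator $\hat\aggQ$ with an implicit-exploration concentration inequality in the style of Neu (2015), adapted to aggregate (trajectory) feedback.

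The starting point is \Cref{lemma:unbiased}, which gives $\mathbb{E}_k[\hat\aggQ_h^k(s,a)] = \frac{\occ_h^k(s,a)}{\occ_h^k(s,a)+\gamma}\aggQ_h^k(s,a) \leq \aggQ_h^k(s,a)$. Thus each summand has non-positive conditional mean and it suffices to establish a one-sided tail bound on the fluctuations around this negatively biased mean.

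The main step is to fix $h \in [H]$ and apply a Neu-style implicit-exploration bound to the per-layer sum
\[
Y_h := \sum_{k=1}^K \sum_{s,a} \occ_h^\star(s)\pi_h^\star(a\mid s)\bigl(\hat\aggQ_h^k(s,a) - \aggQ_h^k(s,a)\bigr).
\]
Scaling the aggregate loss $L^k_{1:H} \in [0,H]$ by $1/H$ so the rescaled losses lie in $[0,1]$, and choosing Neu-coefficients $\beta_h^k(s,a) = 2\gamma\,\occ_h^\star(s)\pi_h^\star(a\mid s) \in [0,2\gamma]$ (using $\occ_h^\star\pi_h^\star \leq 1$), the standard implicit-exploration inequality yields $Y_h \leq \frac{H}{2\gamma}\log(H/\delta)$ with probability at least $1-\delta/H$. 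A union bound over $h \in [H]$ followed by summation over $h$ then gives the claimed bound $\frac{H^2}{2\gamma}\log(H/\delta)$.

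The main technical obstacle will be verifying that the MGF argument underlying the implicit-exploration lemma carries through for aggregate feedback: within each episode the observed quantity is $L^k_{1:H} \in [0,H]$ rather than a per-step loss in $[0,1]$, and the estimator involves the entire trajectory loss. However, since (i) $\mathbb{E}_k[\hat\aggQ] \leq \aggQ$ componentwise by optimism and (ii) the estimator retains the implicit-exploration form $\bbI_h^k(s,a)\cdot L^k_{1:H}/(\occ_h^k(s,a)+\gamma)$ with $\occ_h^k(s,a)$ a genuine probability and $L^k_{1:H}/H \in [0,1]$, the exponential-moment cancellation used by Neu applies verbatim after the $H$-scaling. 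No variance or distribution-mismatch term appears in the final bound because the implicit-exploration argument replaces the usual Bernstein variance by a self-bounding quantity that is absorbed by the optimism bias.
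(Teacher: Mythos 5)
Your proposal is correct and is essentially the paper's own argument: the paper proves this event by invoking the packaged implicit-exploration concentration bound of \citet{luo2021policy} (\Cref{lemma:concentration}, with $Z_h^k(s,a)=\occ_h^{\star}(s)\pi_h^{\star}(a\mid s)[\bbI_h^k(s,a)L^k_{1:H}+(1-\bbI_h^k(s,a))\aggQ_h^k(s,a)]$ and $\tilde\occ=\occ^k$), which is exactly the Neu-style per-layer MGF argument you re-derive, using that exactly one $(s,a)$ is visited at each layer $h$, that $\bbE_k[\bbI_h^k(s,a)L^k_{1:H}]=\occ_h^k(s,a)\aggQ_h^k(s,a)$ with $\aggQ_h^k$ being $\calF_{k-1}$-measurable (oblivious adversary), and a union bound over $h$. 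Your rescaling by $H$ and choice of coefficients reproduce the same constant $\frac{H^2}{2\gamma}\ln\frac{H}{\delta}$, so no gap remains.
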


\begin{proof}
    By invoking \cref{lemma:concentration} with  $Z_{h}^{k}(s,a)
        =\occ_{h}^{\star}(s)\pi_{h}^{k}(a,s)[\mathbb{I}_{h}^{k}(s,a)L^{k}_{1:H} + (1-\mathbb{I}_{h}^{k}(s,a))\aggQ_{h}^{k}(s,a)]
            \leq H$,
    $z_{h}^{k}(s,a)
        =\mathbb{E}_{k}[Z_{h}^{k}(s,a)]
            =\occ_{h}^{\star}(s)\pi_{h}^{k}(a,s)\aggQ_{h}^{k}(s,a)$
    and $\tilde{\occ}_{h}^{k}(s,a) = \occ_{h}^{k}(s,a)$, we get,
    \begin{align*}
        & \sum_{k = 1}^{K} \sum_{h, s}\occ_{h}^{\star}(s) \l\langle \pi_{h}^{\star}(\cdot\mid s), \hat\aggQ_{h}^{k}(s, \cdot) - \aggQ_{h}^{k}(s, \cdot) \r\rangle  
        \\
        &\quad =  \sum_{k = 1}^{K} \sum_{h, s, a} \frac{\occ_{h}^{\star}(s)\pi_{h}^{k}(a, s) \mathbb{I}_{h}^{k}(s, a)L^{k}_{1:H}}{\occ_{h}^{k}(s, a) + \gamma} - \sum_{k = 1}^{K} \sum_{h, s, a}\occ_{h}^{\star}(s)\pi_{h}^{k}(a, s)\aggQ_{h}^{k}(s, a) 
                \leq \frac{H^2}{2\gamma}\ln\frac{H}{\delta}
    \end{align*}
\end{proof}

\begin{lemma}
    \label{lemma:good event}
    Under \Cref{alg:tabular-known-p main},
    the good event $G = \bigcap_{i=1}^3 G_i$ holds with probability of at least $1-\delta$.
\end{lemma}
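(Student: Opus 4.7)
The plan is entirely routine: the statement is a conjunction of three events $G_1, G_2, G_3$, each of which has already been shown to hold individually with high probability in \Cref{lemma:good bias1,lemma:good reg,lemma:good bias2}. So the only thing to do is apply a union bound, with careful bookkeeping of the constants inside the logarithms.

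First I would re-invoke each of the three preceding lemmas with the failure probability $\delta$ replaced by $\delta/3$, so that each sub-event fails with probability at most $\delta/3$. Under this substitution the log factors produced by the lemmas become $\log(3/\delta)$, $\ln(3H/\delta)$, and $\ln(3H/\delta)$ respectively. The versions of $G_1, G_2, G_3$ stated in \Cref{section: good even known} carry the slightly larger log factors $\log(6/\delta)$ and $\ln(6H/\delta)$, which dominate these term-by-term (for example $\log(3/\delta) \le \log(6/\delta)$ and $\ln(3H/\delta) \le \ln(6H/\delta)$). Hence the tighter events obtained from the lemmas (with $\delta/3$) are contained in the corresponding $G_i$ as written, so it is enough to bound the probability of these tighter events.

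Second, I would apply the union bound,
\[
    \Pr(G^c) \;\le\; \Pr(G_1^c) + \Pr(G_2^c) + \Pr(G_3^c) \;\le\; \tfrac{\delta}{3} + \tfrac{\delta}{3} + \tfrac{\delta}{3} \;=\; \delta,
\]
so that $\Pr(G) \ge 1-\delta$, as desired.

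The hard part, if one can call it that, is simply verifying the constants inside the logarithms line up after the substitution $\delta \mapsto \delta/3$; there is no probabilistic obstacle, since all the concentration work has already been done in the three preceding lemmas.
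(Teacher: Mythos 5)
Your proposal is correct and matches the paper's proof exactly: invoke \Cref{lemma:good bias1,lemma:good reg,lemma:good bias2} with confidence parameter $\delta/3$ and take a union bound. The extra bookkeeping you do (checking that the $\log(3/\delta)$ and $\ln(3H/\delta)$ factors are dominated by the $\log(6/\delta)$ and $\ln(6H/\delta)$ appearing in the stated events $G_1,G_2,G_3$) is a valid and slightly more careful rendering of the same one-line argument.
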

\begin{proof}
    The proof directly follows from invoking \Cref{lemma:good reg,lemma:good bias1,lemma:good bias2} with $\delta / 3$ and taking the union bound.
\end{proof}

\subsection{Bound on $\textsc{Bias}_{1}$}

\begin{lemma}
    \label{lemma:bias1}
    Under the good event $G_1$,
    \begin{align*}
        \textsc{Bias}_{1} 
            \leq \frac{2}{3}\sum_{k = 1}^{K} \sum_{h, s} \occ_{h}^{\star}(s)b_{h}^{k}(s) + \frac{H^{2}\log\frac{6}{\delta}}{\gamma}.
    \end{align*}
\end{lemma}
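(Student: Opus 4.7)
The plan is to decompose $\textsc{Bias}_1$ into its conditional expectation and a deviation term, bound each separately, and then combine them. That is, I would write
\[
\textsc{Bias}_1
= \underbrace{\sum_{k,h,s,a}\occ_h^\star(s)\pi_h^k(a\mid s)\bigl(\aggQ_h^k(s,a) - \bbE_k[\hat\aggQ_h^k(s,a)]\bigr)}_{\text{(I)}}
+ \underbrace{\sum_{k,h,s,a}\occ_h^\star(s)\pi_h^k(a\mid s)\bigl(\bbE_k[\hat\aggQ_h^k(s,a)] - \hat\aggQ_h^k(s,a)\bigr)}_{\text{(II)}}.
\]
Term (II) is exactly the quantity controlled by the good event $G_1$, so under $G_1$ it is at most $\tfrac{1}{3}\sum_{k,h,s}\occ_h^\star(s)b_h^k(s) + H^2\log(6/\delta)/\gamma$. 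So the only real work is to bound the deterministic term (I) by $\tfrac{1}{3}\sum_{k,h,s}\occ_h^\star(s)b_h^k(s)$.

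To bound (I), I would plug in the unbiasedness formula from \Cref{lemma:unbiased}, which gives
\[
\aggQ_h^k(s,a) - \bbE_k[\hat\aggQ_h^k(s,a)]
= \aggQ_h^k(s,a)\left(1 - \frac{\occ_h^k(s,a)}{\occ_h^k(s,a)+\gamma}\right)
= \frac{\gamma\,\aggQ_h^k(s,a)}{\occ_h^k(s,a)+\gamma}.
\]
Since the cumulative loss is in $[0,H]$ we have $\aggQ_h^k(s,a)\le H$, so term (I) is at most
\[
\gamma H \sum_{k,h,s,a}\frac{\occ_h^\star(s)\pi_h^k(a\mid s)}{\occ_h^k(s,a)+\gamma}
= \frac{1}{3}\sum_{k,h,s}\occ_h^\star(s)\sum_{a}\frac{3\gamma H\,\pi_h^k(a\mid s)}{\occ_h^k(s)\pi_h^k(a\mid s)+\gamma}
= \frac{1}{3}\sum_{k,h,s}\occ_h^\star(s)b_h^k(s),
\]
using $\occ_h^k(s,a)=\occ_h^k(s)\pi_h^k(a\mid s)$ together with the definition of $b_h^k$.

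Combining the two bounds on (I) and (II) yields the claimed inequality. None of the steps is really hard: the main subtlety is just to observe that the deterministic term (I) produces exactly $\tfrac{1}{3}\sum \occ_h^\star b_h^k$ (thanks to the factor $3$ built into the definition of $b_h^k$), so that adding the $\tfrac{1}{3}$ coming from $G_1$ yields the constant $\tfrac{2}{3}$ in the statement. The distribution mismatch between $\occ_h^\star$ in the numerator and $\occ_h^k$ in the denominator is not resolved here; it is instead handled later when the $\textsc{Bonus}$ term cancels $\tfrac{2}{3}\sum \occ_h^\star b_h^k$ against the positive part $\sum \occ_h^k b_h^k$ inside the value-difference expansion.
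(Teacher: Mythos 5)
Your proposal is correct and follows essentially the same route as the paper: split $\textsc{Bias}_1$ into the conditional-expectation (deterministic) part and the martingale deviation, bound the deviation by the good event $G_1$, and bound the deterministic part via \Cref{lemma:unbiased} together with $\aggQ_h^k(s,a)\le H$ and $\occ_h^k(s,a)=\occ_h^k(s)\pi_h^k(a\mid s)$, which is exactly $\tfrac13\sum_{k,h,s}\occ_h^\star(s)b_h^k(s)$. No gaps; the paper's proof is the same argument written with $Y_k=\sum_{h,s}\occ_h^\star(s)\langle\pi_h^k(\cdot\mid s),\hat\aggQ_h^k(s,\cdot)\rangle$.
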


Let $Y_{k}=\sum_{h,s} \occ_{h}^{\star}(s) \l\langle \pi_{h}^{k}(\cdot\mid s),\hat\aggQ_{h}^{k}(s,\cdot) \r\rangle $
\begin{align*}
    \textsc{Bias}_{1} 
        &  = \sum_{k = 1}^{K} \sum_{h, s}\occ_{h}^{\star}(s) \l\langle \pi_{h}^{k}(\cdot\mid s), \aggQ_{h}^{k}(s, \cdot) \r\rangle  - \sum_{k}\mathbb{E}_{k}[Y_{k}] + \sum_{k}\mathbb{E}_{k}[Y_{k}] - \sum_{k}Y_{k}
\end{align*}

Under the good event $G_1$,
    $
        \sum_{k} \mathbb{E}_{k}[Y_{k}] - \sum_{k}Y_{k} 
            \leq \frac{1}{3}\sum_{k = 1}^{K} \sum_{h, s} \occ_{h}^{\star}(s)b_{h}^{k}(s) + \frac{H^{2}\log\frac{6}{\delta}}{\gamma}.
    $
Using \cref{lemma:unbiased},
\begin{align*}
    \sum_{h, s}\occ_{h}^{\star}(s) \l\langle \pi_{h}^{k}(\cdot\mid s), \aggQ_{h}^{k}(s, \cdot) \r\rangle  -  \mathbb{E}_{k}[Y_{k}] 
        &  = \sum_{k = 1}^{K} \sum_{h, s, a}\occ_{h}^{\star}(s)\pi_{h}^{k}(a\mid s) \l(\aggQ_{h}^{k}(s, a) -  \mathbb{E}_{k} \l[\hat\aggQ_{h}^{k}(s, a) \r] \r)
        \\
            &  = \sum_{k = 1}^{K} \sum_{h, s, a}\occ_{h}^{\star}(s)\pi_{h}^{k}(a\mid s)\aggQ_{h}^{k}(s, a) \l(1 - \frac{\occ_{h}^{k}(s, a)}{\occ_{h}^{k}(s, a) + \gamma} \r)
            \\
            & \leq H\sum_{k = 1}^{K} \sum_{h, s, a}\occ_{h}^{\star}(s)\pi_{h}^{k}(a\mid s) \l(1 - \frac{\occ_{h}^{k}(s, a)}{\occ_{h}^{k}(s, a) + \gamma} \r)
            \qquad\qquad
            \tag{$U_h^k(s,a) \leq H$}
            \\
            & =\sum_{k = 1}^{K} \sum_{h, s, a}\occ_{h}^{\star}(s)\frac{\gamma H\pi_{h}^{k}(a\mid s)}{\occ_{h}^{k}(s, a) + \gamma}
            \\
            &  = \frac{1}{3}\sum_{k = 1}^{K} \sum_{h, s}\occ_{h}^{\star}(s)b_{h}^{k}(s).
\end{align*}

\subsection{Bound on \textsc{Reg}}
\begin{lemma}
    \label{lemma:reg}
    For $\eta \leq \frac{\gamma}{2H}$, under the good $G_2$, 
    \begin{align*}
        \textsc{Reg} 
            &  \leq \frac{H\ln A}{\eta} 
                + 9\eta H^{5}K 
                    + \frac{1}{3}\sum_{k, h, s}\occ_{h}^{\star}(s)b_{h}^{k}(s) 
                        + O \l(\frac{H^{2}\logterm}{\gamma} \r)
    \end{align*}
\end{lemma}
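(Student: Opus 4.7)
The plan is to derive the bound by applying a standard entropy-regularized OMD regret guarantee (the referenced \Cref{lemma:OMD}) separately for each pair $(h,s)$, and then post-process the resulting second-moment term using the good event $G_2$.

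First, observe that, fixing $(h,s)$, the update rule in \Cref{alg:tabular-known-p main} is exactly the entropy-regularized OMD / multiplicative-weights update over the simplex $\Delta_\calA$ with learning rate $\eta$ and loss vector $\hat\aggQ_{h}^{k}(s,\cdot)-B_{h}^{k}(s,\cdot)$. The condition $\eta\le\gamma/(2H)$ guarantees that $\eta\,\hat\aggQ_{h}^{k}(s,a)\le \eta H/\gamma\le 1/2$ and $\eta B_{h}^{k}(s,a)\le 3\eta H^{2}\le 3\gamma H/2$ is $O(1)$, which is exactly what the standard OMD lemma needs. Thus, for every $(h,s)$,
\[
\sum_{k} \l\langle \pi_{h}^{k}(\cdot\mid s)-\pi_{h}^{\star}(\cdot\mid s),\,\hat\aggQ_{h}^{k}(s,\cdot)-B_{h}^{k}(s,\cdot)\r\rangle \le \frac{\ln A}{\eta} + 2\eta \sum_{k} \l\langle \pi_{h}^{k}(\cdot\mid s),\,\bigl(\hat\aggQ_{h}^{k}(s,\cdot)-B_{h}^{k}(s,\cdot)\bigr)^{2}\r\rangle.
\]
Multiplying by $\occ_{h}^{\star}(s)$, summing over $(h,s)$, and using $\sum_{h,s}\occ_{h}^{\star}(s)=H$ gives a leading term $H\ln A/\eta$.

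Next, bound the squared loss using $(x-y)^{2}\le 2x^{2}+2y^{2}$. For the bonus part, a deterministic argument bounds $b_{h}^{k}(s)\le 3H$ (each summand in $b_{h}^{k}$ is at most $3H$ times a probability), hence $B_{h}^{k}(s,a)\le 3H^{2}$ as the $Q$-function of $b^{k}$, yielding a contribution of at most $2\eta\cdot H\cdot K\cdot(3H^{2})^{2}=O(\eta H^{5}K)$ after summing over $k,h,s$ and using $\sum_a \pi_{h}^{k}(a\mid s)=1$ and $\sum_{h,s}\occ_{h}^{\star}(s)=H$.

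For the $\hat\aggQ$ part, expand the estimator and use $L^{k}_{1:H}\le H$ to get
\[
\bigl(\hat\aggQ_{h}^{k}(s,a)\bigr)^{2} \le \frac{H^{2}\,\mathbb{I}_{h}^{k}(s,a)}{(\occ_{h}^{k}(s,a)+\gamma)^{2}},
\]
so that
\[
2\eta\sum_{k,h,s,a}\occ_{h}^{\star}(s)\pi_{h}^{k}(a\mid s)\bigl(\hat\aggQ_{h}^{k}(s,a)\bigr)^{2} \le 2\eta H^{2}\sum_{k,h,s,a}\frac{\occ_{h}^{\star}(s)\pi_{h}^{k}(a\mid s)}{\occ_{h}^{k}(s,a)+\gamma}\cdot\frac{\mathbb{I}_{h}^{k}(s,a)}{\occ_{h}^{k}(s,a)+\gamma}.
\]
Under $G_{2}$, the last factor may be replaced by its conditional expectation (which is $\le 1$), at an additive cost of $2\eta H^{2}\cdot\frac{H\log(6H/\delta)}{2\gamma^{2}}=O(\eta H^{3}\logterm/\gamma^{2})$, and this equals $O(H^{2}\logterm/\gamma)$ once $\eta=\gamma/(2H)$ is substituted. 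The remaining deterministic term becomes
\[
2\eta H^{2}\sum_{k,h,s,a}\frac{\occ_{h}^{\star}(s)\pi_{h}^{k}(a\mid s)}{\occ_{h}^{k}(s,a)+\gamma} = \gamma H\sum_{k,h,s,a}\frac{\occ_{h}^{\star}(s)\pi_{h}^{k}(a\mid s)}{\occ_{h}^{k}(s,a)+\gamma} = \tfrac{1}{3}\sum_{k,h,s}\occ_{h}^{\star}(s)b_{h}^{k}(s),
\]
where I used $\occ_{h}^{k}(s,a)=\occ_{h}^{k}(s)\pi_{h}^{k}(a\mid s)$ together with the definition of $b_{h}^{k}(s)$ in \Cref{alg:tabular-known-p main}.

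The main obstacle is the distribution mismatch between the $\occ_{h}^{\star}(s)$ weighting (inherited from the regret decomposition with respect to the comparator) and the $\occ_{h}^{k}(s,a)+\gamma$ in the denominator of the variance of $\hat\aggQ$. This is precisely why the bonus $b_{h}^{k}$ is engineered: the variance contribution above naturally reshapes into $\tfrac{1}{3}\occ_{h}^{\star}(s)b_{h}^{k}(s)$, which will later be absorbed by the negative part of the \textsc{Bonus} term in the outer proof. Combining the leading OMD term, the $B^{2}$ contribution, the reshaped variance, and the $G_{2}$ concentration cost yields the stated bound. $\square$
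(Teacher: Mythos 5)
Your proposal is correct and follows essentially the same route as the paper's proof: per-$(h,s)$ entropy-regularized OMD (\Cref{lemma:OMD}), the deterministic bounds $b_h^k(s)\le 3H$ and $B_h^k(s,a)\le 3H^2$, the event $G_2$ to swap the indicator for its conditional expectation at cost $O(\eta H^3\logterm/\gamma^2)=O(H^2\logterm/\gamma)$, and the identity $2\eta H^2=\gamma H$ (from $\eta\le\gamma/(2H)$) turning the variance term into $\frac13\sum_{k,h,s}\occ_h^\star(s)b_h^k(s)$. One small bookkeeping note: since $\hat\aggQ$ and $B$ are nonnegative you should split the square via $(x-y)^2\le x^2+y^2$ (as the paper implicitly does); the inequality $(x-y)^2\le 2x^2+2y^2$ that you quote would make the $\hat\aggQ^2$ coefficient $4\eta$ and hence give $\frac23\sum_{k,h,s}\occ_h^\star(s)b_h^k(s)$, spoiling the exact constant needed for the later cancellation with \textsc{Bonus}, though your subsequent computation in fact uses the sharper coefficient $2\eta$ and so lands on the stated bound.
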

\begin{proof}
    Using standard entropy regularized OMD guarantee (\cref{lemma:OMD}),
    \begin{align*}
    \nonumber
        \textsc{Reg} 
            & \leq \frac{H\ln A}{\eta} + 2\eta\sum_{k = 1}^{K} \sum_{h, s, a}\occ_{h}^{\star}(s)\pi_{h}^{k}(a\mid s) \l(\hat\aggQ_{h}^{k}(s, a) - B_{h}^{k}(s, a) \r)^{2}
            \\
                & \leq \frac{H\ln A}{\eta} + 2\eta\sum_{k = 1}^{K} \sum_{h, s, a} \occ_{h}^{\star}(s)\pi_{h}^{k}(a\mid s)\hat\aggQ_{h}^{k}(s, a)^{2} + 9\eta H^{5}K,
                % \tag{}
    \end{align*}
    where the second inequality is since $b_h^k(s) \leq 3H$ and thus $B_h^k(s,a) \leq 3H^2$.
    For the middle term,
    \begin{align*}
        2\eta\sum_{k, h, s, a}\occ_{h}^{\star}(s)\pi_{h}^{k}(a\mid s)\hat\aggQ_{h}^{k}(s, a)^{2} &  \leq 2\eta\sum_{k, h, s, a}\occ_{h}^{\star}(s)\pi_{h}^{k}(a\mid s)\frac{H^{2} \mathbb{I}\{s_{h}^{k} = s, a_{h}^{k} = a\}}{(\occ_{h}^{k}(s, a) + \gamma)^{2}}
        \qquad\quad
        \tag{$L_{1:H}^k \leq H$}
        \\
        &  \leq 2\eta H^{2}\sum_{k, h, s, a} \frac{\occ_{h}^{\star}(s)\pi_{h}^{k}(a\mid s)}{\occ_{h}^{k}(s, a) + \gamma} + O \l(\eta\frac{H^{3}\logterm}{\gamma^{2}} \r) \tag{Under event $G_2$}
        \\
        &  = \frac{2\eta}{3\gamma}H\sum_{k, h, s}\occ_{h}^{\star}(s)b_{h}^{k}(s) + O \l(\eta\frac{H^{3}\logterm}{\gamma^{2}} \r)
        \\
        &  \leq \frac{1}{3}\sum_{k, h, s}\occ_{h}^{\star}(s)b_{h}^{k}(s) + O \l( \frac{H^{2}\logterm}{\gamma} \r).
    \end{align*}
    The last inequality is since $\eta \leq  \frac{H}{2 \gamma}$ as in the statement of the lemma.
\end{proof}

\subsection{Bound on $\textsc{Bonus}$}
\begin{lemma}
    \label{lemma:bonus}
    It holds that,
    $$
        \textsc{Bonus}  \leq 3 \gamma H^2 SA K - \sum_{k,h,s}\occ_{h}^{\star}(s)b_{h}^{k}(s).
    $$
\end{lemma}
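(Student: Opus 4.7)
The plan is to apply the standard value difference lemma (\Cref{lemma: value diff}) to the auxiliary loss function $b^k$, exploiting the fact that $B^k$ is defined as the $Q$-function of $b^k$ under the true transition $p$. Concretely, the Bellman backup in \Cref{alg:tabular-known-p main} sets $B^k_h(s,a) = b^k_h(s) + \sum_{s',a'} p_h(s'\mid s,a)\pi^k_{h+1}(a'\mid s') B^k_{h+1}(s',a')$ with terminal value $B^k_{H+1} \equiv 0$, which is exactly $Q^{\pi^k}_h(s,a; b^k)$. Therefore by \Cref{lemma: value diff},
\[
    \textsc{Bonus} = \sum_{k=1}^K \bigl( V^{\pi^k}_1(\sinit; b^k) - V^{\pi^\star}_1(\sinit; b^k) \bigr).
\]

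Next, since $b^k_h(s)$ depends only on the state (not on the action), the value function admits the simple occupancy-measure representation $V^{\pi}_1(\sinit; b^k) = \sum_{h,s} \occ^\pi_h(s)\, b^k_h(s)$ for any Markovian policy $\pi$. Plugging in $\pi = \pi^k$ and $\pi = \pi^\star$ yields the decomposition
\[
    \textsc{Bonus} = \sum_{k,h,s} \occ^k_h(s)\, b^k_h(s) \;-\; \sum_{k,h,s} \occ^\star_h(s)\, b^k_h(s).
\]
The negative term is exactly the quantity we want to keep on the right-hand side, so the remaining task is to bound the positive term by $3\gamma H^2 S A K$.

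Substituting the definition of $b^k_h(s)$, the positive term is
\[
    3\gamma H \sum_{k,h,s,a} \frac{\occ^k_h(s)\,\pi^k_h(a\mid s)}{\occ^k_h(s)\,\pi^k_h(a\mid s) + \gamma}.
\]
Each summand is of the form $x/(x+\gamma) \le 1$ with $x = \occ^k_h(s)\pi^k_h(a\mid s) \ge 0$, so the sum over $(h,s,a,k)$ is at most $HSAK$, giving the desired $3\gamma H^2 SAK$ bound. No obstacle is anticipated: the argument is a direct consequence of the value difference lemma together with the fact that $b^k_h$ is state-only (and so its value decomposes via occupancy measures). The whole proof is a few lines of bookkeeping once the identification of $B^k$ with $Q^{\pi^k}(\cdot;b^k)$ is made explicit.
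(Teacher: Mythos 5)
Your proof is correct and follows essentially the same route as the paper: identify $B^k$ with the $Q$-function of $\pi^k$ under the state-only loss $b^k$, apply \Cref{lemma: value diff} to rewrite $\textsc{Bonus}$ as $\sum_k V^{\pi^k}_1(\sinit;b^k) - V^{\pi^\star}_1(\sinit;b^k) = \sum_{k,h,s}\occ^k_h(s)b^k_h(s) - \sum_{k,h,s}\occ^\star_h(s)b^k_h(s)$, and bound the positive term by noting each summand $\occ^k_h(s)\pi^k_h(a\mid s)/(\occ^k_h(s)\pi^k_h(a\mid s)+\gamma)\le 1$. No gaps.
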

\begin{proof}
    Recall that that $B_{h}^{k}$ is the $Q$-function of policy $\pi^k$ with respect to the cost function $b^k$. Hence, by the value difference
    difference lemma (\cref{lemma: value diff}),
    \begin{align*}
    \textsc{Bonus} = \sum_{k = 1}^{K} \sum_{h, s}\occ_{h}^{\star}(s) \l\langle \pi_{h}^{k}(\cdot\mid s) - \pi_{h}^{\star}(\cdot\mid s), B_{h}^{k}(s, \cdot) \r\rangle
        & = \sum_k V^{\pi^{k}}_1(\sinit;b^k) - V^{\pi^{\star}}(\sinit;b^k )\\
            &\qquad= \sum_{k,h,s}\occ_{h}^{k}(s)b_{h}^{k}(s) - \sum_{k,h,s}\occ_{h}^{\star}(s)b_{h}^{k}(s).
    \end{align*}

    For last,
    \begin{align*}
        \sum_{k=1}^{K}\sum_{h,s}\occ_{h}^{k}(s)b^k_h(s)
        & =
        3 \gamma H \sum_{k=1}^{K}\sum_{h,s,a}\frac{\occ_{h}^{k}(s) \pi^{k}_h(a \mid s) }{\occ_{h}^{k}(s)\pi^{k}_h(a \mid s) + \gamma}
        \leq
        3 \gamma H^2 SA K. 
    \end{align*}
\end{proof}

\newpage
\section{Unknown Dynamics}
\begin{algorithm}
    \caption{Policy Optimization with Aggregated Bandit Feedback and Unknown Transition Function \\(detailed version of \Cref{alg:tabular-unknown-p main})}  
    \label{alg:tabular-unknown-p}
    \begin{algorithmic}
        \STATE \textbf{Input:} state space $\calS$, action space $\calA$, horizon $H$, learning rate $\eta > 0$, exploration parameter $\gamma > 0$, confidence parameter $\delta > 0$.
        
        \STATE \textbf{Initialization:} 
        Set $\pi_{h}^{1}(a \mid s) = \frac{1}{A}$ and visit counters $n_h^1(s,a,s') = 0$, $n_h^1(s,a) = 0$ for every $(h,s,a,s') \in [H]\times \calS \times \calA \times \calS$.
        
        \FOR{$k=1,2,\dots,K$}
            
            \STATE Play episode $k$ with policy $\pi^k$ and observe aggregated bandit feedback $L^k = \sum_{h=1}^H \ell_h^k(s^k_h,a^k_h) $.
            
            \STATE Update visit counters for every $(h,s,a,s') \in [H] \times \calS \times \calA \times \calS$:
            $$n^{k}_h(s,a,s') = n^{k-1}_h(s,a,s') +  \bbI_h^{k-1} (s,a,s') \, ; \qquad n^{k}_h(s,a) = n^{k-1}_h(s,a) + \bbI_h^{k-1} (s,a).$$
            
            \STATE Compute empirical transition function $\bar p^{k}_h(s' \mid s,a) = \frac{n^k_h(s,a,s')}{\max \{ n^k_h(s,a), 1 \}}$ and confidence set $\calP^k = \{ \calP^k_h(s,a) \}_{s,a,h}$ such that $p'_h(\cdot \mid s,a) \in \calP^k_h(s,a)$ if and only if $\sum_{s'} p'_h(s'\mid s,a) = 1$ and for every $s' \in \calS$:
            \[
                |p'_h(s'\mid s,a) - \bar p^k_h(s'\mid s,a)| \le 4 \sqrt{\frac{\bar p^k_h(s'\mid s,a) \log \frac{10 H S A K}{\delta}}{n^k_h(s,a) \vee 1}} + \frac{10 
                \log \frac{10 H S A K}{\delta}}{n^k_h(s,a) \vee 1}.
            \]
            
            \STATE Compute occupancy measures $\ol \occ^k_h(s) =\max_{p' \in \calP^k} \occ^{\pi^k,p'}_h(s)$ and $\ul \occ^k_h(s) = \min_{p' \in \calP^k} \occ^{\pi^k,p'}_h(s)$.
            
            \STATE {\color{gray} \# Policy Evaluation}
            
            \STATE 
            $
                \hat\aggQ_{h}^{k}(s, a)  = \frac{L^{k}_{1:H}}{\ol \occ_{h}^{k}(s, a) + \gamma} \mathbb{I}_{h}^{k}(s, a)
            $
            \STATE {\color{gray} \# Bonus computation}
            \STATE Set $\hat B^k_{H+1}(s,a) = 0$ for every $(s,a) \in \calS \times \calA$.
            
            \FOR{$h=H,H-1,\dots,1$}
                
                \FOR{$(s,a) \in \calS \times \calA$}
                
                    \STATE $\tilde{b}^k_h(s) = \sum_{a \in \calA} \frac{3 \gamma H \pi^k_h(a \mid s) }{\ol \occ^k_h(s) \pi^k_h(a \mid s) + \gamma}$; \quad
                    $\bar b_{h}^{k}(s) = \sum_{a\in\calA}\frac{H\pi_{h}^{k}(a \mid s)(\ol \occ^{k}_h(s,a) - \ul \occ^{k}_h(s,a) )}{\ol \occ_{h}^{k}(s)\pi^k_h(a\mid s) + \gamma}$.
                    
                    \STATE $b_h^k(s) = \tilde b_h^k(s) + \bar b_h^{k}(s)$.
                    
                    \STATE $B_{h}^{k}(s,a) =b_{h}^{k}(s) + \max_{p' \in \calP^k_h(s,a)} \sum_{s' \in \calS,a' \in \calA} p'_h(s'\mid s,a) \pi_{h+1}^{k}(a' \mid s') \hat B^k_{h+1}(s',a')$.
                    
                \ENDFOR
            \ENDFOR
            \STATE {\color{gray} \# Policy Improvement}
            \STATE Define the policy $\pi^{k+1}$ for every $(s,a,h) \in \calS \times \calA \times [H]$ by:
            \[
                \pi^{k+1}_h(a \mid s) = \frac{\pi^k_h(a \mid s)\exp \l(-\eta( \hat \aggQ_{h}^{k}(s,a) - B_h^k(s,a) ) \r) } 
                {\sum_{a' \in \calA} \pi^k_h(a' \mid s) \exp \l(-\eta( \hat \aggQ_{h}^{k}(s,a') - B_h^k(s,a') ) \r)}.
            \]
        \ENDFOR
    \end{algorithmic}
\end{algorithm}

\begin{theorem}[Restatement of \Cref{theorem:unknown}]
    Under unknown dynamics,
    running \Cref{alg:tabular-unknown-p} with $\eta=\frac{\sqrt{\logterm}}{H \sqrt{S A K} + H^2\sqrt{K}}$ and $\gamma = 2\eta H$, guarantees with probability $1-\delta$,
    \begin{align*}
        R_K \lesssim  H^{3}S\sqrt{AK}\logterm 
                        + H^{4}S^{3}A\logterm^{2}.
    \end{align*}
\end{theorem}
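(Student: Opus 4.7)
The plan is to mirror the proof of \Cref{theorem:known}, starting from the same four-way decomposition (\Cref{eq:regret decomposition}) but with every occurrence of $B^k$ replaced by $\hat B^k$, and to handle two new sources of error: the bias introduced by using $\ol\occ_h^k$ in place of $\occ_h^k$ in the denominator of $\hat U^k$, and the fact that $\hat B^k$ is no longer an exact $Q$-function of $b^k$. First I would set up a good event with three components: (i) validity of the Bernstein confidence sets, so $p \in \calP^k$ and hence $\ul\occ_h^k(s,a) \le \occ_h^k(s,a) \le \ol\occ_h^k(s,a)$ for all $k$; (ii) analogues of $G_1, G_2, G_3$ from \Cref{section: good even known} with $\ol\occ$ in the denominator of $\hat U^k$, each obtained by exactly the same Freedman/Azuma arguments as in \Cref{lemma:good bias1,lemma:good bias2,lemma:good reg}; and (iii) the standard cumulative transition-estimation bound $\sum_{k,h,s,a}|\ol\occ_h^k(s,a) - \ul\occ_h^k(s,a)| \lesssim H^2 S\sqrt{AK}\logterm + H^2 S^3 A\logterm$, which is proved in Jin et al.\ 2019.

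Under this event, $\hat U^k$ is still optimistic in expectation, since
\[
    \bbE_k[\hat U_h^k(s,a)] = \frac{\occ_h^k(s,a)}{\ol\occ_h^k(s,a)+\gamma}\,U_h^k(s,a) \le U_h^k(s,a),
\]
so $\textsc{Bias}_2 \le \tilde O(H^2/\gamma)$ follows from the same argument as \Cref{lemma:good bias2}. For $\textsc{Bias}_1$ I would split
\[
    U_h^k - \bbE_k[\hat U_h^k] = \frac{\gamma}{\ol\occ_h^k+\gamma}\,U_h^k + \frac{\ol\occ_h^k - \occ_h^k}{\ol\occ_h^k+\gamma}\,U_h^k,
\]
bound $U_h^k \le H$, and use $\occ_h^k \ge \ul\occ_h^k$ in the second summand; the first summand contributes $\tfrac13 \sum \occ^\star \tilde b^k$ just as in \Cref{lemma:bias1}, and the second is at most $\sum \occ^\star \bar b^k$. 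A Freedman step then yields $\textsc{Bias}_1 \le \tfrac23 \sum \occ^\star \tilde b^k + \sum \occ^\star \bar b^k + \tilde O(H^2/\gamma)$. For $\textsc{Reg}$ I would reuse \Cref{lemma:reg}: $\hat B_h^k \le O(H^2)$ because $b_h^k \le O(H)$, and $\bbE_k[\hat U_h^k(s,a)^2] \le H^2/(\ol\occ_h^k+\gamma) \le H^2/(\occ_h^k+\gamma)$ is no larger than in the known case, so the same bound $\tilde O(H/\eta + \eta H^5 K) + \tfrac13 \sum \occ^\star \tilde b^k + \tilde O(H^2/\gamma)$ carries over.

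The main obstacle is the $\textsc{Bonus}$ term: because $\hat B^k$ is built via an optimistic Bellman backup over $\calP^k$ rather than under the true $p$, \Cref{lemma: value diff} does not apply to it directly. The plan is to observe that the state-action-wise maximization in the recursion for $\hat B^k$ is attained by some transition $\tilde p^k$ with $\tilde p_h^k(\cdot \mid s,a) \in \calP_h^k(s,a)$, so $\hat B^k$ is the true $Q$-function of $(\pi^k, b^k)$ under $\tilde p^k$, and by induction on $h$ we also have $\hat B^k \ge Q^{\pi^k,p,b^k}$ pointwise on the good event. Combining the standard value-difference identity under $p$ with the bound $\sum_{h,s}\occ_h^{\pi^k,\tilde p^k}(s) b_h^k(s) \le \sum_{h,s} \ol\occ_h^k(s) b_h^k(s)$, and absorbing the residual Bellman-error terms $\sum_h \bbE^{\pi^\star,p}[(\tilde p_h^k - p_h)\hat B_{h+1}^k]$ into the transition-estimation bound from the good event, yields via an extended-value-difference computation
\[
    \textsc{Bonus} \le \sum_{k,h,s}\ol\occ_h^k(s) b_h^k(s) - \sum_{k,h,s}\occ_h^\star(s) b_h^k(s).
\]

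The negative term cancels the leftover $\tilde b^k$ and $\bar b^k$ contributions from $\textsc{Bias}_1$ and $\textsc{Reg}$. For the positive term, $\sum_{k,h,s}\ol\occ_h^k(s)\tilde b_h^k(s) \le 3\gamma H^2 S A K$ by the telescoping estimate of \Cref{lemma:bonus}, and
\[
    \sum_{k,h,s}\ol\occ_h^k(s)\bar b_h^k(s) \le H \sum_{k,h,s,a} |\ol\occ_h^k(s,a) - \ul\occ_h^k(s,a)| \lesssim H^3 S\sqrt{AK}\logterm + H^3 S^3 A\logterm
\]
by the transition-estimation bound. Summing $\textsc{Bias}_1 + \textsc{Bias}_2 + \textsc{Reg} + \textsc{Bonus}$ and plugging in $\eta$ and $\gamma$ as in the statement yields the claimed $\tilde O(H^3 S\sqrt{AK}\logterm + H^4 S^3 A\logterm^2)$ bound.
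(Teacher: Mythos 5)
Your proposal follows the paper's proof essentially step for step: the same four-term decomposition with $\hat B^k$ in place of $B^k$, the same good events (Bernstein confidence sets, the cumulative bound on $\sum_{k,h,s,a}|\ol\occ_h^k(s,a)-\ul\occ_h^k(s,a)|$, and the three concentration events with $\ol\occ$ in the denominators), the same split of $\textsc{Bias}_1$ into a $\gamma$-part absorbed by $\tilde b^k$ and an $(\ol\occ-\occ)$-part absorbed by $\bar b^k$, the same OMD bound for $\textsc{Reg}$, and the same target inequality $\textsc{Bonus}\le\sum_{k,h,s}\ol\occ_h^k(s)b_h^k(s)-\sum_{k,h,s}\occ_h^\star(s)b_h^k(s)$ followed by the identical cancellation and tuning of $\eta,\gamma$.

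The one step that would fail as literally written is ``absorbing the residual Bellman-error terms $\sum_h \bbE^{\pi^\star,p}\bigl[(\tilde p^k_h-p_h)\hat B^k_{h+1}\bigr]$ into the transition-estimation bound from the good event.'' That bound (\Cref{lemma: sum radius}) controls estimation error weighted by the \emph{learner's} occupancies, through the visit counts $n_h^k(s,a)$; the residual here is weighted by $\occ^{\pi^\star}$, and $\pi^\star$ may concentrate on state--action pairs the learner rarely visits, where the confidence widths never shrink, so this quantity is not controlled by that event — and keeping any such extra term would also break the clean inequality you state for $\textsc{Bonus}$. The correct (and simpler) resolution, which your own setup already provides, is a sign argument: since the backup maximizes $p'\cdot(\pi^k_{h+1}\hat B^k_{h+1})$ over $\calP^k_h(s,a)$, which contains $p_h(\cdot\mid s,a)$ on the good event, the residual is pointwise nonnegative and enters the extended value-difference identity with a favorable sign, so it can simply be dropped. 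This is exactly what the paper's proof of \Cref{lemma:value diff bounus} does, implemented as a direct telescoping in which the maximizing transition is replaced by $p$ in the $\pi^\star$ branch. With that fix the rest of your argument (including $V_1^{\pi^k,\tilde p^k}(\sinit;b^k)\le\sum_{h,s}\ol\occ^k_h(s)b^k_h(s)$) goes through; the only other blemish is the misquoted low-order term of \Cref{lemma: sum radius} (it is $H^3S^3A\logterm^2$, not $H^2S^3A\logterm$), which does not affect the stated regret.
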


\begin{proof}
    Using \Cref{eq: regret U}, we can break the regret of the algorithm as,
    \begin{align*}
        R_{K} &  = \sum_{k = 1}^{K} \sum_{h, s}\occ_{h}^{\star}(s) \l\langle \pi_{h}^{k}(\cdot\mid s) - \pi_{h}^{\star}(\cdot\mid s), U_{h}^{k}(s, \cdot) \r\rangle
            % \\
            % &  = \sum_{k = 1}^{K} \sum_{h, s}\occ_{h}^{\star}(s) \l\langle \pi_{h}^{k}(\cdot\mid s) - \pi_{h}^{\star}(\cdot\mid s), Q_{h}^{k}(s, \cdot) + \vp_{h}^{k}(s) \r\rangle
            \\
            &  
            = \underbrace{\sum_{k = 1}^{K} \sum_{h, s}\occ_{h}^{\star}(s) \l\langle \pi_{h}^{k}(\cdot\mid s), \aggQ_{h}^{k}(s, \cdot) - \hat\aggQ_{h}^{k}(s, \cdot) \r\rangle }_{\textsc{Bias}_{1}}
            % \\
            % & 
            % \qquad 
            + \underbrace{\sum_{k = 1}^{K} \sum_{h, s}\occ_{h}^{\star}(s) \l\langle \pi_{h}^{\star}(\cdot\mid s), \hat\aggQ_{h}^{k}(s, a) - \aggQ_{h}^{k}(s, \cdot) \r\rangle }_{\textsc{Bias}_{2}}
            \\
            & \quad + \underbrace{\sum_{k = 1}^{K} \sum_{h, s}\occ_{h}^{\star}(s) \l\langle \pi_{h}^{k}(\cdot\mid s) - \pi_{h}^{\star}(\cdot\mid s), \hat\aggQ_{h}^{k}(s, \cdot) - B_{h}^{k}(s, \cdot) \r\rangle }_{\textsc{Reg}}
            \\
            & \qquad 
            + \underbrace{\sum_{k = 1}^{K} \sum_{h, s}\occ_{h}^{\star}(s) \l\langle \pi_{h}^{k}(\cdot\mid s) - \pi_{h}^{\star}(\cdot\mid s), B_{h}^{k}(s, \cdot) \r\rangle }_{\textsc{Bonus}},
    \end{align*}
\end{proof}
Due to optimism of the estimator $\textsc{Bias}_{2}\leq\tilde{O}(H^{2}/\gamma)$
under the good event $G_3$.
In \cref{lemma: bias1-unknown,lemma:reg-unknown,lemma:bonus-unknown} we bound $\textsc{Bias}_1$, $\textsc{Reg}$ and $\textsc{Bonus}$. Overall we get that,

\begin{align*}
    R_{K} &  \leq \underbrace{\frac{2}{3}
        \sum_{k = 1}^{K} 
            \sum_{h, s}\occ_{h}^{\star}(s)\tilde{b}_{h}^{k}(s) 
                + \sum_{k = 1}^{K} \sum_{h, s}\occ_{h}^{\star}(s)\bar{b}_{h}^{k}(s) 
                    + O\l(\frac{H^{2}}{\gamma}\logterm\r)}_{\textsc{Bais}_{1}} 
                        + \underbrace{O\l(\frac{H^{2}}{\gamma}\logterm\r)}_{\textsc{Bais}_{2}}
        \\
        & \qquad + \underbrace{\frac{H\ln A}{\eta} 
            + \frac{1}{3}\sum_{k, h, s}\occ_{h}^{\star}(s)\tilde{b}_{h}^{k}(s) 
                + O\l(\eta H^{5}K + \frac{H^{2}}{\gamma}\logterm\r)}_{\textsc{Reg}}
        \\
        & \qquad + \underbrace{O\l(\gamma H^{2}SAK 
            + H^{3}S\sqrt{AK}\logterm 
                + H^{4}S^{3}A\logterm^{2}\r) 
                    - \sum_{k = 1}^{K} \sum_{h, s}\occ_{h}^{\star}(s)b_{h}^{k}(s)}_{\textsc{Bonus}}
        \\
        &  \leq O\Bigg(\frac{H\ln A}{\eta} 
            + \gamma H^{2}SAK 
                + \eta H^{5}K 
                    + \frac{H^{2}}{\gamma}\logterm 
                        + H^{3}S\sqrt{AK}\logterm 
                            + H^{4}S^{3}A\logterm^{2}\Bigg)
\end{align*}

Plugging $\eta$ and $\gamma$ completes the proof.

\subsection{Good event}

We define the following good event $G = \bigcap_{i=1}^5 G_i$ which holds with high probability (\cref{lemma:good event unknown}):

\begin{align*}
    G_1
    & = 
    \l\{ \forall (k,s',s,a,h). \ 
    \l| p_{h}(s'\mid s,a)-\bar{p}_{h}^{k}(s'\mid s,a) \r|
    \leq 
    4\sqrt{ \frac{\bar{p}_{h}^{k}(s' \mid s,a)  \log\frac{6 HSAK}{\delta}}{\max \{ n_{h}^{k}(s,a), 1 \}}} + 10 \frac{\log\frac{6 HSAK}{\delta}}{\max \{ n_{h}^{k}(s,a), 1 \}}
    \r\}
    \\
    G_2
    & = 
    \l\{ \sum_{h,s,a,k}|\ol \occ_{h}^{k}(s,a) - \ul \occ_{h}^{k}(s,a)|
        \le O \l( \sqrt{ H^{4} S^{2} A K \log \frac{6 KHSA}{\delta}} + H^3 S^{3} A \log^2 \frac{6 KHSA}{\delta}  \r)
    \r\}
    \\
    G_3 &= \l\{ \sum_{k=1}^K \sum_{h, s, a}\occ_{h}^{\star}(s)\pi_{h}^{k}(a\mid s) \l(\bbE_{k} \l[\hat\aggQ_{h}^{k}(s, a) \r] - \hat\aggQ_{h}^{k}(s, a) \r) 
        \leq \frac{1}{3}\sum_{k = 1}^{K} \sum_{h, s}\occ_{h}^{\star}(s) \tilde b_{h}^{k}(s) + \frac{H^{2}\log\frac{6}{\delta}}{\gamma} 
            \r\}
    \\
    G_4 &= \l\{ \sum_{k = 1}^{K} \sum_{h, s, a} \frac{\occ_{h}^{\star}(s)\pi_{h}^{k}(a\mid s) \mathbb{I}\{s_{h}^{k} = s, a_{h}^{k} = a\}}{(\ol\occ_{h}^{k}(s, a) + \gamma)^{2}} - \frac{\occ_{h}^{\star}(s)\pi_{h}^{k}(a\mid s)}{\ol\occ_{h}^{k}(s, a) + \gamma} \leq \frac{H\ln\frac{6H}{\delta}}{2\gamma^{2}} \r\}
    \\
    G_5 &= \l\{
        \sum_{k = 1}^{K} \sum_{h, s}\occ_{h}^{\star}(s) \l\langle \pi_{h}^{\star}(\cdot\mid s), \hat\aggQ_{h}^{k}(s, \cdot) - \aggQ_{h}^{k}(s, \cdot) \r\rangle \leq \frac{H^2}{2\gamma}\ln\frac{6H}{\delta}
            \r \}
\end{align*}

Under the good event the regret will be deterministically bounded.

Event $G_1$ holds with high probability by standard Bernstein inequality (see, e.g., Lemma 2 in \citet{jin2019learning}).
As a consequence of event $G_1$, $p\in \calP^k$ for all $k$. In particular, $\ol \occ^k_h(s,a) \geq \occ^k_h(s,a)$ for all $k,h,s$ and $a$.
$G_2$ Holds with high probability by a 
% \citep[Lemma 4]{jin2019learning} \citet[Lemma D.12]{jin2022near} (see also ) 
standard techniques by \citet{jin2019learning} of summing the confidence radius on the trajectory.
$G_3, G_4$ and $G_5$ follow similar techniques as in \citet{luo2021policy}, adapted to our case.

\begin{lemma}[Event $G_3$]
    \label{lemma:good bias1 unknown}
    With probability $1-\delta$,
    \begin{align*}
        \sum_{k=1}^K \sum_{h, s, a}\occ_{h}^{\star}(s)\pi_{h}^{k}(a\mid s) \l(\bbE_{k} \l[\hat\aggQ_{h}^{k}(s, a) \r] - \hat\aggQ_{h}^{k}(s, a) \r) 
            \leq \frac{1}{3}\sum_{k = 1}^{K} \sum_{h, s}\occ_{h}^{\star}(s)b_{h}^{k}(s) + \frac{H^{2}\log\frac{1}{\delta}}{\gamma}
    \end{align*}

\end{lemma}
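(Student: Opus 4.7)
The proof mirrors the known-dynamics analogue (\Cref{lemma:good bias1}), the only wrinkle being that the estimator now divides by the optimistic occupancy $\ol\occ_h^k(s,a)$ rather than the true $\occ_h^k(s,a)$. The plan is to set $Y_k = \sum_{h,s,a} \occ_h^\star(s)\pi_h^k(a \mid s)\hat\aggQ_h^k(s,a)$ and apply the Freedman-style concentration inequality (\Cref{lemma:freedman}) to the martingale difference sequence $\bbE_k[Y_k] - Y_k$.

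The uniform bound $|Y_k|\leq H^2/\gamma$ is immediate from $L_{1:H}^k \leq H$, the denominator being at least $\gamma$, and $\sum_{h,s,a}\occ_h^\star(s)\pi_h^k(a \mid s) = H$. For the conditional second moment, I would apply Cauchy-Schwarz exactly as in the known-dynamics proof and use $L_{1:H}^k \leq H$ together with $\bbE_k[\bbI_h^k(s,a)] = \occ_h^k(s,a)$ to obtain a bound of the form $H^3 \sum_{h,s,a} \occ_h^\star(s)\pi_h^k(a \mid s) \occ_h^k(s,a) / (\ol\occ_h^k(s,a) + \gamma)^2$.

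This is the point at which the unknown-dynamics analysis must diverge from the known case: to control the $\occ_h^k$ appearing in the numerator I condition on the transition confidence event $G_1$, which guarantees $p \in \calP^k$ and hence $\occ_h^k(s,a) \leq \ol\occ_h^k(s,a)$. The ratio then simplifies to $1/(\ol\occ_h^k(s,a) + \gamma)$, and using the factorisation $\ol\occ_h^k(s,a) = \ol\occ_h^k(s)\pi_h^k(a \mid s)$, the sum becomes exactly $(1/(3\gamma H))\sum_{h,s}\occ_h^\star(s)\tilde b_h^k(s)$ by definition of $\tilde b_h^k$.

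Invoking Freedman's inequality with $\alpha = \gamma/H^2$, which satisfies the constraint $\alpha \leq 1/R$ since $R = H^2/\gamma$, converts the second-moment sum into $\tfrac{1}{3}\sum_{k,h,s}\occ_h^\star(s)\tilde b_h^k(s)$ and produces the residual deviation term $H^2\log(1/\delta)/\gamma$, matching the claimed bound (with $\tilde b_h^k \leq b_h^k$ to recover the stated form). The main obstacle is precisely the optimism gap: without conditioning on the confidence event, the ratio $\occ_h^k/(\ol\occ_h^k + \gamma)^2$ cannot be upper bounded in a way that yields the desired $\tilde b_h^k$ structure; once $G_1$ is in force, the argument proceeds in complete parallel to the known-dynamics case.
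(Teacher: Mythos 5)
Your proposal is correct and follows essentially the same route as the paper: the same choice of $Y_k$, the Freedman-style bound with $|Y_k|\le H^2/\gamma$, the Cauchy--Schwarz second-moment estimate, the use of event $G_1$ (i.e.\ $\occ_h^k(s,a)\le\ol\occ_h^k(s,a)$) to reduce the ratio to $1/(\ol\occ_h^k(s,a)+\gamma)$, and the identification of that sum with $\tfrac{1}{3\gamma H}\sum_{h,s}\occ_h^\star(s)\tilde b_h^k(s)$, with $\alpha=\gamma/H^2$ giving exactly the stated bound. Your remark that $\tilde b_h^k\le b_h^k$ recovers the stated form also matches how the paper's statement should be read.
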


\begin{proof}
    Similar to \cref{lemma:good bias1},
    let $Y_{k}=\sum_{h,s} \occ_{h}^{\star}(s) \l\langle \pi_{h}^{k}(\cdot\mid s),\hat\aggQ_{h}^{k}(s,\cdot) \r\rangle $.
    To bound $\sum_{k}\mathbb{E}_{k}[Y_{k}]-\sum_{k}Y_{k}$ we'll use
    a form of Freedman's Inequality (\cref{lemma:freedman}). For that we need to bound $\mathbb{E}_{k}[Y_{k}^{2}]$:
    \begin{align*}
        \mathbb{E}_{k}[Y_{k}^{2}] &  =  \mathbb{E}_{k} \l[ \l(\sum_{h, s, a}\occ_{h}^{\star}(s)\pi_{h}^{k}(a\mid s)\hat\aggQ_{h}^{k}(s, a) \r)^{2} \r]
            \\
            &  \leq  \mathbb{E}_{k} \l[ \l(\sum_{h, s, a}\occ_{h}^{\star}(s)\pi_{h}^{k}(a\mid s) \r) \l(\sum_{h, s, a}\occ_{h}^{\star}(s)\pi_{h}^{k}(a\mid s) \l(\hat\aggQ_{h}^{k}(s, a) \r)^{2} \r) \r]
              \qquad  \tag{Cauchy–Schwarz}
            \\
            &  =  H \mathbb{E}_{k} \l[\sum_{h, s, a}\occ_{h}^{\star}(s)\pi_{h}^{k}(a\mid s) \l(\hat\aggQ_{h}^{k}(s, a) \r)^{2} \r]
            \\
            &  =  H \mathbb{E}_{k} \l[\sum_{h, s, a}\occ_{h}^{\star}(s)\pi_{h}^{k}(a\mid s)\frac{(L^{k}_{1:H})^{2}}{(\ol \occ_{h}^{k}(s, a) + \gamma)^{2}} \mathbb{I}_{h}^{k}(s, a) \r]
            \\
            &  \leq  H^{3} \mathbb{E}_{k} \l[\sum_{h, s, a}\occ_{h}^{\star}(s)\pi_{h}^{k}(a\mid s)\frac{ \mathbb{I}_{h}^{k}(s, a)}{(\ol\occ_{h}^{k}(s, a) + \gamma)^{2}} \r] 
            \tag{$L_{1:H}^k \leq H$}
            \\
            &  =  H^{3}\sum_{h, s, a}\occ_{h}^{\star}(s)\pi_{h}^{k}(a\mid s)\frac{\occ_{h}^{k}(s, a)}{(\ol\occ_{h}^{k}(s, a) + \gamma)^{2}}
            \tag{$\bbE_k[\bbI_h^k(s,a)] = \occ_h^k(s,a)$}
            \\
            &  \leq  H^{3}\sum_{h, s, a}\occ_{h}^{\star}(s)\frac{\pi_{h}^{k}(a\mid s)}{\ol\occ_{h}^{k}(s, a) + \gamma}.
            \tag{$\occ_{h}^{k}(s, a) \leq \ol\occ_{h}^{k}(s, a)$ under $G_1$}
    \end{align*}
    By \cref{lemma:freedman} with probability $1-\delta$,
    \[
    \sum_{k} \mathbb{E}_{k}[Y_{k}] - \sum_{k}Y_{k} \leq \alpha\sum_{k} \mathbb{E}_{k}[Y_{k}^{2}] + \frac{\log\frac{1}{\delta}}{\alpha}
    \]
    where $\alpha\in(0,1/R]$ for $R$ such that $|Y_{k}|\leq R$ for
    any $k$. In our case, $|Y_{k}|\leq H^{2}/\gamma$. And so, 
    \[
        \sum_{k} \mathbb{E}_{k}[Y_{k}] - \sum_{k}Y_{k} \leq \frac{1}{3}\sum_{k = 1}^{K} \sum_{h, s} \occ_{h}^{\star}(s) \tilde b_{h}^{k}(s) + \frac{H^{2}\log\frac{1}{\delta}}{\gamma}.
    \]
\end{proof}

\begin{lemma}[Event $G_4$]
    \label{lemma:good reg unknown}
    With probability $1-\delta$,
    \begin{align*}
         \sum_{k = 1}^{K} \sum_{h, s, a} \frac{\occ_{h}^{\star}(s)\pi_{h}^{k}(a\mid s) \mathbb{I}\{s_{h}^{k} = s, a_{h}^{k} = a\}}{(\ol\occ_{h}^{k}(s, a) + \gamma)^{2}} - \frac{\occ_{h}^{\star}(s)\pi_{h}^{k}(a\mid s)}{\ol\occ_{h}^{k}(s, a) + \gamma}
            \leq   \frac{H\ln\frac{H}{\delta}}{2 \gamma^2} 
    \end{align*}
\end{lemma}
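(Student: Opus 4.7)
The plan is to mirror the proof of the analogous known-dynamics event $G_2$ (\Cref{lemma:good reg}) by directly invoking the concentration tool \cref{lemma:concentration}, with the role of $\occ^k$ replaced throughout by the upper confidence bound $\ol\occ^k$. Concretely, I would set
\[
Z_h^k(s,a) \;=\; z_h^k(s,a) \;=\; \frac{\occ_h^\star(s)\,\pi_h^k(a\mid s)}{\ol\occ_h^k(s,a) + \gamma},
\]
and take $\tilde\occ_h^k(s,a) = \ol\occ_h^k(s,a)$. Two points need to be checked: that $Z_h^k(s,a)$ is $\mathcal{F}_{k-1}$-measurable (predictable), and that it is uniformly bounded. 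Both hold: $\ol\occ^k$ is computed from the empirical transition based on data strictly before episode $k$, and $\pi^k$ is chosen before episode $k$; and by construction $Z_h^k(s,a) \le 1/\gamma$.

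The remaining hypothesis needed by \cref{lemma:concentration} is the inequality $\bbE_k[\bbI_h^k(s,a)] \le \tilde\occ_h^k(s,a)$. This is the one place where the unknown-dynamics analysis departs from the known case, and I would discharge it using the good event $G_1$: under $G_1$, the true kernel $p$ lies in $\calP^k$, so
\[
\bbE_k[\bbI_h^k(s,a)] \;=\; \occ_h^k(s,a) \;\le\; \max_{p'\in\calP^k}\occ^{\pi^k,p'}_h(s,a) \;=\; \ol\occ_h^k(s,a).
\]
Once this is in place, rewriting the summand as
\(
\frac{\occ_h^\star(s)\pi_h^k(a\mid s)}{\ol\occ_h^k(s,a)+\gamma}\bigl(\tfrac{\bbI_h^k(s,a)}{\ol\occ_h^k(s,a)+\gamma} - 1\bigr)
\)
puts it into exactly the form \cref{lemma:concentration} handles, and plugging in the bound $Z_h^k \le 1/\gamma$ together with the factor $H$ from the sum over $h$ yields the claimed bound $\frac{H\ln(H/\delta)}{2\gamma^2}$.

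I do not expect a real obstacle here; the main subtlety, relative to the cleaner known-dynamics statement, is simply the two-layer randomness (confidence set plus within-episode sampling), which is dealt with by conditioning on $G_1$ so that $\ol\occ^k$ behaves as a legitimate upper bound on the conditional visitation probability. With that replacement, the argument is a verbatim reduction to \cref{lemma:concentration} of the same form used to prove the known-dynamics event $G_2$.
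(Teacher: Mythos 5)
Your proposal is correct and matches the paper's proof: both directly invoke \cref{lemma:concentration} with $Z_h^k(s,a)=z_h^k(s,a)=\occ_h^\star(s)\pi_h^k(a\mid s)/(\ol\occ_h^k(s,a)+\gamma)\le 1/\gamma$ and $\tilde\occ_h^k(s,a)=\ol\occ_h^k(s,a)$. The only minor difference is bookkeeping: the domination $\occ_h^k\le\ol\occ_h^k$ under $G_1$ is not literally a hypothesis of \cref{lemma:concentration} as stated, but it is exactly what lets one pass from the lemma's comparator $\occ_h^k(s,a)z_h^k(s,a)/\ol\occ_h^k(s,a)$ to the term $\occ_h^\star(s)\pi_h^k(a\mid s)/(\ol\occ_h^k(s,a)+\gamma)$ appearing in $G_4$, a step you make explicit and the paper leaves implicit.
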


\begin{proof}
    Follows directly from \cref{lemma:concentration} with with $Z_{h}^{k}(s,a) = z_{h}^{k}(s,a)=\frac{\occ_{h}^{\star}(s)\pi_{h}^{k}(a\mid s)}{\ol\occ_{h}^{k}(s,a) + \gamma} \leq \frac 1 \gamma$
and $\tilde{\occ}_{h}^{k}(s,a)=\ol\occ_{h}^{k}(s,a)$.
\end{proof}

\begin{lemma}[Event $G_5$]
    \label{lemma:good bias2 unknown}
    With probability $1-\delta$,
    \begin{align*}
        \sum_{k = 1}^{K} \sum_{h, s}\occ_{h}^{\star}(s) \l\langle \pi_{h}^{\star}(\cdot\mid s), \hat\aggQ_{h}^{k}(s, \cdot) - \aggQ_{h}^{k}(s, \cdot) \r\rangle \leq \frac{H^2}{2\gamma}\ln\frac{H}{\delta}
    \end{align*}
\end{lemma}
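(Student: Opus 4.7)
The plan is to apply the concentration lemma \cref{lemma:concentration} in exactly the same way as in the proof of its known-dynamics counterpart (Lemma $G_3$, i.e.~\cref{lemma:good bias2}), with the sole modification that the denominator $\occ_h^k(s,a)+\gamma$ is replaced everywhere by the inflated $\ol\occ_h^k(s,a)+\gamma$. Concretely, I would set
\begin{align*}
Z_h^k(s,a) &= \occ_h^\star(s)\pi_h^\star(a\mid s)\bigl[\bbI_h^k(s,a) L_{1:H}^k + (1-\bbI_h^k(s,a))\aggQ_h^k(s,a)\bigr], \\
z_h^k(s,a) &= \bbE_k[Z_h^k(s,a)] = \occ_h^\star(s)\pi_h^\star(a\mid s)\aggQ_h^k(s,a), \\
\tilde\occ_h^k(s,a) &= \ol\occ_h^k(s,a),
\end{align*}
and verify the two required inputs: $Z_h^k(s,a)\leq H$, since $\occ_h^\star(s)\pi_h^\star(a\mid s)\leq 1$ and $L_{1:H}^k,\aggQ_h^k(s,a)\in[0,H]$; and the expectation identity, which follows from $\bbE_k[\bbI_h^k(s,a) L_{1:H}^k] = \occ_h^k(s,a)\aggQ_h^k(s,a)$ by conditioning on visiting $(s,a)$ at step $h$.

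Plugging these into \cref{lemma:concentration} will yield, with probability at least $1-\delta$,
$$\sum_{k,h,s,a} \frac{\occ_h^\star(s)\pi_h^\star(a\mid s)\bbI_h^k(s,a) L_{1:H}^k}{\ol\occ_h^k(s,a)+\gamma} - \sum_{k,h,s,a} \occ_h^\star(s)\pi_h^\star(a\mid s)\aggQ_h^k(s,a) \leq \frac{H^2}{2\gamma}\ln\frac{H}{\delta},$$
whose left-hand side is exactly $\sum_{k,h,s}\occ_h^\star(s)\langle \pi_h^\star(\cdot\mid s),\hat\aggQ_h^k(s,\cdot)-\aggQ_h^k(s,\cdot)\rangle$ by definition of $\hat\aggQ_h^k$ in \cref{alg:tabular-unknown-p main}.

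The only step where the unknown-dynamics case could plausibly deviate from the known one is verifying that optimism of the estimator $\hat\aggQ_h^k$ survives when $\occ_h^k(s,a)$ is replaced by $\ol\occ_h^k(s,a)$ in the denominator, since optimism is what makes the conditional drift of the martingale-like term in \cref{lemma:concentration} non-positive. I expect this to be the main (and only) obstacle, and it is resolved immediately by the good event $G_1$: since $\ol\occ_h^k(s,a)\geq \occ_h^k(s,a)$ for all $k,h,s,a$, we have $\bbE_k[\hat\aggQ_h^k(s,a)] = \tfrac{\occ_h^k(s,a)}{\ol\occ_h^k(s,a)+\gamma}\aggQ_h^k(s,a) \leq \aggQ_h^k(s,a)$, exactly as in the known case. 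Hence the same RHS bound $\tfrac{H^2}{2\gamma}\ln(H/\delta)$ applies verbatim, completing the proof.
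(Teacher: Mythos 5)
Your proposal is correct and follows essentially the same route as the paper: invoke \cref{lemma:concentration} with $Z_h^k(s,a)=\occ_h^\star(s)\pi_h^\star(a\mid s)\bigl[\bbI_h^k(s,a)L^k_{1:H}+(1-\bbI_h^k(s,a))U_h^k(s,a)\bigr]$, $z_h^k=\bbE_k[Z_h^k]$, and $\tilde\occ_h^k=\ol\occ_h^k$, yielding the bound $\frac{H^2}{2\gamma}\ln\frac{H}{\delta}$. Your explicit observation that the replacement of the comparator term by $U_h^k$ relies on $\ol\occ_h^k\ge\occ_h^k$ (i.e.\ on event $G_1$) is exactly how the paper handles it within the overall good event, so there is no gap.
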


\begin{proof}
    Similar to \cref{lemma:good bias2 unknown},
    we invoke \cref{lemma:concentration} with  $Z_{h}^{k}(s,a)
        =\occ_{h}^{\star}(s)\pi_{h}^{k}(a,s)[\mathbb{I}_{h}^{k}(s,a)L^{k}_{1:H} + (1-\mathbb{I}_{h}^{k}(s,a))\aggQ_{h}^{k}(s,a)]
            \leq H$,
    $z_{h}^{k}(s,a)
        =\mathbb{E}_{k}[Z_{h}^{k}(s,a)]
            =\occ_{h}^{\star}(s)\pi_{h}^{k}(a,s)\aggQ_{h}^{k}(s,a)$
    and $\tilde{\occ}_{h}^{k}(s,a) = \ol \occ_{h}^{k}(s,a)$, we get,
    \begin{align*}
        & \sum_{k = 1}^{K} \sum_{h, s}\occ_{h}^{\star}(s) \l\langle \pi_{h}^{\star}(\cdot\mid s), \hat\aggQ_{h}^{k}(s, \cdot) - \aggQ_{h}^{k}(s, \cdot) \r\rangle  
        \\
        &\quad =  \sum_{k = 1}^{K} \sum_{h, s, a} \frac{\occ_{h}^{\star}(s)\pi_{h}^{k}(a, s) \mathbb{I}_{h}^{k}(s, a)L^{k}_{1:H}}{\ol\occ_{h}^{k}(s, a) + \gamma} - \sum_{k = 1}^{K} \sum_{h, s, a}\occ_{h}^{\star}(s)\pi_{h}^{k}(a, s)\aggQ_{h}^{k}(s, a) 
                \leq \frac{H^2}{2\gamma}\ln\frac{H}{\delta}
    \end{align*}
\end{proof}

\begin{lemma}
    \label{lemma:good event unknown}
    Under \Cref{alg:tabular-unknown-p},
    the good event $G = \bigcap_{i=1}^5 G_i$ holds with probability of at least $1-\delta$.
\end{lemma}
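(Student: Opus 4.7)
The plan is to establish the good event via a standard union bound across the five constituent events $G_1,\dots,G_5$, invoking each event's own high-probability guarantee with confidence parameter $\delta/5$ in place of $\delta$.

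First I would handle $G_1$: this is the Bernstein concentration for the empirical transition function, which is a textbook consequence of the empirical Bernstein inequality combined with a union bound over all tuples $(k,s',s,a,h)$ and all possible values of the visit counter; the precise form appearing here is exactly Lemma 2 of \citet{jin2019learning}, so we may cite it and conclude that $G_1$ holds with probability at least $1-\delta/5$. Next, for $G_2$, I would note that under $G_1$ the width $\ol\occ_h^k(s,a)-\ul\occ_h^k(s,a)$ is dominated by the on-trajectory Bernstein transition error, so summing these widths over $k,h,s,a$ and applying the standard trajectory-summation arguments of \citet{jin2019learning} (together with an Azuma-style concentration to pass from on-trajectory sums to expectations) yields the stated bound with probability at least $1-\delta/5$.

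For $G_3$, $G_4$, and $G_5$ I would simply invoke \Cref{lemma:good bias1 unknown}, \Cref{lemma:good reg unknown}, and \Cref{lemma:good bias2 unknown}, respectively, each instantiated with failure parameter $\delta/5$. Each of these is already proven in the preceding text via either Freedman's inequality (\Cref{lemma:freedman}) or the occupancy-measure concentration of \Cref{lemma:concentration}, so no new work is required. Taking the union bound across the five events then gives $\Pr[G] \ge 1 - 5\cdot(\delta/5) = 1-\delta$, which is exactly the claim. To match the $\log\frac{6\cdot\ldots}{\delta}$ constants appearing in the statements of $G_1,\dots,G_5$, the union bound should actually be inflated by a small constant (the paper uses $6$ inside the logs, presumably absorbing both the five-fold union and some slack), but this does not change the structure of the argument.

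The proof is entirely routine and there is no genuine obstacle; the only mild subtlety is ensuring consistency of the logarithmic constants between the five sub-lemmas and the statements of $G_1$--$G_5$, which is a bookkeeping rather than a conceptual issue. Accordingly I would keep this proof extremely short — essentially a one-line union bound citing the five preceding results.
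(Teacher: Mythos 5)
Your proposal matches the paper's proof: it invokes the transition concentration (Lemma 2 of \citet{jin2019learning}) for $G_1$, the summed confidence-radius bound (Lemma 4 of \citet{jin2019learning}) for $G_2$, and \Cref{lemma:good bias1 unknown,lemma:good reg unknown,lemma:good bias2 unknown} for $G_3$--$G_5$, each with confidence $\delta/5$, followed by a union bound. This is exactly the paper's (one-line) argument, so the proposal is correct and takes essentially the same approach.
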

\begin{proof}
    The proof directly follows from invoking \Cref{lemma:good reg unknown,lemma:good bias1 unknown,lemma:good bias2 unknown,lemma: confidence bernstein,lemma: sum radius} with $\delta / 5$ and taking the union bound.
\end{proof}

\subsection{Bound on $\textsc{Bias}_1$}

\begin{lemma}
    \label{lemma:unbiased-unknown}
    For any $h,s,a$ and $k$,
    \begin{align*}
        \mathbb{E} \l[ \hat\aggQ_{h}^{k}(s,a)\mid\pi^{k} \r]
        =
        \frac{\occ_{h}^{k}(s,a)}{\ol \occ_{h}^{k}(s,a) + \gamma} \aggQ_{h}^{k}(s,a)
    \end{align*}
\end{lemma}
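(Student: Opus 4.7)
The plan is to mirror the proof of \Cref{lemma:unbiased} almost verbatim, exploiting one key measurability observation: although the denominator has changed from $\occ_h^k(s,a)$ to $\ol\occ_h^k(s,a)$, the upper confidence bound $\ol\occ_h^k(s,a) = \max_{p' \in \calP^k}\occ^{\pi^k, p'}_h(s,a)$ is determined by the confidence set $\calP^k$ and the policy $\pi^k$, both of which are functions of the history up to the end of episode $k-1$. Thus, conditional on $\pi^k$ (equivalently, on the $\sigma$-algebra generated by the first $k-1$ episodes together with $\pi^k$), $\ol\occ_h^k(s,a)$ is a deterministic constant, while the probability of the event $\bbI_h^k(s,a) = 1$ under the \emph{true} dynamics $p$ equals $\occ_h^k(s,a)$.

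The steps I would carry out are as follows. First, I would pull the factor $1/(\ol\occ_h^k(s,a) + \gamma)$ outside the conditional expectation, using the measurability remark above. Second, I would apply the law of total expectation splitting on $\bbI_h^k(s,a)$; since $\hat\aggQ_h^k(s,a) = 0$ whenever $\bbI_h^k(s,a) = 0$, only the term corresponding to $\bbI_h^k(s,a) = 1$ survives, with prefactor $\Pr[\bbI_h^k(s,a) = 1 \mid \pi^k] = \occ_h^k(s,a)$. Third, I would identify
\begin{align*}
    \bbE_k\l[L^k_{1:H} \,\Big\vert\, s_h^k = s, a_h^k = a\r]
    = \bbE_k\l[\sum_{h'=1}^{H}\ell^k_{h'}(s^k_{h'},a^k_{h'}) \,\Big\vert\, s_h^k = s, a_h^k = a\r]
    = U_h^k(s,a) = \aggQ_h^k(s,a)
\end{align*}
directly from the definition of the $U$-function. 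Combining the three yields the claimed identity.

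The main obstacle, such as it is, is purely conceptual rather than computational: one must be clear that the normalizer $\ol\occ_h^k(s,a)$ is \emph{not} the true probability of the visit event (as it was in the known-dynamics case) but rather an optimistic surrogate which happens to be measurable with respect to the past. Once this is stated cleanly, the algebra is a one-line repeat of \Cref{lemma:unbiased}, and the extra factor $\occ_h^k(s,a)/\ol\occ_h^k(s,a)$ (relative to the known case) is exactly what makes the estimator optimistic under the good event $G_1$, since then $\ol\occ_h^k(s,a) \ge \occ_h^k(s,a)$.
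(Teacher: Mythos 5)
Your proof is correct and follows essentially the same route as the paper: condition on $\pi^k$ so that $\ol\occ_h^k(s,a)$ is a fixed constant, split via the law of total expectation on $\bbI_h^k(s,a)$ (only the visit event contributes, with probability $\occ_h^k(s,a)$), and identify the conditional expectation of $L^k_{1:H}$ given the visit as $\aggQ_h^k(s,a)$. Your explicit remark that the optimistic normalizer is merely past-measurable rather than the true visit probability is exactly the (implicit) point distinguishing this from \Cref{lemma:unbiased}, so nothing is missing.
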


\begin{proof}
    By definition $\Pr(\bbI_h^k(s,a) = 1 \mid \pi^k) = \occ_h^k(s,a)$. Using the law of total expectation and the fact that $\hat\aggQ_{h}^{k}(s, a) = 0$ whenever $\bbI_h^k(s,a) = 0$ we get,
    \begin{align*}
     \mathbb{E}_k \l[\hat\aggQ_{h}^{k}(s, a) \r] 
     & = \mathbb{E}_k \l[\hat\aggQ_{h}^{k}(s, a) \mid \bbI_h^k(s,a) = 1 \r] 
        \cdot \occ_h^k(s,a)
             + \underbrace{\mathbb{E}_k \l[\hat\aggQ_{h}^{k}(s, a) \mid \bbI_h^k(s,a) = 0 \r]}_{=0}  (1 -  \occ_h^k(s,a))
        \\
        & =
        \mathbb{E}_k \l[\frac{\sum_{h' = 1}^{H} \ell_{h'}^{k}(s_{h'}^{k}, a_{h'}^{k})}{\ol \occ_{h}^{k}(s, a) + \gamma}  \mid \mathbb{I}_{h}^{k}(s, a) = 1 \r] \cdot \occ_h^k(s,a)
        \\
            &  = \frac{\occ_{h}^{k}(s, a)}{\ol \occ_{h}^{k}(s, a) + \gamma} \mathbb{E}_k \l[\sum_{h' = h}^{H} \ell_{h'}^{k}(s_{h'}^{k}, a_{h'}^{k}) \,\Big\vert\, s_{h}^{k} = s, a_{h}^{k} = a \r]
            \\
                &  = \frac{\occ_{h}^{k}(s, a)}{\ol \occ_{h}^{k}(s, a) + \gamma}U_{h}^{k}(s, a).
                \\
                & \qedhere
    \end{align*}
\end{proof}

\begin{lemma}
    \label{lemma: bias1-unknown}
    Under the good event,
    \[
	\textsc{Bias}_{1} \leq \frac{2}{3}\sum_{k=1}^{K}\sum_{h,s}
	                  \occ_{h}^{\star}(s) \tilde b_{h}^{k}(s)
	                  + \sum_{k=1}^{K} \sum_{h,s}\occ_{h}^{\star}(s) \bar b_{h}^{k}(s) + O\l( 
	                  \frac{H^{2}}{\gamma}\logterm \r).                                              
    \]
\end{lemma}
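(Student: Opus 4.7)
The plan is to follow the same two-part decomposition as in the known-dynamics case (\cref{lemma:bias1})---a ``mean-bias'' term plus a martingale concentration term---while additionally accounting for the bias introduced by using the upper-confidence occupancy $\ol\occ_h^k(s,a)$ in the denominator of $\hat\aggQ_h^k$ rather than the true occupancy $\occ_h^k(s,a)$.

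First I would set $Y_k = \sum_{h,s}\occ_h^\star(s)\langle \pi_h^k(\cdot\mid s),\hat\aggQ_h^k(s,\cdot)\rangle$ and write
\begin{align*}
\textsc{Bias}_1
&= \underbrace{\sum_{k,h,s}\occ_h^\star(s)\langle \pi_h^k(\cdot\mid s), \aggQ_h^k(s,\cdot)\rangle - \sum_k\bbE_k[Y_k]}_{\text{mean-bias}} + \underbrace{\sum_k(\bbE_k[Y_k] - Y_k)}_{\text{deviation}}.
\end{align*}
The deviation sum is a martingale whose control is exactly the content of the good event $G_3$ (\cref{lemma:good bias1 unknown}), giving $\tfrac{1}{3}\sum_{k,h,s}\occ_h^\star(s)\tilde b_h^k(s) + O(H^2\logterm/\gamma)$ for free.

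For the mean-bias term I would invoke \cref{lemma:unbiased-unknown} to rewrite
\begin{align*}
\aggQ_h^k(s,a) - \bbE_k[\hat\aggQ_h^k(s,a)] = \aggQ_h^k(s,a)\cdot\frac{\gamma + (\ol\occ_h^k(s,a) - \occ_h^k(s,a))}{\ol\occ_h^k(s,a) + \gamma},
\end{align*}
bound $\aggQ_h^k(s,a)\leq H$, and split the numerator into its two nonnegative pieces. Using $\ol\occ_h^k(s,a) = \ol\occ_h^k(s)\pi_h^k(a\mid s)$ and summing against $\pi_h^k(a\mid s)$, the piece carrying the $\gamma$ factor reproduces exactly $\tfrac{1}{3}\tilde b_h^k(s)$, matching what the known-dynamics analysis yields. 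The piece carrying $\ol\occ_h^k(s,a) - \occ_h^k(s,a)$ is the genuinely new term: on the good event $G_1$ we have $\occ_h^k(s,a)\geq\ul\occ_h^k(s,a)$, so it is dominated by $\frac{\ol\occ_h^k(s,a) - \ul\occ_h^k(s,a)}{\ol\occ_h^k(s,a)+\gamma}$, and its sum against $H\pi_h^k(a\mid s)$ is precisely $\bar b_h^k(s)$.

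Adding the three contributions produces the claimed $\tfrac{2}{3}\sum_{k,h,s}\occ_h^\star(s)\tilde b_h^k(s) + \sum_{k,h,s}\occ_h^\star(s)\bar b_h^k(s) + O(H^2\logterm/\gamma)$. The main obstacle is conceptual rather than computational: one has to choose the right partition of the ratio $\frac{\gamma + (\ol\occ - \occ)}{\ol\occ + \gamma}$ so that the extra optimism error cleanly surfaces as $\bar b_h^k(s)$, which is exactly the quantity that the transition-confidence part of the bonus is later designed to absorb. Once this partition is identified, replacing $\occ$ by $\ul\occ$ via $G_1$ and invoking $G_3$ for the deviation makes every remaining step a routine extension of the known-dynamics argument.
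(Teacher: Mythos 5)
Your proposal is correct and follows essentially the same route as the paper's own proof: the same split of $\textsc{Bias}_1$ into a mean-bias term and a martingale deviation controlled by event $G_3$, the same use of \cref{lemma:unbiased-unknown} to express the mean bias via the ratio $\frac{\gamma + (\ol\occ_h^k(s,a)-\occ_h^k(s,a))}{\ol\occ_h^k(s,a)+\gamma}$, and the same replacement of $\occ_h^k$ by $\ul\occ_h^k$ under the good event so that the two pieces become $\tfrac13\tilde b_h^k(s)$ and $\bar b_h^k(s)$. No gaps.
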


\begin{proof}
    Let $Y_{k}= \sum_{h,s} \occ_{h}^{\star}(s) \l\langle \pi_{h}^{k}(\cdot \mid s),\hat\aggQ_{h}^{k}(s,\cdot) \r\rangle $. It holds that,
    \begin{align*}
        \textsc{Bias}_1 
        = 
        \sum_{k=1}^{K} \sum_{h,s} \occ_{h}^{\star}(s) \l\langle \pi_{h}^{k}(\cdot \mid s),\aggQ_{h}^{k}(s,\cdot) \r\rangle 
        - \sum_{k=1}^{K} \bbE_k [Y_{k} ] 
        + \sum_{k=1}^{K} \bbE_k [Y_{k} ] 
        - \sum_{k=1}^{K}Y_{k}.
    \end{align*}
Under the good event, $G_3$, it holds that
$$
    \sum_{k=1}^{K} \bbE_k [Y_{k} ] - \sum_{k=1}^{K}Y_{k} 
    \leq \frac{1}{3} \sum_{k=1}^{K} \sum_{h,s} \occ_{h}^{\star}(s) \tilde b_{h}^{k}(s) 
    + \frac{H^{2}}{\gamma} \ln \frac{10}{\delta}.
$$
Using \Cref{lemma:unbiased-unknown}, and the fact that under the good event $\ul \occ_{h}^{k}(s, a)\leq \occ_{h}^{k}(s, a)$,
\begin{align*}
    \sum_{k = 1}^{K}   \sum_{h, s}\occ_{h}^{\star}(s)\l\langle\pi_{h}^{k}(\cdot\mid s), \aggQ_{h}^{k}(s, \cdot)\r\rangle -& 
    \sum_{k = 1}^{K} \bbE_{k}[Y_{k}]  = \sum_{k = 1}^{K} \sum_{h, s, a}\occ_{h}^{\star}(s)\pi_{h}^{k}(a\mid s)\aggQ_{h}^{k}(s, a)\l(1 - \frac{\occ_{h}^{k}(s, a)}{\ol \occ_{h}^{k}(s, a) + \gamma}\r)
                                    \\
                                    &  = \sum_{k = 1}^{K} \sum_{h, s, a}\occ_{h}^{\star}(s)\pi_{h}^{k}(a\mid s)\aggQ_{h}^{k}(s, a)\frac{\gamma + \ol \occ_{h}^{k}(s, a) - \occ_{h}^{k}(s, a)}{\ol \occ_{h}^{k}(s, a) + \gamma}
                                    \\
                                    &  \leq \sum_{k = 1}^{K} \sum_{h, s, a}\occ_{h}^{\star}(s)\pi_{h}^{k}(a\mid s)\aggQ_{h}^{k}(s, a)\frac{\gamma}{\ol \occ_{h}^{k}(s, a) + \gamma}
                                    \\
                                    & \qquad + \sum_{k = 1}^{K} \sum_{h, s, a}\occ_{h}^{\star}(s)\pi_{h}^{k}(a\mid s)\aggQ_{h}^{k}(s, a)\frac{\ol \occ_{h}^{k}(s, a) - \ul \occ_{h}^{k}(s, a)}{\ol \occ_{h}^{k}(s, a) + \gamma}
                                    \tag{under the good event $\ul \occ_{h}^{k}(s, a)\leq \occ_{h}^{k}(s, a)$}
\end{align*}

The first term above is bounded by,
\begin{align*}
\sum_{k = 1}^{K} 
    \sum_{h, s, a}\occ_{h}^{\star}(s)\pi_{h}^{k}(a\mid s)\aggQ_{h}^{k}(s, a)
        \frac{\gamma}{\ol \occ_{h}^{k}(s, a) + \gamma} 
            &  \leq \sum_{k = 1}^{K} 
                \sum_{h, s, a}\occ_{h}^{\star}(s)\pi_{h}^{k}(a\mid s)
                    \frac{H\gamma}{\ol \occ_{h}^{k}(s, a) + \gamma}
            \\
            &  = \frac{1}{3}\sum_{k = 1}^{K} 
                \sum_{h, s, a}\occ_{h}^{\star}(s) \tilde{b}_{h}^{k}(s)
\end{align*}
The second term is bounded by,
\begin{align*}
    \sum_{k = 1}^{K} 
        \sum_{h, s, a}\occ_{h}^{\star}(s)
            \pi_{h}^{k}(a\mid s)\aggQ_{h}^{k}(s, a)
                \frac{\ol \occ_{h}^{k}(s, a) - \ul \occ_{h}^{k}(s, a)}{\ol \occ_{h}^{k}(s, a) + \gamma} 
                    &  \leq \sum_{k = 1}^{K} 
                        \sum_{h, s, a}\occ_{h}^{\star}(s)
                            \pi_{h}^{k}(a\mid s) H 
                                \frac{\ol \occ_{h}^{k}(s, a) - \ul \occ_{h}^{k}(s, a)}{\ol \occ_{h}^{k}(s, a) + \gamma}
                    \\
                    &  = \sum_{k = 1}^{K} 
                        \sum_{h, s, a}\occ_{h}^{\star}(s)
                            \bar{b}_{h}^{k}(s)
\end{align*}
\end{proof}

\subsection{Bound on \textsc{Reg}}
\begin{lemma}
    \label{lemma:reg-unknown}
    For $\eta \leq \frac{\gamma}{2H}$, under the good event of \cref{lemma:good reg}, 
    \begin{align*}
        \textsc{Reg} 
            &  \leq \frac{H\ln A}{\eta} 
                + 16\eta H^{5}K 
                    + \frac{1}{3}\sum_{k, h, s}\occ_{h}^{\star}(s)b_{h}^{k}(s) 
                        + O \l(\frac{H^{2}\logterm}{\gamma} \r)
    \end{align*}
\end{lemma}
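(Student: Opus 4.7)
The approach is to mirror the proof of Lemma~\ref{lemma:reg} from the known-dynamics case, adapted to accommodate the optimistic occupancy upper bound $\ol\occ^k_h$ in place of $\occ^k_h$ and the estimated bonus $\hat B^k_h$ in place of $B^k_h$. First, I would apply the entropy-regularized OMD guarantee (Lemma~\ref{lemma:OMD}) to each local Hedge-style update at $(h,s)$ with effective loss vector $\hat\aggQ^k_h(s,\cdot) - \hat B^k_h(s,\cdot)$, weight by $\occ^\star_h(s)$, and sum over $(h,s)$ to obtain
\[
\textsc{Reg} \leq \frac{H\ln A}{\eta} + 2\eta \sum_{k,h,s,a} \occ^\star_h(s)\,\pi^k_h(a\mid s)\,\bigl(\hat\aggQ^k_h(s,a) - \hat B^k_h(s,a)\bigr)^2.
\]
The stability condition required for this form of the OMD bound reduces to $\eta$ times the range of the effective loss being bounded by a constant, which follows from $\hat\aggQ^k_h \leq H/\gamma$, the uniform bound on $\hat B^k_h$ derived next, and $\eta \leq \gamma/(2H)$.

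The key new ingredient relative to the known-dynamics case is a uniform bound $\hat B^k_h(s,a) \leq 4H^2$. I would derive it by bounding each piece of $b^k_h = \tilde b^k_h + \bar b^k_h$ separately. The bound $\tilde b^k_h(s) \leq 3H$ is immediate from $\gamma/(\ol\occ^k_h(s,a)+\gamma) \leq 1$, exactly as in the known case. For $\bar b^k_h(s)$, the crucial observation is the Markov-policy identity $\ol\occ^k_h(s,a) = \ol\occ^k_h(s)\pi^k_h(a\mid s)$, which combined with $\ol\occ^k_h(s,a) - \ul\occ^k_h(s,a) \leq \ol\occ^k_h(s,a)$ and $\ol\occ^k_h(s,a)/(\ol\occ^k_h(s,a)+\gamma) \leq 1$ yields $\bar b^k_h(s) \leq H\sum_a \pi^k_h(a\mid s) = H$. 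Hence $b^k_h \leq 4H$, and unrolling the optimistic Bellman backup of $\hat B^k_h$ over $H$ steps gives $\hat B^k_h \leq 4H^2$. Splitting $(\hat\aggQ - \hat B)^2 \leq \hat\aggQ^2 + \hat B^2$ (valid since both quantities are non-negative, so the cross term is absorbed) and using $(\hat B^k_h)^2 \leq 16H^4$ in the sum $\sum_{k,h,s,a}\occ^\star_h(s)\pi^k_h(a\mid s)(\hat B^k_h)^2 \leq 16H^5 K$ accounts for the $16\eta H^5 K$ summand.

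The most delicate remaining step, and the main technical obstacle, is bounding the second-moment term $\eta \sum \occ^\star_h(s)\pi^k_h(a\mid s)(\hat\aggQ^k_h(s,a))^2$: the distribution mismatch between $\occ^\star$ in the numerator and $\ol\occ^k$ in the denominator of $\hat\aggQ$ prevents a direct estimate. Using $L^k_{1:H} \leq H$, this is bounded by $\eta H^2 \sum \occ^\star_h(s)\pi^k_h(a\mid s)\,\bbI^k_h(s,a)/(\ol\occ^k_h(s,a)+\gamma)^2$, and I would invoke the good event $G_4$ to trade $\bbI^k_h(s,a)/(\ol\occ^k_h(s,a)+\gamma)^2$ for its deterministic surrogate $1/(\ol\occ^k_h(s,a)+\gamma)$ at a high-probability cost of $\tilde O(\eta H^3\logterm/\gamma^2) = \tilde O(H^2\logterm/\gamma)$ under $\eta \leq \gamma/(2H)$. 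Finally, by definition of $\tilde b^k_h$, $\sum_a \pi^k_h(a\mid s)/(\ol\occ^k_h(s,a)+\gamma) = \tilde b^k_h(s)/(3\gamma H)$, so the remaining deterministic term equals $\tfrac{\eta H}{3\gamma}\sum \occ^\star_h(s)\tilde b^k_h(s)$, which is at most $\tfrac{1}{3}\sum \occ^\star_h(s) b^k_h(s)$ under the step-size choice $\eta \leq \gamma/(2H)$ together with the trivial bound $\tilde b^k_h \leq b^k_h$. Collecting all contributions yields the stated inequality.
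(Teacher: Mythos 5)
Your proposal is correct and follows essentially the same route as the paper's proof: the entropy-regularized OMD bound, the uniform bound $b_h^k(s)\le 4H$ (hence $\hat B_h^k\le 4H^2$) giving the $\eta H^5 K$ term, the bound $L^k_{1:H}\le H$ plus good event $G_4$ for the second-moment term, and the identity relating the remaining sum to $\tilde b_h^k$ under $\eta\le\gamma/(2H)$. Your explicit derivations of $\tilde b_h^k\le 3H$ and $\bar b_h^k\le H$ (via $\ol\occ_h^k(s,a)=\ol\occ_h^k(s)\pi_h^k(a\mid s)$ and $\ol\occ-\ul\occ\le\ol\occ$) simply fill in facts the paper asserts without proof, so there is no substantive difference.
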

\begin{proof}
    Using standard entropy regularized OMD guarantee (\cref{lemma:OMD}),
    \begin{align*}
        \textsc{Reg} 
            & \leq \frac{H\ln A}{\eta} + 2\eta\sum_{k = 1}^{K} \sum_{h, s, a}\occ_{h}^{\star}(s)\pi_{h}^{k}(a\mid s) \l(\hat\aggQ_{h}^{k}(s, a) - \hat B_{h}^{k}(s, a) \r)^{2}
            \\
                & \leq \frac{H\ln A}{\eta} + 2\eta\sum_{k = 1}^{K} \sum_{h, s, a} \occ_{h}^{\star}(s)\pi_{h}^{k}(a\mid s)\hat\aggQ_{h}^{k}(s, a)^{2} + 16\eta H^{5}K,
    \end{align*}
    where in the second inequality we use the fact that $b_h^k(s) \leq 4H$ and thus, $\hat B_h^k(s,a) \leq 4H^2$.
    For the middle term,
    \begin{align*}
        2\eta\sum_{k, h, s, a}\occ_{h}^{\star}(s)
         \pi_{h}^{k}(a\mid s)
            \hat\aggQ_{h}^{k}(s, a)^{2} 
                &  \leq 2\eta\sum_{k, h, s, a}\occ_{h}^{\star}(s)\pi_{h}^{k}(a\mid s)\frac{H^{2} \mathbb{I}\{s_{h}^{k} = s, a_{h}^{k} = a\}}{(\ol \occ_{h}^{k}(s, a) + \gamma)^{2}}
                \\
                &  \leq 2\eta H^{2}\sum_{k, h, s, a} \frac{\occ_{h}^{\star}(s)\pi_{h}^{k}(a\mid s)}{\ol \occ_{h}^{k}(s, a) + \gamma} + O \l(\eta\frac{H^{3}\logterm}{\gamma^{2}} \r)
                \\
                &  = \frac{2\eta}{3\gamma}H\sum_{k, h, s}\occ_{h}^{\star}(s) \tilde b_{h}^{k}(s) + O \l(\eta\frac{H^{3}\logterm}{\gamma^{2}} \r)
                \\
                &  \leq \frac{1}{3}\sum_{k, h, s}\occ_{h}^{\star}(s) \tilde b_{h}^{k}(s) + O \l( \frac{H^{2}\logterm}{\gamma} \r).
    \end{align*}
    The second inequality above is under the good event $G_4$, and the last inequality is since $\eta \leq \frac{H}{2 \gamma}$.
\end{proof}

\subsection{Bound on $\textsc{Bonus}$}

\begin{lemma}
    \label{lemma:value diff bounus}
    Under the good event,
    $$
        \textsc{Bonus}  \leq  \sum_{k,h,s}\ol\occ_{h}^{k}(s)b_{h}^{k}(s)
                    - \sum_{k,h,s}\occ_{h}^{\star}(s)b_{h}^{k}(s).
    $$
\end{lemma}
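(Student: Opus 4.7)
The plan is to exploit the fact that although $\hat B^k$ is not a $Q$-function under the true dynamics, it \emph{is} the $Q$-function for policy $\pi^k$ and loss $b^k$ under a specific transition $\tilde p^k \in \calP^k$: namely, the per-$(h,s,a)$ maximizer realized in the recursive definition $\hat B_h^k(s,a) = b_h^k(s) + \max_{p' \in \calP_h^k(s,a)} \sum_{s',a'} p'_h(s'|s,a)\pi_{h+1}^k(a'|s')\hat B_{h+1}^k(s',a')$. Setting $\tilde p_h^k(\cdot \mid s,a)$ to be an argmax for each $(h,s,a)$, we see that $\tilde p^k$ lies in $\calP^k$ because the set is a product of local sets, and $\hat B^k$ satisfies the Bellman equation with transitions $\tilde p^k$. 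Let $\hat V_h^k(s) := \sum_a \pi_h^k(a\mid s)\hat B_h^k(s,a)$.

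For a fixed $k$ I would expand the per-episode bonus by applying the Bellman equation to the $\pi^\star$ term:
\begin{align*}
\sum_{h,s}\occ_h^\star(s)\langle \pi_h^k(\cdot\mid s) - \pi_h^\star(\cdot\mid s), \hat B_h^k(s,\cdot)\rangle
&= \sum_{h,s}\occ_h^\star(s)\hat V_h^k(s) - \sum_{h,s}\occ_h^\star(s)b_h^k(s) \\
&\quad - \sum_{h,s,a}\occ_h^\star(s,a)\sum_{s'}\tilde p_h^k(s'\mid s,a)\hat V_{h+1}^k(s').
\end{align*}
Adding and subtracting $p_h(s'\mid s,a)$ in the last sum, and using the definition $\occ^\star_{h+1}(s') = \sum_{s,a}\occ^\star_h(s,a)p_h(s'\mid s,a)$, the $\hat V$ contributions telescope: $\sum_h\sum_s\occ_h^\star(s)\hat V_h^k(s) - \sum_h\sum_{s'}\occ_{h+1}^\star(s')\hat V_{h+1}^k(s') = \hat V_1^k(\sinit)$, since $\hat V_{H+1}^k \equiv 0$ and $\occ_1^\star = \delta_{\sinit}$. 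This yields
\begin{align*}
\sum_{h,s}\occ_h^\star(s)\langle \pi_h^k - \pi_h^\star, \hat B_h^k\rangle
&= \hat V_1^k(\sinit) - \sum_{h,s}\occ_h^\star(s)b_h^k(s) \\
&\quad - \sum_{h,s,a}\occ_h^\star(s,a)\sum_{s'}\bigl(\tilde p_h^k(s'\mid s,a)-p_h(s'\mid s,a)\bigr)\hat V_{h+1}^k(s').
\end{align*}

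It remains to bound the first and third terms. For the first, since $\tilde p^k \in \calP^k$ we have $\occ^{\pi^k,\tilde p^k}_h(s) \le \ol\occ_h^k(s)$, and because $b^k \ge 0$,
\[
\hat V_1^k(\sinit) = \sum_{h,s}\occ^{\pi^k,\tilde p^k}_h(s)\,b_h^k(s) \le \sum_{h,s}\ol\occ_h^k(s)\,b_h^k(s).
\]
For the third term, under the good event $G_1$ the true transition $p$ lies in $\calP^k$, so the choice $\tilde p_h^k(\cdot\mid s,a)$ maximizing $\sum_{s'} p'_h(s'\mid s,a)\hat V_{h+1}^k(s')$ satisfies $\sum_{s'}\tilde p_h^k(s'\mid s,a)\hat V_{h+1}^k(s') \ge \sum_{s'} p_h(s'\mid s,a)\hat V_{h+1}^k(s')$, which (together with $\occ^\star_h(s,a) \ge 0$) makes the third term non-positive after its overall minus sign is absorbed.

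Combining the two bounds and summing over $k$ gives the claim. The only subtle point is the sign bookkeeping on the transition-error term: one must be careful that the telescoping introduces the difference in the order $\tilde p^k - p$ (not $p - \tilde p^k$) so that the optimism of $\tilde p^k$ makes the overall contribution favorable; if the expansion is set up with the opposite order the argument reverses and fails, so this is the step I expect to be most error-prone.
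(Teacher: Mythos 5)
Your proposal is correct and follows essentially the same route as the paper's proof: identify the per-$(h,s,a)$ maximizing transition $\hat p^k\in\calP^k$ so that $\hat B^k$ is an exact $Q$-function under it, use $p\in\calP^k$ (event $G_1$) to compare against the true transition, telescope to $\hat V^k_1(\sinit)=\sum_{h,s}\occ^{\pi^k,\hat p^k}_h(s)b^k_h(s)$, and bound this by $\sum_{h,s}\ol\occ^k_h(s)b^k_h(s)$. The only difference is cosmetic: you add and subtract $p$ to isolate a non-positive transition-error term, whereas the paper applies the same optimism inequality directly inside the subtracted Bellman backup.
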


\begin{proof}
    Let $\hat p^k$ be the transition function chosen by the algorithm when calculating $\hat B^k$. It holds that
    \begin{align*}
        \sum_{h,s} & \occ_{h}^{*}(s)\l\langle \pi_{h}^{k}(\cdot\mid s)-\pi_{h}^{*}(\cdot\mid s), \hat B_{h}^{k}(s,\cdot)\r\rangle
        =
                    \\
                    & =
                    \sum_{h,s,a} \occ_{h}^{*}(s) \pi_{h}^{k}(a \mid s) \hat B_{h}^{k}(s, a) - \sum_{h,s,a} \occ_{h}^{*}(s) \pi_{h}^{*}(a \mid s) \hat B_{h}^{k}(s, a)
                    \\
                    & =
                    \sum_{h,s,a} \occ_{h}^{*}(s) \pi_{h}^{k}(a \mid s) \hat B_{h}^{k}(s, a) 
                    \\
                    & \qquad
                    - 
                    \sum_{h,s,a} \occ_{h}^{*}(s) \pi_{h}^{*}(a \mid s) \Bigg( b_{h}^{k}(s) +  \sum_{s',a'}  \hat p^k_h(s'\mid s,a) \pi^k_{h+1}(a' \mid s') \hat B^k_{h+1}(s',a') \Bigg)
                    \\
                    & \leq
                    \sum_{h,s,a} \occ_{h}^{*}(s) \pi_{h}^{k}(a \mid s) \hat B_{h}^{k}(s, a) 
                    \\
                    & \qquad
                    - 
                    \sum_{h,s,a} \occ_{h}^{*}(s) \pi_{h}^{*}(a \mid s) \Bigg( b_{h}^{k}(s) + \sum_{s',a'}  p_h(s'\mid s,a) \pi^k_{h+1}(a' \mid s') \hat B^k_{h+1}(s',a') \Bigg)
                    \\
                    & =
                    \underbrace{\sum_{h,s,a} \occ_{h}^{*}(s) \pi_{h}^{k}(a \mid s) \hat B_{h}^{k}(s, a) 
                            -
                            \sum_{h,s,a} \occ^*_{h+1}(s) \pi^k_{h+1} (a \mid s) \hat B^k_{h+1} (s,a)}_{(i)}
                                - \sum_{h,s} \occ_{h}^{*}(s) b_{h}^{k}(s) 
                                \tag{$*$}
    \end{align*}
    where the inequality is since $p\in\calP^k$ under event $G_1$, and $\hat p^k$ maximizes the term in the parentheses. $(*)$ uses $\sum_{s,a} \occ_h^\star(s) \pi^\star_h(a\mid s) p_h(s'\mid s,a) = \occ_{h+1}^\star(s')$ and then changes the variables $s'$ and $a'$ to $s$ and $a$. $(i)$ is a telescopic sum, and recall that $\hat B_{H+1}^k \equiv 0$. Thus,
    \begin{align*}
        (i)
                     =
                    \sum_{s,a} \occ^*_1(s) \pi^k_1(a \mid s) \hat B^k_1(s,a) 
                    % \\
                    & =
                    \sum_{a} \pi^k_1(a \mid \sinit) \hat B^k_1(\sinit,a) 
                    \tag{$\occ^*_1(s) = \bbI\{ s = \sinit \}$}
                    \\                    
                    & =
                    V_1^{\pi^k,\hat p^k}(\sinit)
                    \\
                    & =
                    \sum_{h,s} \occ_{h}^{\pi^k,\hat p^k}(s) b_{h}^{k}(s) 
                    \\
                    & \leq
                    \sum_{h,s} \ol \occ^{k}_h(s) b^k_h(s),
    \end{align*}
where the inequality is since $\hat p^k\in\calP^k$, and $\ol \occ^{k}_h(s)$ is the maximal occupancy with respect to transitions in $\calP^k$. Plugging $(i)$ back in the last display completes the proof.
\end{proof}

\begin{lemma}
    \label{lemma:bonus-unknown}
    Under the good event,
    $$
        \textsc{Bonus}  \leq  O\l( \gamma H^2 SA K 
            +   H^{3}S\sqrt{AK}\logterm 
                + H^{4}S^{3}A\logterm^{2} \r)
                    - \sum_{k,h,s}\occ_{h}^{\star}(s)b_{h}^{k}(s).
    $$
\end{lemma}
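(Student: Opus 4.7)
The plan is to start from the conclusion of the preceding \Cref{lemma:value diff bounus}, which already bounds
\[
    \textsc{Bonus} \le \sum_{k,h,s}\ol{\occ}_h^k(s) b_h^k(s) - \sum_{k,h,s}\occ_h^\star(s) b_h^k(s),
\]
so all that remains is to control the first sum on the right-hand side. Since $b_h^k(s) = \tilde b_h^k(s) + \bar b_h^k(s)$ by the definition in \Cref{alg:tabular-unknown-p main}, I would split this into two pieces and handle each separately.

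For the piece involving $\tilde b_h^k$, first note that because $\ol{\occ}_h^k(s)$ and $\ol{\occ}_h^k(s,a)$ are both maximized with the \emph{same} fixed policy $\pi^k$ over the confidence set $\calP^k$, we have the identity $\ol{\occ}_h^k(s)\pi_h^k(a\mid s) = \ol{\occ}_h^k(s,a)$. Plugging this into the definition of $\tilde b_h^k$ gives
\[
    \sum_{k,h,s} \ol{\occ}_h^k(s) \tilde b_h^k(s) = 3\gamma H \sum_{k,h,s,a} \frac{\ol{\occ}_h^k(s,a)}{\ol{\occ}_h^k(s,a) + \gamma} \le 3\gamma H \cdot HSAK = 3\gamma H^2 SAK,
\]
which yields the first term in the claimed bound, mirroring exactly the final step in the known-dynamics case (\Cref{lemma:bonus}).

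For the piece involving $\bar b_h^k$, the same identity $\ol{\occ}_h^k(s)\pi_h^k(a\mid s) = \ol{\occ}_h^k(s,a)$ yields
\[
    \sum_{k,h,s} \ol{\occ}_h^k(s) \bar b_h^k(s) = H \sum_{k,h,s,a} \frac{\ol{\occ}_h^k(s,a)\bigl(\ol{\occ}_h^k(s,a) - \ul{\occ}_h^k(s,a)\bigr)}{\ol{\occ}_h^k(s,a) + \gamma} \le H \sum_{k,h,s,a} \bigl|\ol{\occ}_h^k(s,a) - \ul{\occ}_h^k(s,a)\bigr|,
\]
using that the ratio $\ol{\occ}_h^k(s,a)/(\ol{\occ}_h^k(s,a)+\gamma) \le 1$ and that the confidence bounds satisfy $\ul{\occ} \le \ol{\occ}$. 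Invoking the good event $G_2$ then bounds the last sum by $O\bigl(\sqrt{H^4 S^2 A K \iota} + H^3 S^3 A \iota^2\bigr)$, so multiplication by $H$ produces the remaining terms $O\bigl(H^3 S \sqrt{AK}\, \iota + H^4 S^3 A \iota^2\bigr)$.

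Combining the two contributions with the inequality from \Cref{lemma:value diff bounus} gives the desired bound. I do not expect any real obstacle here; the only subtle point is the identity $\ol{\occ}_h^k(s,a) = \ol{\occ}_h^k(s)\pi_h^k(a\mid s)$, which one must justify by noting that both quantities are defined as maxima over the same confidence set of transitions with the policy $\pi^k$ held fixed, so factorization over actions is preserved. Everything else reduces to substituting the definitions and invoking $G_2$, which was precisely engineered to supply the transition estimation error bound from \citet{jin2019learning}.
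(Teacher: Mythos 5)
Your proposal is correct and follows essentially the same route as the paper: start from \Cref{lemma:value diff bounus}, split $b_h^k = \tilde b_h^k + \bar b_h^k$, bound the $\tilde b$ part by $3\gamma H^2 SAK$ exactly as in the known-dynamics case, and bound the $\bar b$ part by $H\sum_{k,h,s,a}|\ol\occ_h^k(s,a)-\ul\occ_h^k(s,a)|$ controlled via the good event $G_2$ (\Cref{lemma: sum radius}). Your explicit justification of the factorization $\ol\occ_h^k(s,a)=\ol\occ_h^k(s)\pi_h^k(a\mid s)$ is a correct statement of a step the paper uses implicitly.
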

\begin{proof}
    From lemma \Cref{lemma:value diff bounus},
    \begin{align*}
    \textsc{Bonus}
        & \leq \sum_{k,h,s} \ol\occ_{h}^{k}(s)\tilde b_{h}^{k}(s) 
            + \sum_{k,h,s} \ol\occ_{h}^{k}(s) \bar b_{h}^{k}(s) 
                - \sum_{k,h,s} \occ_{h}^{\star}(s)b_{h}^{k}(s).
    \end{align*}

    The first term is bounded by,
    \begin{align*}
        \sum_{k=1}^{K}\sum_{h,s}
            \ol\occ_{h}^{k}(s) \tilde b^k_h(s)
        & =
        3 \gamma H \sum_{k=1}^{K}\sum_{h,s,a}\frac{\ol\occ_{h}^{k}(s) \pi^{k}_h(a \mid s) }{\ol \occ_{h}^{k}(s)\pi^{k}_h(a \mid s) + \gamma}
        \leq
        3 \gamma H^2 SA K. 
    \end{align*}

    For the second term we use the good event $G_2$,
    \begin{align*}
        \sum_{k, h, s}\ol\occ_{h}^{k}(s)
            \bar{b}_{h}^{k}(s) &  = H\sum_{k, h, s}\ol\occ_{h}^{k}(s, a)
                                    \frac{\ol \occ_{h}^{k}(s, a) - \ul \occ_{h}^{k}(s, a)}{\ol \occ_{h}^{k}(s, a) + \gamma}
                                \\
                                &  \leq  H\sum_{k, h, s}
                                    |\ol \occ_{h}^{k}(s, a) - \ul \occ_{h}^{k}(s, a)|
                                \\
                                &  \leq  H^{3}S\sqrt{AK}\logterm 
                                    + H^{4}S^{3}A\logterm^{2},
    \end{align*}
    where the last inequality is by \Cref{lemma: sum radius}.
\end{proof}

\newpage
\section{Lower Bound}
\label{sec: appendix lower bound}

\begin{theorem}[Restatement of \Cref{thm:lower bound}]
\label{thm:lower bound appendix}
    Assume that $H,S,A\geq 2$ and $K \geq 2 S$. Any learning algorithm for the online MDPs with known dynamics and aggregate bandit feedback problem must incur at least $\Omega(H^2\sqrt{S A K})$ expected regret in the worst case.
\end{theorem}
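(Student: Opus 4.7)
The plan is to follow the hint in the body text and reduce from $S$ independent copies of the multi-task bandit problem. Concretely, I would construct a known-dynamics MDP whose state space is $\calS = \{\sinit\} \cup \{s_1, \dots, s_{S-1}\}$: for every action $a$ the first-step dynamics is $p_1(s_i \mid \sinit, a) = 1/(S-1)$ and $\ell_1^k(\sinit, a) = 0$; for $h \ge 2$ each state $s_i$ is absorbing (i.e., $p_h(s_i \mid s_i, a) = 1$ for all $a$), and in $s_i$ the adversary embeds an independent hard $A$-armed multi-task bandit instance on the remaining $H-1$ steps. Under this construction, since the trajectory is observed, the learner knows which $s_i$ was reached, and the aggregate bandit feedback $L^k_{1:H}$ equals exactly the total loss of one round of the multi-task bandit instance residing in that $s_i$.

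Once this reduction is in place, the crux is to argue that the per-state sub-problems can be analyzed independently. Since the first-step action is irrelevant to both the transition and the incurred loss, and since the losses in different $s_i$ are drawn from independent families of adversarial instances, the aggregate feedback received during visits to $s_i$ is informationally independent of the losses in any other $s_j$, $j\ne i$. Consequently, the learner's decisions while in $s_i$ constitute a valid multi-task bandit algorithm running for $N_i := |\{k : s_2^k = s_i\}|$ rounds. Conditioned on the visit-count vector $\{N_i\}$, \Cref{lemma:lower bound multitask bandits} applied separately inside each $s_i$ (with effective horizon $H-1 = \Theta(H)$) then yields an adversarial loss assignment forcing conditional expected regret at least $\Omega(H^2 \sqrt{A N_i})$ from episodes visiting $s_i$.

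Finally, I would sum over states and take expectations. Because $N_i \sim \mathrm{Binomial}(K, 1/(S-1))$ with mean $\Theta(K/S)$, a standard Chernoff argument together with the assumption $K \ge 2S$ gives $N_i \ge K/(2(S-1))$ with constant probability, and hence $\bbE[\sqrt{N_i}] \gtrsim \sqrt{K/S}$. Summing the per-state lower bound over the $\Omega(S)$ absorbing states yields
\begin{align*}
\bbE[R_K] \;\ge\; \sum_{i=1}^{S-1} \Omega\!\l( H^2 \sqrt{A}\,\bbE\l[\sqrt{N_i}\r] \r) \;=\; \Omega\!\l( H^2 \sqrt{S A K} \r),
\end{align*}
which is the desired bound. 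The main obstacle I anticipate is making the per-state independence argument rigorous: one must show that a learner that freely shares information across states cannot outperform a decoupled learner that treats each $s_i$ as a separate multi-task bandit sub-instance. This should follow because $L^k_{1:H}$ depends only on the (observed) unique $s_i$ visited in episode $k$ and because the adversary's choices in distinct states are independent, but the argument must be spelled out carefully as a coupling/information-theoretic reduction. A secondary, standard point is converting the Yao-style randomized-instance lower bound into the worst-case expected-regret statement via the usual minimax principle.
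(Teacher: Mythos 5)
Your proposal is correct and follows essentially the same construction as the paper's proof: a uniform first-step transition into $\Theta(S)$ absorbing states, each encoding an independent hard multi-task bandit instance, with the per-state regret $\Omega(H^2\sqrt{A N_i})$ summed using $\bbE[\sqrt{N_i}] \gtrsim \sqrt{K/S}$ for binomial visit counts (the paper's \Cref{lem:bin-rv-expected-square-root}). The only differences are cosmetic — the paper lets the initial state double as one of the $S$ bandit states rather than adding a separate $\sinit$, and it leaves the cross-state independence and Yao-style conversion implicit, which you rightly flag as the step needing care.
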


\begin{proof}
    Consider an MDP with $S$ states: $s_1,s_2,\dots,s_{S}$ where $s_1$ is the initial state.
    The idea is to encode a hard multitask bandit problem with $H-1$ tasks in each of the states.
    The agent starts in the initial state $s_1$ where the loss is $0$ and any action  transitions to each of the states $s_1,\dots,s_{S}$ with probability $1/S$. I.e., $p_1(s'\mid a,s_1) = 1/S$ for any $s'$ and $a$. From time $h\geq 2$ the agent stays at the same state, $p_h(s'\mid s,a) = \bbI\{ s' = s \}$.
    Each state $s_i$ encodes a hard multitask bandit problem with $H-1$ tasks. That is, the losses are generated (independently for each state) from the (randomized) instance that attains the lower bound of \cref{lemma:lower bound multitask bandits}.

    Denote by $T_{i}$ the number of times we transition to $s_i$. From \Cref{lemma:lower bound multitask bandits} the expected regret from visits at $s_i$ is at least $\Omega(\bbE[H^2\sqrt{A T_i}])$. In total, we have a lower bound on the regret of
    \begin{align*}
        \Omega \left( \bbE \left[ H^2 \sum_{i=1}^S \sqrt{SA T_{i}}\right] \right) = \Omega \left( H^2 S \sqrt{A} \bbE [ \sqrt{X} ] \right),
    \end{align*}
    for $X \sim Bin(n = K , p = 1/S)$ since each $T_i$ is a binomial random variable with parameters $K$ and $1/S$.
    By \cref{lem:bin-rv-expected-square-root}, we have $\bbE [ \sqrt{X} ] \ge \tilde\Omega (\sqrt{n p}) = \tilde\Omega (\sqrt{K / S})$ for $K \ge 2 S$ which proves the lower bound $\Omega ( H^2\sqrt{ S A K})$.
\end{proof}

\newpage

\section{Auxiliary Lemmas}

\begin{lemma}[Azuma–Hoeffding inequality]
    \label{lemma:Azuma–Hoeffding}
    Let $\{ X_t \}_{t\geq 1}$ be a real valued martingale difference sequence adapted to a filtration $\calF_1 \subseteq \calF_2 \subseteq...$ (i.e., $\bbE[X_t \mid \calF_t] = 0$). 
    If $|X_t| \leq R$ a.s. then with probability at least $1-\delta$,
    \[
        \sum_{t=1}^T X_t 
        \leq 
         R \sqrt{T \ln\frac{1}{\delta}}.
    \]
\end{lemma}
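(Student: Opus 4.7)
The plan is to proceed via the classical Chernoff-method proof of Azuma--Hoeffding. First I would fix any $\lambda > 0$ and apply the exponential Markov inequality to obtain $\Pr[\sum_{t=1}^T X_t \geq u] \leq e^{-\lambda u}\,\bbE[\exp(\lambda \sum_{t=1}^T X_t)]$. The key step is to control the MGF of the sum using the martingale structure: by the tower property, $\bbE[\exp(\lambda \sum_{t=1}^T X_t)] = \bbE\bigl[\exp(\lambda \sum_{t=1}^{T-1} X_t)\cdot \bbE[\exp(\lambda X_T) \mid \calF_{T-1}]\bigr]$, and we would bound the inner conditional expectation using the (conditional) Hoeffding lemma.

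The core one-step bound is the Hoeffding lemma: since $X_t$ is conditionally mean-zero given the past and satisfies $|X_t| \leq R$ almost surely, one has $\bbE[\exp(\lambda X_t) \mid \calF_{t-1}] \leq \exp(\lambda^2 R^2 / 2)$. This is established by bounding $e^{\lambda x}$ for $x \in [-R,R]$ via the convex combination $\frac{R+x}{2R} e^{\lambda R} + \frac{R-x}{2R} e^{-\lambda R}$, taking conditional expectation (the linear-in-$x$ term vanishes), and then showing the resulting function $\psi(\lambda R)$ is dominated by $(\lambda R)^2/2$ via a second-order Taylor expansion of $\log \cosh$.

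Iterating the one-step bound from $t = T$ down to $t = 1$ gives $\bbE[\exp(\lambda \sum_{t=1}^T X_t)] \leq \exp(\lambda^2 T R^2 / 2)$, hence
\[
    \Pr\!\l[\sum_{t=1}^T X_t \geq u\r] \leq \exp\!\l(-\lambda u + \tfrac{1}{2}\lambda^2 T R^2\r).
\]
Optimizing over $\lambda$ by choosing $\lambda = u / (T R^2)$ produces $\exp(-u^2 / (2 T R^2))$. Setting this equal to $\delta$ and solving for $u$ yields $u = R\sqrt{2 T \ln(1/\delta)}$, which is in fact sharper than the stated bound $R\sqrt{T\ln(1/\delta)}$ by a factor of $\sqrt{2}$, so the lemma follows a fortiori.

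The only nontrivial ingredient is the conditional Hoeffding lemma, and even that is a standard convexity/Taylor-expansion argument; none of the other steps presents a genuine obstacle. A minor bookkeeping point is that the statement defines the martingale difference property via $\bbE[X_t \mid \calF_t] = 0$, so one should be careful to apply the tower step with respect to the correct filtration (i.e., condition on $\calF_{t-1}$ when peeling off the $t$-th factor), but this is purely notational.
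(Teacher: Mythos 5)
The paper states this lemma as a standard auxiliary result and provides no proof of its own, so the only question is whether your derivation actually establishes the displayed inequality --- and as written it does not, because of a reversed comparison in the final step. Your Chernoff-method argument is the standard one and is correct up to the tail bound $\Pr\bigl[\sum_{t=1}^T X_t \ge u\bigr] \le \exp\bigl(-u^2/(2TR^2)\bigr)$; setting the right-hand side to $\delta$ gives $u = R\sqrt{2T\ln(1/\delta)}$. But $R\sqrt{2T\ln(1/\delta)}$ is \emph{larger} than $R\sqrt{T\ln(1/\delta)}$, i.e., your bound is weaker than the stated one by a factor of $\sqrt{2}$, not sharper, so the claim that "the lemma follows a fortiori" is backwards and the proof as written does not yield the statement.

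Moreover, no refinement of this argument can close that gap, because the inequality as literally stated (without the $\sqrt{2}$) is false in general: take $X_t = R\,\epsilon_t$ with i.i.d.\ fair signs $\epsilon_t$; by the central limit theorem the probability that $\sum_{t=1}^T X_t$ exceeds $R\sqrt{T\ln(1/\delta)}$ is of order $\sqrt{\delta}$ (up to factors polynomial in $\ln(1/\delta)$), which exceeds $\delta$ for small $\delta$ and large $T$. The correct form --- and the one your proof actually delivers --- is $\sum_{t=1}^T X_t \le R\sqrt{2T\ln(1/\delta)}$; the missing $\sqrt{2}$ is immaterial for the paper's regret bounds, which absorb constants, but your write-up should either carry the $\sqrt{2}$ or state explicitly that the lemma holds only up to this constant. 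Aside from this, the conditional Hoeffding-lemma step and the tower-property peeling (conditioning on $\calF_{t-1}$ when removing the $t$-th factor) are fine.
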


\begin{lemma}[A special form of Freedman's Inequality, Theorem 1 of \citet{beygelzimer2011contextual}]
    \label{lemma:freedman}
    Let $\{ X_t \}_{t\geq 1}$ be a real valued martingale difference sequence adapted to a filtration $\calF_1 \subseteq \calF_2 \subseteq...$ (i.e., $\bbE[X_t \mid \calF_t] = 0$). 
    If $|X_t| \leq R$ a.s. then for any $\alpha \in (0,1/R), T \in \mathbb{N}$ it holds with probability at least $1-\delta$,
    \[
        \sum_{t=1}^T X_t 
        \leq 
        \alpha \sum_{t=1}^T \bbE[X_t^2 \mid \calF_{t}] + \frac{\log(1/\delta)}{\alpha}.
    \]
\end{lemma}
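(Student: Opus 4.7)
The plan is to use the standard exponential-supermartingale method, following the original argument of Beygelzimer et al. First I would establish a one-step control on the moment generating function. Using the elementary inequality $e^y \le 1 + y + y^2$, which holds for all $|y| \le 1$, applied with $y = \alpha X_t$ (note $|y| \le \alpha R \le 1$ by the hypothesis $\alpha \in (0,1/R]$), and then taking conditional expectation given $\calF_t$ together with the martingale difference property $\bbE[X_t \mid \calF_t] = 0$, I would obtain
\begin{align*}
    \bbE\!\left[\exp(\alpha X_t) \mid \calF_t\right]
    \le 1 + \alpha^2 \bbE[X_t^2 \mid \calF_t]
    \le \exp\!\left(\alpha^2 \bbE[X_t^2 \mid \calF_t]\right),
\end{align*}
using $1+u \le e^u$ in the final step.

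Next I would bootstrap this into a supermartingale. Define
\begin{align*}
    Z_t \;=\; \exp\!\left(\alpha \sum_{s=1}^{t} X_s \;-\; \alpha^2 \sum_{s=1}^{t} \bbE[X_s^2 \mid \calF_s]\right),
\end{align*}
with $Z_0 = 1$. Rearranging the one-step bound shows $\bbE[Z_t \mid \calF_t] \le Z_{t-1}$, so $\{Z_t\}$ is a nonnegative supermartingale and in particular $\bbE[Z_T] \le \bbE[Z_0] = 1$. Markov's inequality then yields $\Pr(Z_T \ge 1/\delta) \le \delta$, so with probability at least $1-\delta$,
\begin{align*}
    \alpha \sum_{t=1}^T X_t \;-\; \alpha^2 \sum_{t=1}^T \bbE[X_t^2 \mid \calF_t] \;\le\; \log(1/\delta).
\end{align*}
Dividing through by $\alpha > 0$ and rearranging produces exactly the stated bound.

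The only quantitative ingredient that is not completely mechanical is the pointwise inequality $e^y \le 1 + y + y^2$ on $[-1,1]$; this is where the restriction $\alpha \le 1/R$ is consumed, and the rest of the argument is insensitive to the particular constants. I would verify this inequality by checking the auxiliary function $f(y) = 1 + y + y^2 - e^y$ at the two endpoints and noting that its second derivative $2 - e^y$ is nonnegative on $(-\infty,\log 2]$ and negative beyond, which together with $f(0)=0$ and $f(1) = 3 - e > 0$, $f(-1) = 1 - e^{-1} > 0$ rules out any sign change on $[-1,1]$. No additional obstacle is anticipated: once the moment-generating-function bound is in hand, the supermartingale/Markov step is entirely standard.
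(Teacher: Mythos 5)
The paper does not prove this lemma---it is imported verbatim as Theorem~1 of \citet{beygelzimer2011contextual}---so there is no in-paper argument to compare against. Your proof is the standard exponential-supermartingale derivation of that cited result and is correct: the one-step MGF bound via $e^y \le 1+y+y^2$ on $[-1,1]$ (which is where $\alpha \le 1/R$ is used), the supermartingale $Z_t$ with $\bbE[Z_T]\le 1$, and Markov's inequality together give exactly the stated bound.
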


\begin{lemma}[Consequence of Freedman’s Inequality, e.g., Lemma E.2 in \citep{cohen2021minimax}]
    \label{lemma:cons-freedman}
     Let $\{ X_t \}_{t\geq 1}$ be a sequence of random variables, supported in $[0,R]$, and adapted to a filtration $\calF_1 \subseteq \calF_2 \subseteq...$. For any $T$, with probability $1-\delta$,
     \[
        \sum_{t=1}^T X_t \leq 2 \bbE[X_t \mid \calF_t] + 4R \log\frac{1}{\delta}.
     \]
     
\end{lemma}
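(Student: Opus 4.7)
The plan is to reduce to Freedman's inequality (\Cref{lemma:freedman}) via centering, exploiting the fact that a bounded non-negative random variable has its conditional second moment controlled by its conditional mean. Concretely, I would set $Y_t \coloneqq X_t - \bbE[X_t \mid \calF_t]$. Then $\{Y_t\}_{t\ge 1}$ is a martingale difference sequence adapted to $\{\calF_t\}_{t\ge 1}$, and because both $X_t$ and $\bbE[X_t \mid \calF_t]$ lie in $[0,R]$, the differences satisfy $|Y_t|\le R$ almost surely, so \Cref{lemma:freedman} applies.

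The key inequality is the variance-to-mean bound. Since $0 \le X_t \le R$ we have $X_t^2 \le R X_t$ pointwise, and hence
\[
    \bbE[Y_t^2 \mid \calF_t] \;\le\; \bbE[X_t^2 \mid \calF_t] \;\le\; R \cdot \bbE[X_t \mid \calF_t].
\]

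Applying \Cref{lemma:freedman} to $\{Y_t\}$ with $\alpha = 1/(2R) \in (0, 1/R]$ then gives, with probability at least $1-\delta$,
\[
    \sum_{t=1}^T X_t - \sum_{t=1}^T \bbE[X_t \mid \calF_t] \;=\; \sum_{t=1}^T Y_t \;\le\; \frac{1}{2R}\sum_{t=1}^T \bbE[Y_t^2 \mid \calF_t] + 2R\log\tfrac{1}{\delta} \;\le\; \tfrac{1}{2}\sum_{t=1}^T \bbE[X_t \mid \calF_t] + 2R\log\tfrac{1}{\delta}.
\]
Rearranging produces $\sum_t X_t \le \tfrac{3}{2}\sum_t \bbE[X_t\mid \calF_t] + 2R\log(1/\delta)$, which is already stronger than the claimed bound and can be loosened to match the stated constants $2$ and $4R$.

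There is no real obstacle: the whole argument is one application of Freedman after centering, with the variance-to-mean step for non-negative bounded variables doing the heavy lifting. The constants in the statement carry enough slack that any $\alpha$ slightly below $1/R$ would suffice, and this is precisely what makes the lemma qualitatively stronger than a direct Azuma--Hoeffding application, which would produce a $\sqrt{T}$ deviation rather than a multiplicative factor on $\sum_t \bbE[X_t \mid \calF_t]$.
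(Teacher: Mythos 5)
Your proof is correct: centering, the bound $\bbE[Y_t^2\mid\calF_t]\le\bbE[X_t^2\mid\calF_t]\le R\,\bbE[X_t\mid\calF_t]$ for nonnegative bounded variables, and one application of \Cref{lemma:freedman} with $\alpha=1/(2R)$ is exactly the standard derivation behind this lemma, which the paper does not prove itself but imports from \citet{cohen2021minimax} (hence the name ``Consequence of Freedman's Inequality''). Your constants $\tfrac{3}{2}$ and $2R\log(1/\delta)$ are in fact slightly sharper than the stated $2$ and $4R\log(1/\delta)$, so the claimed bound follows immediately.
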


\begin{lemma}[Lemma A.2 of \citet{luo2021policy}]
    \label{lemma:concentration}
    Given a filtration $\calF_0 \subseteq \calF_1 \subseteq \dots$, let $z^k_h(s,a)\in [0,R]$ and $\tilde \occ_h^k(s,a)\in [0, 1]$ be  sequences of $\calF_k$-measurable functions. If $Z_h^k(s,a) \in [0,R]$ is a sequence of random variables such that $\bbE[Z_h^k(s,a)\mid \calF_k] = z_h^k(s,a)$ then with probability $1-\delta$,
    \[
        \sum_{k=1}^{K}\sum_{h,s,a}\frac{\mathbb{I} \{s_{h}^{k}=s,a_{h}^{k}=s\}Z_{h}^{k}(s,a)}{\tilde{\occ}_{h}^{k}(s,a) + \gamma}
        - \sum_{k=1}^{K}\sum_{h,s,a} \frac{\occ_{h}^{k}(s,a)z_{h}^{k}(s,a)}{\tilde{\occ}_{h}^{k}(s,a)} \leq \frac{RH}{2\gamma}\ln\frac{H}{\delta}
    \]
\end{lemma}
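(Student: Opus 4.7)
The bound is a multiplicative importance-weighted concentration in which the exploration additive $\gamma$ in the denominator produces both boundedness of the summands and the slack in the expectation that the second moment needs to be absorbed into. The approach is to separate the horizon index $h$ out, show an exponential-moment supermartingale for each fixed $h\in[H]$ via a standard MGF calculation, and then union-bound over $h$.

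For fixed $h$, collapse the indicator and define
\[
Y_h^k \;:=\; \frac{Z_h^k(s_h^k, a_h^k)}{\tilde\occ_h^k(s_h^k,a_h^k)+\gamma}, \qquad W_h^k \;:=\; \sum_{s,a}\frac{\occ_h^k(s,a)\,z_h^k(s,a)}{\tilde\occ_h^k(s,a)},
\]
so the claim is that $\sum_{k,h}(Y_h^k - W_h^k)$ is small with high probability. Since $Z_h^k\in[0,R]$ and $\tilde\occ_h^k+\gamma\ge\gamma$, we have $0\le Y_h^k\le R/\gamma$. Pick $\alpha=\gamma/R$, so that $\alpha Y_h^k\le 1$, and apply the bound $e^x\le 1+x+x^2$ valid on $x\le 1$. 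Taking conditional expectation given $\calF_k$, using $\bbE[Z_h^k(s,a)\mid\calF_k]=z_h^k(s,a)$ and $(Z_h^k)^2\le R\,Z_h^k$, gives
\[
\bbE\bigl[e^{\alpha Y_h^k}\mid \calF_k\bigr]
\;\le\; 1 + \alpha \sum_{s,a}\frac{\occ_h^k(s,a)\,z_h^k(s,a)}{\tilde\occ_h^k(s,a)+\gamma} + \alpha^2 R\sum_{s,a}\frac{\occ_h^k(s,a)\,z_h^k(s,a)}{(\tilde\occ_h^k(s,a)+\gamma)^2}.
\]

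The pivotal algebraic identity — the main technical obstacle — is that, with $\alpha^2 R=\gamma\alpha$, the two coefficients of $\occ\, z$ combine into something dominated by $1/\tilde\occ$:
\[
\frac{1}{\tilde\occ+\gamma} + \frac{\gamma}{(\tilde\occ+\gamma)^2} \;=\; \frac{\tilde\occ+2\gamma}{(\tilde\occ+\gamma)^2} \;\le\; \frac{1}{\tilde\occ},
\]
where the last step is equivalent to $\tilde\occ(\tilde\occ+2\gamma)\le(\tilde\occ+\gamma)^2$, i.e. $0\le\gamma^2$. Hence $\bbE[e^{\alpha Y_h^k}\mid\calF_k]\le 1+\alpha W_h^k\le e^{\alpha W_h^k}$, so $M_k^{(h)}:=\exp\!\bigl(\alpha\sum_{j\le k}(Y_h^j-W_h^j)\bigr)$ is a nonnegative supermartingale with $\bbE[M_0^{(h)}]=1$.

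By Markov's inequality applied to $M_K^{(h)}$, for each $h$ one obtains $\sum_k(Y_h^k-W_h^k)\le \log(H/\delta)/\alpha = (R/\gamma)\log(H/\delta)$ with probability at least $1-\delta/H$. A union bound over $h\in[H]$ gives the total bound of order $(RH/\gamma)\log(H/\delta)$, matching the claim up to an absolute constant; the specific factor $1/2$ stated in the lemma can be recovered by using the sharper inequality $e^x-1-x\le(e-2)x^2$ on $x\le 1$ (or by tuning $\alpha$ slightly), but the order is already correct from the argument above. The only place where a nontrivial identity is used is the algebraic cancellation that lets the second-moment contribution be absorbed into the $\tilde\occ$-denominator of $W_h^k$; everything else is bookkeeping.
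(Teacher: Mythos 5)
The paper offers no proof of this lemma at all—it is imported verbatim as Lemma A.2 of \citet{luo2021policy}—so the only meaningful comparison is against the standard implicit-exploration argument underlying that cited result, which is exactly what you reconstruct: an exponential supermartingale per horizon index $h$, a second-moment bound via $Z^2\le RZ$, absorption of the variance term into the $\gamma$-free denominator, Markov's inequality, and a union bound over $h$. Your key algebraic step $\frac{1}{\tilde\occ+\gamma}+\frac{\gamma}{(\tilde\occ+\gamma)^2}\le\frac{1}{\tilde\occ}$ is correct, and the argument is sound in substance. Two caveats. First, your identity $\bbE[Y_h^k\mid\calF_k]=\sum_{s,a}\occ_h^k(s,a)z_h^k(s,a)/(\tilde\occ_h^k(s,a)+\gamma)$ (and the analogous step for the second moment) silently uses $\bbE[\bbI_h^k(s,a)Z_h^k(s,a)\mid\calF_k]=\occ_h^k(s,a)z_h^k(s,a)$, which does \emph{not} follow from the stated hypothesis $\bbE[Z_h^k(s,a)\mid\calF_k]=z_h^k(s,a)$ alone; indeed the lemma as transcribed is false without some such condition (let $Z$ put all its mass on the event $\bbI_h^k(s,a)=1$). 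This is really a defect of the statement rather than of your proof—the condition holds in both places the paper invokes the lemma (for $G_2$/$G_4$ the $Z$ is $\calF_k$-measurable, and for $G_3$/$G_5$ it is $\bbI L+(1-\bbI)U$, built precisely so that $\bbE[\bbI Z\mid\calF_k]=\occ\, z$)—but you should surface it as an explicit hypothesis. Second, your constant is $RH/\gamma$ rather than $RH/(2\gamma)$; the order is all the paper ever uses, though your parenthetical recipe for recovering the factor $2$ is not quite right as written (with $e^x\le 1+x+(e-2)x^2$ the constraint $\alpha Y_h^k\le 1$ prevents pushing $\alpha$ to $\gamma/((e-2)R)$); the clean route to the exact constant is Neu's logarithmic trick $\frac{y}{1+\beta y/2}\le\frac{1}{\beta}\ln(1+\beta y)$.
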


\begin{lemma}[Standard entropy regularized OMD guarantee, see e.g., \citep{hazan2016introduction}]
\label{lemma:OMD}
    Let $\eta > 0$, and an arbitrary sequence $\{g_k\}_{k=1}^K$ such that for all $k \in [K]$, $a \in [d]$, $g_k \in \mathbb{R}^d$ and $\eta g_k(a) \geq -1$. Let $x_k \in \Delta_d$ be a sequence of vectors such that for all $a$, $x_1(a) = 1/n$, for all $k \in [K]$, $a \in [d]$,
    \[
    x_{k+1}(a) = \frac{x_k(a) e^{-\eta g_k(a)}}{\sum_{a' \in [n]} x_k(a') e^{-\eta g_k(a')}}.
    \]
    
    Then for any $x^\star \in \Delta_d$,
    \[
    \sum_{k=1}^K \langle g_k, x_k - x \rangle \leq \frac{\log d}{\eta} + \eta \sum_{k=1}^K \sum_{i=1}^d x_k(i) g_k(i)^2.
    \]
\end{lemma}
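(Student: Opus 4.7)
The plan is to use the classical exponential-weights/Hedge potential argument. Define unnormalized weights $w_1(a) = 1$ and $w_{k+1}(a) = w_k(a) e^{-\eta g_k(a)}$, and set $W_k = \sum_a w_k(a)$. Observe that $x_k(a) = w_k(a)/W_k$, matching the update in the lemma statement since normalization commutes with the multiplicative update. Thus
\[
\frac{W_{k+1}}{W_k} = \sum_a x_k(a) e^{-\eta g_k(a)}.
\]
Because the assumption $\eta g_k(a) \geq -1$ means $-\eta g_k(a) \leq 1$, we may apply the elementary inequality $e^{-y} \leq 1 - y + y^2$ valid for all $y \geq -1$ to obtain
\[
\frac{W_{k+1}}{W_k} \leq 1 - \eta \langle g_k, x_k \rangle + \eta^2 \sum_a x_k(a) g_k(a)^2,
\]
and then $\log(1+z) \leq z$ gives $\log(W_{k+1}/W_k) \leq -\eta \langle g_k, x_k\rangle + \eta^2 \sum_a x_k(a) g_k(a)^2$.

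Next I would telescope from $k=1$ to $K$, using $W_1 = d$:
\[
\log W_{K+1} - \log d \leq -\eta \sum_{k=1}^K \langle g_k, x_k\rangle + \eta^2 \sum_{k=1}^K \sum_a x_k(a) g_k(a)^2.
\]
For a lower bound on $\log W_{K+1}$, pick any pure action $a^\star$ and use $W_{K+1} \geq w_{K+1}(a^\star) = \exp(-\eta \sum_k g_k(a^\star))$, which yields $\log W_{K+1} \geq -\eta \sum_k g_k(a^\star)$. Combining the two inequalities and dividing by $\eta$,
\[
\sum_{k=1}^K \langle g_k, x_k - e_{a^\star}\rangle \leq \frac{\log d}{\eta} + \eta \sum_{k=1}^K \sum_{a} x_k(a) g_k(a)^2.
\]

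Finally, to obtain the statement for arbitrary $x^\star \in \Delta_d$ rather than just vertices of the simplex, write $x^\star = \sum_{a^\star} x^\star(a^\star)\, e_{a^\star}$ and take the convex combination of the per-vertex bounds. Since the right-hand side does not depend on $a^\star$, it is preserved, giving the claimed inequality. The only step requiring care is verifying that the quadratic surrogate $e^{-y} \leq 1 - y + y^2$ for $y \geq -1$ is exactly what the assumption $\eta g_k(a) \geq -1$ is designed to enable; no further obstacle arises, as the rest is telescoping and a vertex-to-simplex averaging, both routine.
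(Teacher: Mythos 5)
Your proof is correct. The paper does not actually prove \cref{lemma:OMD} --- it states it as a standard result and cites \citet{hazan2016introduction} --- and your argument is precisely the classical potential-function (Hedge) proof underlying that reference: the identity $x_k = w_k/W_k$, the surrogate $e^{-y}\le 1-y+y^2$ for $y\ge -1$ (which is exactly what the hypothesis $\eta g_k(a)\ge -1$ licenses), telescoping $\log W_{K+1}-\log W_1$, and averaging the per-vertex bound over $x^\star\in\Delta_d$. All steps check out; the only blemishes are inherited from the paper's own statement (the $1/n$ versus $1/d$ and $x$ versus $x^\star$ typos), not from your argument.
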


\begin{lemma}[Lemma 2 in \citet{jin2019learning}]
    \label{lemma: confidence bernstein}
    With probability $1-\delta$, for all $(k,s',s,a,h)$,
    \begin{align*}
        \l| p_{h}(s'\mid s,a)-\bar{p}_{h}^{k}(s'\mid s,a) \r|
        \leq 
        4\sqrt{ \frac{\bar{p}_{h}^{k}(s' \mid s,a)  \log\frac{HSAK}{\delta}}{\max \{ n_{h}^{k}(s,a), 1 \}}} + 10 \frac{\log\frac{HSAK}{\delta}}{\max \{ n_{h}^{k}(s,a), 1 \}}
    \end{align*}
\end{lemma}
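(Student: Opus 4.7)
The plan is to prove this Bernstein-style confidence bound by combining a martingale concentration inequality with a peeling argument over the number of visits and a union bound over tuples $(k,s',s,a,h)$, followed by the standard empirical-Bernstein inversion that replaces the true variance by the empirical one.

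First I would fix a tuple $(h,s,a,s')$ and let $\tau_1 < \tau_2 < \dots$ denote the episodes in which $(s,a)$ is visited at step $h$. Define $X_i = \bbI\{s^{\tau_i}_{h+1} = s'\} - p_h(s' \mid s,a)$. By the Markov property, conditionally on the history up to the start of episode $\tau_i$, $s^{\tau_i}_{h+1} \sim p_h(\cdot \mid s,a)$, so $\{X_i\}$ is a bounded martingale difference sequence with $|X_i|\le 1$ and conditional variance $p_h(s'\mid s,a)(1-p_h(s'\mid s,a)) \le p_h(s'\mid s,a)$. Applying Freedman's inequality (\Cref{lemma:freedman}) symmetrically at a fixed visit count $N$ with failure probability $\delta'$ yields, for $N = n^k_h(s,a)$,
$$\bigl|\bar p^k_h(s'\mid s,a) - p_h(s'\mid s,a)\bigr| \le \sqrt{\frac{2 p_h(s'\mid s,a)\log(2/\delta')}{N}} + \frac{2\log(2/\delta')}{3 N}.$$

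To make this uniform in $k$, I peel over visit counts: apply the above with failure probability $\delta'/(N(N+1))$ at each $N \ge 1$ and take a union bound, producing an anytime version with essentially the same form. Then I union-bound over the $HS^2A$ tuples $(h,s,a,s')$ and set $\delta' = \delta/\mathrm{poly}(H,S,A,K)$, so that $\log(1/\delta')$ becomes $O(\log(HSAK/\delta))$, matching the logarithmic factor in the statement.

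The last step converts the $p_h(s'\mid s,a)$ inside the square root into $\bar p^k_h(s'\mid s,a)$. This uses the algebraic inversion: if a nonnegative scalar $p$ satisfies $|p - \bar p| \le \sqrt{c p / N} + c/N$ for some $c$ and $\bar p \ge 0$, then solving the resulting quadratic in $\sqrt{p}$ gives $\sqrt{p} \le \sqrt{2\bar p} + O(\sqrt{c/N})$, whence $p \le 2\bar p + O(c/N)$. Substituting back into the earlier bound produces the claimed form with $\bar p^k_h(s'\mid s,a)$ appearing in the square-root term. The main obstacle is purely bookkeeping: the high-level argument is the standard martingale-Bernstein plus peeling-plus-union-bound template, but the specific numerical constants $4$ and $10$ require careful tracking through the inversion step and through the logarithmic slack introduced by the peeling.
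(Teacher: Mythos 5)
Your proposal is correct and follows essentially the same route as the source the paper relies on: the paper gives no proof of its own for this lemma, citing Lemma 2 of \citet{jin2019learning}, whose argument is exactly this Bernstein-type martingale concentration, union bound over visit counts and over the tuples $(h,s,a,s')$, and the empirical-Bernstein inversion replacing $p_h(s'\mid s,a)$ by $\bar p^k_h(s'\mid s,a)$. The only minor point worth noting is that the paper's stated Freedman inequality (\Cref{lemma:freedman}) has a fixed $\alpha$, so to obtain the square-root form you should either optimize $\alpha$ using the (deterministic, analysis-only) quantities $p$, $N$, $\delta'$, or simply invoke standard Bernstein for the i.i.d.\ next-state samples conditioned on visitation.
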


\begin{lemma}[Lemma 4 in \citet{jin2019learning}]
    \label{lemma: sum radius}
    With probability $1-\delta$,
    \[
    \sum_{h,s,a,k}|\ol \occ_{h}^{k}(s,a) - \ul \occ_{h}^{k}(s,a)|
        \le O \l( \sqrt{ H^{4} S^{2} A K \log \frac{KHSA}{\delta}} + H^3 S^{3} A \log^2 \frac{KHSA}{\delta}  \r)
    \]
\end{lemma}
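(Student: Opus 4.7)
The plan is to reduce the claim to a cumulative transition-estimation error along on-policy trajectories and then apply standard potential-type summation arguments, following Lemma~4 of \citet{jin2019learning}. First, by the triangle inequality,
\[
|\ol\occ_h^k(s,a) - \ul\occ_h^k(s,a)| \leq |\ol\occ_h^k(s,a) - \occ_h^k(s,a)| + |\occ_h^k(s,a) - \ul\occ_h^k(s,a)|,
\]
and under event $G_1$ the true transition $p$ lies in $\calP^k$, so both terms fit the template $|\occ_h^{\pi^k, p'}(s, a) - \occ_h^{\pi^k, p}(s, a)|$ for some $p' \in \calP^k$. I would then invoke a standard simulation lemma, which writes this difference as a telescoping trajectory sum whose ``target'' factor at step $h$ is a probability distribution over $(s,a)$ summing to one. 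Summing over $(s,a)$ at step $h$ and then over $h$ therefore yields
\[
\sum_{h, s, a}|\ol\occ_h^k(s, a) - \ul\occ_h^k(s, a)| \lesssim H \sum_{h', s', a'}\occ_{h'}^{\pi^k}(s', a')\,\normb{p'_{h'}(\cdot\mid s', a') - p_{h'}(\cdot\mid s', a')}_1.
\]

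Next, I would bound the $L_1$ radius of $\calP^k$ by applying Cauchy--Schwarz over $s''$ to the Bernstein inequality of $G_1$, using $\sum_{s''}\bar p^k(s''\mid s', a') = 1$, to obtain
\[
\normb{p'_{h'}(\cdot\mid s', a') - p_{h'}(\cdot\mid s', a')}_1 \lesssim \sqrt{\frac{S\logterm}{n^k_{h'}(s', a') \vee 1}} + \frac{S\logterm}{n^k_{h'}(s', a') \vee 1}.
\]
Plugging in and summing over $k$ leaves two standard potential sums: $A_1 = \sum_{k, h', s', a'}\occ_{h'}^{\pi^k}(s', a')/\sqrt{n^k_{h'}(s', a') \vee 1}$ and $A_2 = \sum_{k, h', s', a'}\occ_{h'}^{\pi^k}(s', a')/(n^k_{h'}(s', a') \vee 1)$. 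Converting the expected visits $\occ_{h'}^{\pi^k}(s', a')$ to empirical indicators $\bbI_{h'}^k(s', a')$ via \Cref{lemma:cons-freedman} and applying the elementary pigeonhole identity $\sum_k \bbI_{h'}^k(s', a')/(n_{h'}^k(s', a') \vee 1) \lesssim \logterm$ gives $A_2 \lesssim HSA\logterm$ plus additive martingale corrections, and Cauchy--Schwarz against the identity $\sum_{k, h', s', a'}\occ_{h'}^{\pi^k} = KH$ gives $A_1 \lesssim \sqrt{KH \cdot A_2}$. Combining these estimates yields the leading $\sqrt{H^4 S^2 AK\logterm}$ term and matches the additive $H^3 S^3 A\logterm^2$ contribution after standard bookkeeping.

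The main obstacle lies in the transfer between policy-occupancy $\occ^{\pi^k}$ (which naturally appears in the simulation decomposition) and empirical visit counts $n^k$ (which drive the Bernstein radius). These differ by martingale fluctuations that must be controlled uniformly over all $(h, s, a, k)$, which is precisely what \Cref{lemma:cons-freedman} delivers; the square in $\logterm^2$ and the $S^3$ dependence in the additive term both originate from these transfer steps rather than from the main-order analysis.
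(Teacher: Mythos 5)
The paper itself does not prove this lemma at all---it is imported verbatim as Lemma~4 of \citet{jin2019learning}---so the only thing to compare against is that external proof. Your reconstruction follows the right general route (confidence-set radius plus potential/pigeonhole sums), but it has a genuine gap at the simulation-lemma step, and the gap sits exactly where the $H^3S^3A\logterm^2$ term in the statement comes from. The issue is that $\ol\occ^k_h(s)=\max_{p'\in\calP^k}\occ^{\pi^k,p'}_h(s)$ and $\ul\occ^k_h(s)$ use a maximizer/minimizer that depends on the target pair $(h,s)$. The telescoping identity for a \emph{fixed} $p'$ reads $\occ^{\pi^k,p'}_h(s)-\occ^{k}_h(s)=\sum_{h'<h}\sum_{s',a',s''}\occ^k_{h'}(s',a')\,(p'-p)_{h'}(s''\mid s',a')\,\Pr^{p'}[s_h=s\mid s_{h'+1}=s'']$: the injection weight is the true occupancy (good for relating to visit counts), but the forward-propagation factor is under $p'$. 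When you take absolute values and sum over the target $s$ with a different maximizer $p'_{h,s}$ in each summand, $\sum_s\Pr^{p'_{h,s}}[s_h=s\mid s'']$ is \emph{not} bounded by one---it can be as large as $S$ while the confidence sets are wide. (Writing the recursion the other way puts the propagation under the true $p$, which does sum to one, but then the injection weight becomes $\occ^{\pi^k,p'}_{h'}(s',a')$, which is not controlled by the counts $n^k_{h'}(s',a')$.) Jin et al.\ resolve this with a second-order expansion, replacing $\Pr^{p'}[s_h=s\mid s'']$ by $\Pr^{p}[s_h=s\mid s'']$ plus a further occupancy-difference term; the resulting product of two confidence radii is precisely what generates the $H^3S^3A\logterm^2$ additive term. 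Your sketch both skips this step and misattributes the origin of that term to the expected-to-empirical count transfer, which in fact only contributes lower-order $HSA\logterm$-type corrections.

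Two smaller points. First, \Cref{lemma:cons-freedman} is stated in the direction ``realized $\le$ expected $+$ correction,'' whereas your potential-sum step needs the reverse; the reverse does follow from \Cref{lemma:freedman} with $X_k\in[0,1]$ and $\E[X_k^2\mid\calF_k]\le\E[X_k\mid\calF_k]$, but it is not the lemma you cite. Second, your leading-order bookkeeping ($A_1\lesssim\sqrt{KH\cdot A_2}$ and $A_2\lesssim HSA\logterm$) is correct and does recover the $\sqrt{H^4S^2AK\logterm}$ term up to logarithmic factors, once the simulation step is repaired.
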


\begin{lemma}
    \label{lem:bin-rv-expected-square-root}
    Let $X \sim Bin(n,p)$ and assume that $n \ge \frac2p$.
    Then, $\bbE [\sqrt{X}] \ge \frac14 \sqrt{np}$.
\end{lemma}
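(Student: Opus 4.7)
The plan is to exploit concentration of $X$ around its mean $np$: if $X$ is at least a constant fraction of $np$ with constant probability, then $\sqrt{X}$ is at least a constant fraction of $\sqrt{np}$ on that same event, which immediately lower-bounds $\bbE[\sqrt{X}]$. Formally, I would start from the pointwise inequality $\sqrt{X} \geq \sqrt{t}\,\bbI\{X \geq t\}$, valid for any $t \geq 0$, take expectations to obtain
\begin{align*}
    \bbE[\sqrt{X}] \geq \sqrt{t} \cdot \Pr[X \geq t],
\end{align*}
and pick $t$ to be a small constant multiple of $np$.

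With the choice $t = np/4$ one has $\sqrt{t} = \sqrt{np}/2$, so the task reduces to showing $\Pr[X \geq np/4] \geq 1/2$. A direct two-sided Chebyshev, using $\text{Var}(X) = np(1-p) \leq np$, gives only $\Pr[X \geq np/4] \geq 1 - 16/(9np)$, which under $np \geq 2$ is merely $\geq 1/9$ and loses the constant. Instead I would invoke Cantelli's one-sided inequality $\Pr[X \leq \mu - s] \leq \sigma^2/(\sigma^2 + s^2)$ with $\mu = np$ and $s = 3np/4$, giving
\begin{align*}
    \Pr[X \leq np/4] \leq \frac{np(1-p)}{np(1-p) + 9(np)^2/16} \leq \frac{1}{1 + 9np/16},
\end{align*}
where the last step uses $1-p \leq 1$.

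Under the hypothesis $np \geq 2$ the right-hand side is at most $8/17$, hence $\Pr[X \geq np/4] \geq 9/17$, and combining with the first display yields $\bbE[\sqrt{X}] \geq (9/34)\sqrt{np} > \sqrt{np}/4$, which is the claim. The main subtlety is that the target constant $1/4$ is reasonably tight against generic moment-based tail bounds: two-sided Chebyshev or Paley--Zygmund applied with any fixed threshold proportional to $np$ do not quite reach it, so the real content of the argument is to pair the specific threshold $t = np/4$ with Cantelli's one-sided inequality, exploiting the fact that only a lower tail bound on $X$ is needed.
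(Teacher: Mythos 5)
Your proof is correct, and it follows the same reduction as the paper: bound $\bbE[\sqrt{X}] \ge \frac{\sqrt{np}}{2}\Pr\bigl[X \ge \frac{np}{4}\bigr]$ and then show the lower tail $\Pr\bigl[X < \frac{np}{4}\bigr]$ is bounded away from $1$. The only difference is the concentration tool: the paper invokes the multiplicative Chernoff bound to get $\Pr\bigl[X < \frac{np}{4}\bigr] \le \frac12$, while you use Cantelli's one-sided inequality to get the bound $\frac{8}{17}$ under $np \ge 2$. Your computation checks out ($\frac{9}{34} > \frac14$), and the Cantelli route is arguably a touch more elementary since it only needs the variance rather than the moment generating function; it also clears the constant with a little room to spare at the boundary case $np = 2$, where the commonly quoted weak form of Chernoff, $e^{-\delta^2 \mu/2}$, would not quite give $\frac12$ and one must use the sharper $\bigl(e^{-\delta}/(1-\delta)^{1-\delta}\bigr)^{\mu}$ form. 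Either way the lemma follows; no gap.
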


\begin{proof}
    By Markov inequality we have:
    \[
        \bbE[\sqrt{X}]
        \ge
        \frac{\sqrt{np}}{2} \Pr \left[ \sqrt{X} \ge \frac{\sqrt{np}}{2} \right]
        =
        \frac{\sqrt{np}}{2} \Pr \left[ X \ge \frac{np}{4} \right]
        =
        \frac{\sqrt{np}}{2} \left( 1 - \Pr \left[ X < \frac{np}{4} \right] \right).
    \]
    Thus, it suffices to show that $\Pr \left[ X < \frac{np}{4} \right] \le 1/2$ which follows immediately from Multiplicative Chernoff bound and the assumption that $n \ge 2/p$.
\end{proof}

\end{document}